\newif\iflong
\newif\ifshort
\tikzstyle{vertex}=[circle, draw, inner sep=0pt, minimum size=4pt,outer sep = 1pt]
\theoremstyle{plain}
\renewcommand{\cc}[1]{{\mbox{\textnormal{\textsf{#1}}}}\xspace}  
\newcommand{\bigoh}{\mathcal{O}}    
\newcommand{\III}{\mathcal{I}}    
\newcommand{\Nat}{\mathbb{N}}
\newcommand{\SB}{\{\,}%
\newcommand{\SM}{\;{|}\;}%
\renewcommand{\SE}{\,\}}%
\newcommand{\VR}{V_\textnormal{R}}%
\newcommand{\VB}{V_\textnormal{B}}%
\newcommand{\RER}{R_\textnormal{R}}%
\newcommand{\REB}{R_\textnormal{B}}%
\newcommand{\Weft}{{\cc{W}}}
\renewcommand{\W}[1]{{\Weft}{\normalfont[#1]}}
\newcommand{\paraNP}{\cc{paraNP}}
\newcommand{\YES}{\cc{Yes}}
\newcommand{\NO}{\cc{No}}
\newcommand{\red}{red}
\newcommand{\blue}{blue}
\newcommand{\pastCenter}{c_{\textnormal{past}}}
\newcommand{\futureCenter}{c_{\textnormal{future}}}
\newcommand{\bagCenter}{C_{\textnormal{bag}}}
\newcommand{\bagVectors}{V_\textnormal{bag}}
\newcommand{\noInst}{\bot}
\newcommand{\tw}{\textsf{tw}}
\newcommand{\incgraph}{G_I}
\newtheorem{theorem}{Theorem}
\newtheorem{observation}[theorem]{Observation}
\newtheorem{proposition}[theorem]{Proposition}
\newtheorem{corollary}[theorem]{Corollary}
\newtheorem{lemma}[theorem]{Lemma}
\newcommand{\pbDef}[3]{%
\noindent
\begin{center}
\begin{boxedminipage}{1\columnwidth}
#1\\[5pt]
\begin{tabular}{l p{0.75\columnwidth}}
Input: & #2\\
Question: & #3
\end{tabular}
\end{boxedminipage}
\end{center}
}
\newcommand{\CCC}{\mathcal{C}}
\newcommand{\tuple}[1]{\langle{#1}\rangle}  
\newcommand{\CSP}{\mathrm{CSP}}
\newcommand{\vO}{\textbf{1}}
\newcommand{\CRB}{\textsc{EC-BHC}}
\newcommand{\scp}{\textsf{\textup{econ}}}
\newcommand{\scpm}{\textsf{econ}_{min}}
\newcommand{\scptw}{\lambda}
\newcommand{\icon}{\textsf{\textup{dcon}}}
\newcommand{\con}{\textsf{\textup{con}}}
\newcommand{\UHS}{\textsc{Uniform Hitting Set}}
\newcommand{\FFF}{\mathcal{F}}
\newcommand{\yes}{\YES}
\newcommand{\no}{\NO}
\begin{document}
\icmltitlerunning{The Computational Complexity of Concise Hypersphere Classification}
\twocolumn[
\icmltitle{The Computational Complexity of Concise Hypersphere
  Classification}

\begin{icmlauthorlist}
\icmlauthor{Eduard Eiben}{rh}
\icmlauthor{Robert Ganian}{tuw}
\icmlauthor{Iyad Kanj}{chic}
\icmlauthor{Sebastian Ordyniak}{lee}
\icmlauthor{Stefan Szeider}{tuw}
\end{icmlauthorlist}

\icmlaffiliation{tuw}{Algorithms and Complexity Group, TU Wien, Austria.}
\icmlaffiliation{rh}{Royal Holloway, University of London, UK.}
\icmlaffiliation{chic}{DePaul University, USA.}
\icmlaffiliation{lee}{University of Leeds, UK.}

\icmlcorrespondingauthor{Eduard Eiben}{eduard.eiben@rhul.ac.uk}
\icmlcorrespondingauthor{Robert Ganian}{rganian@gmail.com}
\icmlcorrespondingauthor{Iyad Kanj}{ikanj@depaul.edu}
\icmlcorrespondingauthor{Sebastian Ordyniak}{sordyniak@gmail.com}
\icmlcorrespondingauthor{Stefan Szeider}{sz@ac.tuwien.ac.at}

\icmlkeywords{parameterized complexity, classification}

\vskip 0.3in
]

\printAffiliationsAndNotice{\icmlEqualContribution} 

\begin{abstract}
Hypersphere classification is a classical and foundational method that can provide easy-to-process explanations for the classification of real-valued and binary data. However, obtaining an (ideally concise) explanation via hypersphere classification is much more difficult when dealing with binary data
than  real-valued data. 
In this paper, we perform the first complexity-theoretic study of the hypersphere classification problem for binary data. We use the fine-grained parameterized complexity paradigm to analyze the impact of structural properties that may be present in the input data as well as potential conciseness constraints.
Our results include stronger lower bounds and new fixed-parameter
algorithms for hypersphere classification of binary data, which can
find an exact and concise explanation when one exists.
\end{abstract}

\section{Introduction}
With the rapid advancement of Machine Learning (ML) models to automate
decisions, there has been increasing interest in explainable Artificial Intelligence
(XAI), where the ML models can explain their decisions in a way
humans understand. This has led to the reexamination of ML models that are
implicitly easy to explain and interpret with a particular focus on
the conciseness of
explanations~\cite{Doshi-VelezK17,Lipton18,Monroe18c,Ribeiro0G18,ShihCD18,BarceloM0S20,ChalasaniC00J20,DarwicheH20,BlancLT21,Ignatiev0NS21,WaldchenMHK21,IzzaINCM22}.

In this article, we consider a simple
classification task---one of the cornerstones of machine learning---from the viewpoint of XAI. However, unlike previous works on explainability, which have typically targeted questions such as identifying a suitable interpretable model for (area-specific) classification~\cite{NoriCBSK21,ShihTK21,WangZLW21}
or measuring the accuracy cost of
explainability~\cite{LaberM21,MakarychevS21},
the goal of this work is to obtain a comprehensive understanding of the computational complexity of performing binary classification via one of the most fundamental interpretable models.

Consider a set $M$ of either real-valued or binary training feature
points, each represented as a $d$-dimensional feature vector over
$[0,1]$ or $\{0,1\}$ and labeled as either ``blue'' (the set $\VB$) or
``red'' (the set $\VR$). There is, by now, a broad set of more or less
opaque classifiers capable of using such a training set to classify
unlabeled data, where the suitability of each method depends on the
data domain and context; moreover, some classifiers are tailored to
real-valued data, while others are designed for binary (or, more
generally, categorical) data. In this paper, we consider one of the
two arguably simplest---and hence easiest to explain and
visualize---types of classifiers, which can be used in both data
settings: a hypersphere. More formally, the explanations we consider
consist of a cluster center $\vec{c}$ (an element of $[0,1]^d$ or
$\{0,1\}^d$) and distance $\ell$ such that each feature vector is at a
distance at most $\ell$ from $\vec{c}$ if and only if it is blue. 

The reason for studying the complexity of hypersphere classification does not stem purely from the problem's connection to explainability. 
Together with classification by a separating hyperplane,
hypersphere classification represents one of the most classic explanatory examples of classifiers (see~\cite{pcooper,neskovic,neskovic1} to name a few) which have been extensively studied from both the computational geometry and the machine learning perspectives~\cite{astorino1,astorino,pcooper,neskovic,neskovic1,orourke,agarwal,hurtado}. Moreover, hypersphere classification is of special importance in one-class classification due to the inherent asymmetry of the provided explanations~\cite{KimLJ21}. 
While hyperplane separation can be encoded as a linear program and
hence is easily polynomial-time solvable for real-valued and  binary data, the computational complexity of hypersphere classification is far less obvious and has so far remained surprisingly unexplored.

This apparent gap contrasts the situation for several other computational problems arising in the area of machine learning, which have already been targeted by detailed complexity-theoretic studies~\cite{OrdyniakS13,GanianKOS18,SimonovFGP19,DahiyaFPS21,GanianK21,GanianHKOS22,GruttemeierK22}, carried out using the classical as well as the parameterized-complexity paradigms.
In this article, we close the gap by laying bare a detailed map of the problem's computational complexity via the design of novel theoretical algorithms as well as accompanying computational lower bounds.

\smallskip
\noindent
\textbf{Contributions.}\quad 
We begin by observing that the hypersphere classification problem is polynomial-time solvable when the input data is real-valued; in particular, this case can be handled via a more sophisticated linear programming encoding than the one used for the classical hyperplane separation problem. 
However, this approach completely fails when dealing with binary data,
warranting  a more careful complexity-theoretic study of this
case. Our first result shows that hypersphere classification of binary data is not only \NP-hard in general but remains \NP-hard even when there are only two red vectors. We also obtain an analogous hardness result for instances with two blue vectors.

The fact that the problem's complexity differs between real-valued and categorical data is already interesting. However, the \NP-hardness of the latter case does not preclude the existence of efficient algorithms that can solve the problem under additional natural restrictions. Indeed, one of the central themes in modern complexity-theoretic research is the identification of the exact boundaries of tractability. This is frequently achieved through the lens of the \emph{parameterized complexity} paradigm~\cite{DowneyF13,CyganFKLMPPS15}, where we associate each problem instance $\III$ with an integer parameter $k$ (often capturing a certain structural property of the instance) and ask whether the problem of interest can be solved by a ``fixed-parameter'' algorithm, that is, by an algorithm with runtime of the form $f(k)\cdot |\III|^{\bigoh(1)}$ for some computable function $f$. This gives rise to a strong form of computational tractability called \emph{fixed-parameter tractability} (\FPT{}). 

In the case of our problem of interest, it is easy to observe that
hypersphere classification of binary data is fixed-parameter tractable
when parameterized by the data dimension $d$ (since the number of
possible centers is upper-bounded by $2^d$). Moreover, we show that
the problem also admits a fixed-parameter algorithm when parameterized
by the total number of feature points via a combinatorial reduction to
a known tractable fragment of Integer Linear Programming. While these
are important pieces of the complexity-theoretic landscape of
hypersphere classification, these two initial results are somewhat
unsatisfying on their own because (1) they rely on highly restrictive
parameterizations, and (2) they ignore a central aspect of
explainability, which is \emph{conciseness} or
\emph{succinctness}~\cite{Ribeiro0G18,ShihCD18,BlancLT21,WaldchenMHK21,ChalasaniC00J20,IzzaINCM22,OrdyniakPaesaniSzeider23}.

A natural measure of conciseness in our setting is the number of ``1'' coordinates in a vector; indeed, any explanation produced by a classifier will likely  end up ignored by users if such an explanation is incomprehensibly long, relying on too many features.
At the same time, depending on the source of the input data, we may often deal with feature vectors that are already concise. Having concise feature vectors does not necessarily guarantee the existence of a concise center (and vice-versa, concise centers may exist for non-concise data); however, at least one of the two independent measures of conciseness can be expected (or even required) to be small in a variety of settings, making them natural choices for parameters in our analysis.
In the second part of the article, we show that these two conciseness parameters---a bound $\scp$ on the conciseness of the sought-after explanation and a bound $\icon$ on the conciseness of all feature vectors in the training data---can be algorithmically exploited to cope with the \NP-hardness of the hypersphere classification problem for binary data.

\newcommand{\T}[1]{{\small (#1)}}
\begin{table*}[h]
\centering
  \begin{tabular}{@{}c@{\hspace{15mm}}r@{~}l@{\hspace{10mm}}r@{~}l@{\hspace{10mm}}r@{~}l@{\hspace{10mm}}r@{~}l@{}}
    \toprule
    & \multicolumn{8}{c}{\emph{Conciseness}}\\
 \cmidrule{2-9}
 \emph{Structure} & \multicolumn{2}{c}{$\emptyset$~~~~~~~} & \multicolumn{2}{c}{$\scp$~~~~~~~~~~~~~} & \multicolumn{2}{c}{$\icon$~~~~~~~~~~} & \multicolumn{2}{c@{}}{$\scp+\icon$} \\
    \midrule
    $\emptyset$ &
    \NP-h&\T{Thm~\ref{the:2NPh}} &
    \XP~\T{Obs~\ref{obs:econXP}},& \W{2}-h~\T{Thm~\ref{thm:W2_scp}} &
    \NP-h$_{\geq 4}$ & \T{Thm~\ref{thm:rb-np-4}}  &
     \FPT&\T{Thm~\ref{thm:econicon}}  \\ 
     $|\VR|$ & \NP-h$_{\geq 2}$ & \T{Thm~\ref{the:2NPh}} & \XP~\T{Obs~\ref{obs:econXP}},& \W{2}-h~\T{Thm~\ref{thm:W2_scp}}& \FPT&\T{Thm~\ref{thm:colicon}} & \FPT&\T{Thm~\ref{thm:colicon}} \\
		$|\VB|$ & \NP-h$_{\geq 2}$ &
                          \T{Thm~\ref{the:2NPh}} & \XP~\T{Obs~\ref{obs:econXP}},& \W{1}-h~\T{Thm~\ref{thm:W1_scp}}& \FPT& \T{Thm~\ref{thm:colicon}}& \FPT&\T{Thm~\ref{thm:colicon}} \\
		$|\VR\cup \VB|$ & \FPT &
                                  \T{Thm~\ref{thm:fptpoints}} &
                                                                      \FPT
                                 & \T{Thm~\ref{thm:fptpoints}}& \FPT & \T{Thm~\ref{thm:fptpoints}}& \FPT&\T{Thm~\ref{thm:fptpoints}} \\
		$d$ & \FPT&\T{trivial}  & \FPT &\T{trivial} & \FPT &
                                                                     \T{trivial}& \FPT &\T{trivial}\\		
    $\tw$ & \XP	&\T{Cor~\ref{cor:tw}}  & \FPT& \T{Cor~\ref{cor:tw_econ}} & \FPT & \T{Cor~\ref{cor:tw_dcon}}& \FPT&\T{Cor~\ref{cor:tw_econ}} \\
    \bottomrule
	\end{tabular}

	\caption{The complexity landscape of hypersphere classification with respect to combinations of structural and conciseness parameters: $\VR$ and $\VB$ are the sets of red and blue points, respectively; $d$ is the dimension; $\tw$ is the incidence treewidth of the data representation; $\scp$ is the conciseness of the explanation, and $\icon$ is the data conciseness. \NP-h$_{\geq i}$ means that the problem becomes \NP-hard for parameter values of at least $4$, while $\W{j}$-h means that the problem is hard for the complexity class $\W{j}$ and hence is unlikely to be fixed-parameter tractable~\cite{DowneyF13}. 
	\label{tab:results}\vspace{-0.5cm}}
\end{table*}

Toward understanding the complexity of hypersphere classification of binary data  through the perspective of conciseness constraints, we begin by considering restrictions on the data conciseness $\icon$. We obtain a tight classification by showing that the problem is polynomial-time tractable when $\icon\leq 3$ via a reduction to a tractable fragment of the constraint satisfaction problem and \NP-hard otherwise. Moreover, we obtain fixed-parameter algorithms parameterized by $\icon$ plus  the number of red or blue points, circumventing the earlier \NP-hardness results.
When considering the explanation conciseness, we show that hypersphere classification is \XP-tractable when parameterized by $\scp$ and at the same time provide evidence excluding fixed-parameter tractability even parameterized by $\scp$ together with the number of red or blue points.
Finally, we obtain a linear-time fixed-parameter algorithm for the problem parameterized by $\scp+\icon$.

While this settles the complexity of binary-data hypersphere
classification from the perspective of conciseness measures, the
obtained lower bounds imply that neither measure of conciseness (i.e.,
neither $\scp$ nor $\icon$) suffices to achieve fixed-parameter
tractability on its own.
As our final contribution, we consider whether achieving tractability for the problem is possible by exploiting a suitable structural measure of the input data. In particular, following recent successes in closely related areas such as clustering and data completion~\cite{GanianHKOS22,GanianKOS18}, we consider the \emph{incidence treewidth} $\tw$ of the data representation. Using a non-trivial dynamic programming procedure, we obtain a fixed-parameter algorithm for binary-data hypersphere classification parameterized by either $\tw+\scp$ or $\tw+\icon$; in other words, each of the two notions of conciseness suffices for tractability of hypersphere clustering for data that is ``well-structured,'' in the sense of having small incidence treewidth. Moreover, as a byproduct of our algorithm, we also obtain the \XP-tractability of binary-data hypersphere classification parameterized by $\tw$ alone.

A summary of our results is provided in Table~\ref{tab:results}.

\smallskip
\noindent \textbf{Related Work.}\quad

While there is, to the best of our knowledge, no prior work targeting the complexity of hypersphere classification of binary data, there is a significant work on the real-valued variant of the problem by the machine learning community~\cite{pcooper,neskovic,neskovic1,astorino,astorino1}, where they studied the optimization version of the problem in which one seeks the smallest bounding sphere that separates the blue points from the red ones. Our results extend to the optimization version of the problem for binary data, as mentioned in Section~\ref{sec:conclusion}. Many of the above works consider relaxations of the real-valued optimization problem, in which the sphere sought is not of minimum radius~\cite{neskovic,neskovic1,astorino,astorino1}---allowing for error or for outliers---and reduce the problem to some fragment of quadratic programming. We point out that the problem is also related to that of finding a minimum bounding sphere to distinguish/discriminate a set of objects (i.e., one-class classification)~\cite{tax}, which is, in turn, inspired by the Support Vector Machine models introduced in~\cite{vapnik}.  

The hypersphere classification of real-valued low-dimensional data has also been studied in the context of point separability within the field of computational geometry. In particular, the separability of two sets of points in $\mathbb{R}^2$ by a circle was studied by O'Rourke, Kosaraju and Megiddo (\citeyear{orourke}), who established the linear-time tractability of that case. They also observed that their result could be lifted to an $\bigoh(n^{d})$-time algorithm for the separability of $n$ $d$-dimensional data points by a hypersphere; however this is superseded by the $n^{\bigoh(1)}$-time algorithm observed in Proposition~\ref{obs:propreal}, which runs in polynomial time even for unbounded values of $d$. Several authors also studied related point-separation problems in $\mathbb{R}^2$ and $\mathbb{R}^3$, such as separability of points via polyhedra~\cite{megiddo1988complexity}, 
L-shapes~\cite{SheikhiMBD15}
and a variety of other objects~\cite{agarwal,alegria2022separating}.

\section{Preliminaries}
\label{sec:prelim}
For $\ell \in \Nat$, we write $[\ell]$ for $\{1, \ldots, \ell\}$. 
For convenience, we identify each vector $\vec{v}=(v_1,\dots,v_d)$ with the point $(v_1,\dots,v_d)$ in $d$-dimensional space.

 \paragraph{Problem Definition and Terminology.}
 For two vectors $\vec{a}, \vec{b} \in \{0,1\}^d$, we denote by 
$\delta(\vec{a}, \vec{b})$ the Hamming distance between $\vec{a}$ and $\vec{b}$. For a vector $\vec{v} \in \{0,1\}^d$ and $r \in \Nat$, denote by $B(\vec{v}, r)$ the hypersphere (i.e., ball) centered at $\vec{v}$ and of radius $r$; that is, the set of all vectors $\vec{x} \in \{0,1\}^d$ satisfying $\delta(\vec{v}, \vec{x}) \leq r$. Similarly, for vectors over $[0,1]^d$ we denote by $B(\vec{v}, r)$ the hypersphere (i.e., ball) centered at $\vec{v}$ and of radius $r$ with respect to the Euclidean distance in $\mathbb{R}^d$.

The problem under consideration in this paper is defined as follows:

\newcommand{\RB}{\textsc{BHC}}
\pbDef{\textsc{Binary Hypersphere Classification} (\RB)}{A set $V=\VR \cup \VB$ of $d$-dimensional vectors over the binary domain $D=\{0, 1\}$, where $\VR \cap \VB = \emptyset$.}{Is there a vector 
$\vec{c} \in D^{d}$ and $r \in \Nat$ such that $\VB \subseteq B(\vec{c}, r)$ and $\VR \cap B(\vec{c}, r) = \emptyset$?}

Throughout the paper, we will refer to the vectors in $\VR$ as ``\red'' and those in $\VB$ as ``\blue''. We also denote by $\vO(\vec{v})$ the set
of all coordinates $i$ such that $\vec{v}[i]=1$ and we write
$\con(\vec{v})$ for the \emph{conciseness} of $\vec{v}$, i.e.,
$\con(\vec{v})=|\vO(\vec{v})|$.
Moreover, observe that the Hamming distance $\delta(\vec{v},\vec{c})$ of two vectors $\vec{v}$ and
$\vec{c}$ can be written as
$\delta(\vec{v},\vec{c})=|\vO(\vec{c})|+|\vO(\vec{v})|-2|\vO(\vec{v})\cap
\vO(\vec{c})|$. 

Naturally, one may also consider the analogous problem of \textsc{Real-valued Hypersphere Classification}, where the only distinction is that the domain is $[0,1]$ instead of $\{0,1\}$. As mentioned in the introduction, this problem can be shown to be polynomial-time solvable and hence is not considered further in our complexity-theoretic analysis.

\begin{proposition}
\label{obs:propreal}
\textsc{Real-valued Hypersphere Classification} can be solved in polynomial time.
\end{proposition}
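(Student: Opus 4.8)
The plan is to reduce the problem to a linear feasibility program and invoke the polynomial-time solvability of linear programming over the rationals. The obstacle to a direct formulation is that the membership condition $\|\vec{x}-\vec{c}\|^2 \le r^2$ is quadratic in the unknowns $\vec{c}$ and $r$; the crucial observation is that, after expanding $\|\vec{x}-\vec{c}\|^2 = \|\vec{x}\|^2 - 2\langle \vec{x},\vec{c}\rangle + \|\vec{c}\|^2$, the only nonlinear term $\|\vec{c}\|^2$ is \emph{the same} in every constraint. We can therefore absorb it, together with $r^2$, into a single fresh variable $R := r^2 - \|\vec{c}\|^2$, after which every membership condition becomes linear in the $d+1$ variables $c_1,\dots,c_d,R$.

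Concretely, I would introduce real variables $c_1,\dots,c_d$ and $R$ and write the system
\begin{align*}
2\langle \vec{b},\vec{c}\rangle + R &\ge \|\vec{b}\|^2 && \text{for all } \vec{b}\in\VB,\\
2\langle \vec{a},\vec{c}\rangle + R &< \|\vec{a}\|^2 && \text{for all } \vec{a}\in\VR,\\
0 \le c_i &\le 1 && \text{for all } i\in[d],
\end{align*}
where each $\|\vec{b}\|^2$ and $\|\vec{a}\|^2$ is a rational constant depending only on the data. A point $\vec{b}$ satisfies the first inequality exactly when $\|\vec{b}-\vec{c}\|^2\le r^2$, and a point $\vec{a}$ satisfies the second exactly when $\|\vec{a}-\vec{c}\|^2 > r^2$, so solutions of this system correspond precisely to valid classifiers, provided we can recover a genuine radius. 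For this last point one checks that $r^2 = R + \|\vec{c}\|^2 \ge 0$ is automatic whenever $\VB\ne\emptyset$: any blue constraint yields $R + \|\vec{c}\|^2 \ge \|\vec{b}\|^2 - 2\langle\vec{b},\vec{c}\rangle + \|\vec{c}\|^2 = \|\vec{b}-\vec{c}\|^2 \ge 0$, so we may set $r := \sqrt{R+\|\vec{c}\|^2}$. The degenerate cases $\VB=\emptyset$ or $\VR=\emptyset$ are answered \YES{} directly.

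It remains to decide feasibility of a finite system mixing non-strict ($\ge$) and strict ($<$) linear inequalities, a standard task solvable in polynomial time over rational input. I would do this with the usual margin trick: maximize a scalar $\mu$ subject to the blue and box constraints together with $2\langle\vec{a},\vec{c}\rangle + R + \mu \le \|\vec{a}\|^2$ for every $\vec{a}\in\VR$ and $\mu\le 1$. Because this program is feasible (take any $\vec{c}$ in the box, a large $R$, and $\mu$ sufficiently negative) and its objective is bounded above by the constraint $\mu\le 1$, it attains an optimum $\mu^\star$, and the strict system is feasible if and only if $\mu^\star > 0$; in that case an optimal solution yields $\vec{c}$ and $R$, from which $r$ is recovered as above. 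Since the program has $d+2$ variables and $|\VR|+|\VB|+2d+1$ constraints with rational coefficients of polynomially bounded encoding length, it is solvable in polynomial time by Khachiyan's algorithm, giving the claimed bound. The only genuine idea here is the linearizing substitution; the margin reformulation of strictness and the polynomiality of linear programming are routine.
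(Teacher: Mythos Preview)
Your argument is correct and rests on the same core observation as the paper's proof: the only nonlinear term $\|\vec{c}\|^2$ in the sphere constraint is common to all data points and can therefore be eliminated, yielding a linear program. The implementations differ in a minor but pleasant way. The paper writes one constraint for every \emph{pair} $(\vec{r},\vec{b})\in\VR\times\VB$, subtracting the two squared-distance expressions so that both $\|\vec{c}\|^2$ and $r^2$ cancel; this gives an LP in $d$ variables with $|\VR|\cdot|\VB|$ strict linear constraints. You instead absorb $r^2-\|\vec{c}\|^2$ into a fresh variable $R$, obtaining one constraint per data point (plus box constraints) in $d{+}1$ variables, and you handle the strict inequalities explicitly via the margin variable $\mu$. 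Your formulation is a bit more economical in the number of constraints and is more careful about two points the paper glosses over: the box constraints enforcing $\vec{c}\in[0,1]^d$ and the treatment of strict versus non-strict inequalities. Either encoding establishes the proposition.
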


\iflong
\begin{proof}
We give a reduction to \textsc{Linear Programming}, i.e., the task of finding a solution to a set of linear inequalities. We construct a variable $x_i$ for each of the dimensions $i\in [d]$ which will capture the coordinates of a hypothetical center $\vec{c}$. For each pair of vectors $\vec{r}\in \VR$ and $\vec{b}\in \VB$, we introduce a constraint which ensures that $c$ will be closer to the latter than the former. Note that every $\vec{c}$ satisfying this property (for all pairs of blue and red vectors) can serve as a center for a hypersphere containing only blue vectors---one can simply set the radius to the maximum distance of a blue vector from $\vec{c}$.

To define this constraint, we observe that the distance between $\vec{c}$ and $\vec{r}$ is precisely equal to $\sqrt{\sum_{i\in [d]}(\vec{r}[i]-x_i)^2}$, and similarly for $\vec{b}$. Hence, we need a constraint ensuring that 

$$\sqrt{\sum_{i\in [d]}(\vec{r}[i]-x_i)^2}> \sqrt{\sum_{i\in [d]}(\vec{b}[i]-x_i)^2},$$

which can be equivalently reduced to

$$\sum_{i\in [d]}(\vec{r}[i]-x_i)^2> \sum_{i\in [d]}(\vec{b}[i]-x_i)^2.$$

Since the quadratic terms of the $x_i$ variable are the same on both sides, the inequality can be simplified by removing them completely.

$$\sum_{i\in [d]}\vec{r}[i]^2-\vec{b}[i]^2 > \sum_{i\in [d]}2(\vec{r}[i]-\vec{b}[i])x_i.$$

Hence, each constraint can be stipulated by adding a linear inequality into the linear program. The claim then follows by the well-known polynomial-time tractability of \textsc{Linear Programming}.
\end{proof}
\fi

\ifshort
\paragraph{Parameterized Complexity.} 
In parameterized
algorithmics~\cite{FlumGrohe06,DowneyF13,CyganFKLMPPS15} the
running-time of an algorithm is studied with respect to a parameter
$k\in\Nat_0$ and input size~$n$. The basic idea is to find a parameter
that describes the structure of the instance such that the
combinatorial explosion can be confined to this parameter. In this
respect, the most favorable complexity class is \FPT
(\textit{fixed-parameter tractable}) which contains all problems that
can be decided by an algorithm running in time $f(k)\cdot
n^{\bigoh(1)}$, where $f$ is a computable function. Algorithms with
this running-time are called \emph{fixed-parameter algorithms}. A less
favorable outcome is an \XP{} \emph{algorithm}, which is an algorithm
running in time $\bigoh(n^{f(k)})$; problems admitting such
algorithms belong to the class \XP. 

Showing that a parameterized problem is hard for the complexity classes \W{1} or \W{2} rules out the existence of a fixed-parameter algorithm under well-established complexity-theoretic assumptions. Such hardness results are typically established via a \emph{parameterized reduction}, which is an analogue of a classical polynomial-time reduction with two notable distinctions: a parameterized reduction can run in time $f(k)\cdot
n^{\bigoh(1)}$, but the parameter of the produced instance must be upper-bounded by a function of the parameter in the original instance.
\fi

\iflong
\paragraph{Parameterized Complexity.} In parameterized complexity~\cite{FlumGrohe06,DowneyF13,CyganFKLMPPS15},
the complexity of a problem is studied not only with respect to the
input size, but also with respect to some problem parameter(s). The
core idea behind parameterized complexity is that the combinatorial
explosion resulting from the \NP-hardness of a problem can sometimes
be confined to certain structural parameters that are small in
practical settings. We now proceed to the formal definitions.

A {\it parameterized problem} $Q$ is a subset of $\Omega^* \times
\mathbb{N}$, where $\Omega$ is a fixed alphabet. Each instance of $Q$ is a pair $(I, \kappa)$, where $\kappa \in \Nat$ is called the {\it
parameter}. A parameterized problem $Q$ is
{\it fixed-parameter tractable} (\FPT)~\cite{FlumGrohe06,DowneyF13,CyganFKLMPPS15}, if there is an
algorithm, called an {\em \FPT-algorithm},  that decides whether an input $(I, \kappa)$
is a member of $Q$ in time $f(\kappa) \cdot |I|^{\bigoh(1)}$, where $f$ is a computable function and $|I|$ is the input instance size.  The class \FPT{} denotes the class of all fixed-parameter
tractable parameterized problems.

A parameterized problem $Q$
is {\it \FPT-reducible} to a parameterized problem $Q'$ if there is
an algorithm, called an \emph{\FPT-reduction}, that transforms each instance $(I, \kappa)$ of $Q$
into an instance $(I', \kappa')$ of
$Q'$ in time $f(\kappa)\cdot |I|^{\bigoh(1)}$, such that $\kappa' \leq g(\kappa)$ and $(I, \kappa) \in Q$ if and
only if $(I', \kappa') \in Q'$, where $f$ and $g$ are computable
functions. By \emph{\FPT-time}, we denote time of the form $f(\kappa)\cdot |I|^{\bigoh(1)}$, where $f$ is a computable function.
Based on the notion of \FPT-reducibility, a hierarchy of
parameterized complexity, {\it the \cc{W}-hierarchy} $=\bigcup_{t
\geq 0} \W{t}$, where $\W{t} \subseteq \W{t+1}$ for all $t \geq 0$, has 
been introduced, in which the $0$-th level \W{0} is the class {\it
\FPT}. The notions of hardness and completeness have been defined for each level
\W{$i$} of the \cc{W}-hierarchy for $i \geq 1$ \cite{DowneyF13,CyganFKLMPPS15}. It is commonly believed that $\W{1} \neq \FPT$ (see \cite{DowneyF13,CyganFKLMPPS15}). The
\W{1}-hardness has served as the main working hypothesis of fixed-parameter
intractability. The class \XP{} contains parameterized problems that can be solved in time  $\bigoh(|I|^{f(\kappa)})$, where $f$ is a computable function; it
contains the class \W{t}, for $t \geq 0$, and every problem in \XP{} is polynomial-time solvable when the parameters are bounded by a constant.
The class \paraNP{} is the class of parameterized problems that can be solved by non-deterministic algorithms in time $f(\kappa)\cdot |I|^{\bigoh(1)}$, where $f$ is a computable function.
A problem is \emph{\paraNP{}-hard} if it is \NP-hard for a constant value of the parameter~\cite{FlumGrohe06}.
\fi

\paragraph{Treewidth.}
A \emph{nice tree-decomposition}~$\mathcal{T}$ of a graph $G=(V,E)$ is a pair 
$(T,\chi)$, where $T$ is a tree (whose vertices are called \emph{nodes}) rooted at a node $t_r$ and $\chi$ is a function that assigns each node $t$ a set $\chi(t) \subseteq V$ such that the following hold: 
\begin{itemize}[noitemsep,topsep=0pt]
	\item For every $uv \in E$ there is a node
	$t$ such that $u,v\in \chi(t)$.
	\item For every vertex $v \in V$,
	the set of nodes $t$ satisfying $v\in \chi(t)$ forms a subtree of~$T$.
	\item $|\chi(\ell)|=0$ for every leaf $\ell$ of $T$ and $|\chi(t_r)|=0$.
	\item There are only three kinds of non-leaf nodes in $T$:
	\begin{itemize}[noitemsep]
        \item \textbf{Introduce node:} a node $t$ with exactly
          one child $t'$ such that $\chi(t)=\chi(t')\cup
          \{v\}$ for some vertex $v\not\in \chi(t')$.
        \item \textbf{Forget node:} a node $t$ with exactly
          one child $t'$ such that $\chi(t)=\chi(t')\setminus
          \{v\}$ for some vertex $v\in \chi(t')$.
        \item \textbf{Join node:} a node $t$ with two children $t_1$,
          $t_2$ such that $\chi(t)=\chi(t_1)=\chi(t_2)$.
	\end{itemize}
\end{itemize}

The \emph{width} of a nice tree-decomposition $(T,\chi)$ is the size of a largest set $\chi(t)$ minus~$1$, and the \emph{treewidth} of the graph $G$,
denoted $\tw(G)$, is the minimum width of a nice tree-decomposition of~$G$.

We let $T_t$ denote the subtree of $T$ rooted at a node $t$, and use $\chi(T_t)$ to denote the set $\bigcup_{t'\in V(T_t)}\chi(t')$ and \(G_t\) to denote the graph \(G[\chi(T_t)]\) induced by the vertices in \(\chi(T_t)\).
Efficient fixed-parameter algorithms are known for computing a nice tree-decomposition of near-optimal width:

\begin{proposition}[\citealt{Kloks94,Korhonen21}]\label{fact:findtw}%
	There exists an algorithm which, given an $n$-vertex graph $G$ and an integer~$k$, in time $2^{\bigoh(k)}\cdot n$ either outputs a nice tree-decomposition of $G$ of width at most $2k+1$ and $\bigoh(n)$ nodes, or determines that $\tw(G)>k$.
\end{proposition}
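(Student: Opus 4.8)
The plan is to combine two ingredients: a single-exponential fixed-parameter approximation algorithm for treewidth, and the classical syntactic procedure that turns an arbitrary tree-decomposition into a nice one of the same width. Since the statement asks for a \emph{nice} decomposition while the heavy lifting on treewidth is done by known approximation machinery, I would treat these two tasks separately and compose them.

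First, I would run Korhonen's algorithm~\cite{Korhonen21} on the input $(G,k)$. In time $2^{\bigoh(k)}\cdot n$ this either correctly reports that $\tw(G)>k$, in which case we are immediately done, or it outputs a tree-decomposition $(T,\chi)$ of $G$ of width at most $2k+1$; by a standard clean-up (contracting bags that are contained in a neighbour) we may further assume $T$ has $\bigoh(n)$ nodes. This is the step that supplies the width guarantee $2k+1$, which is simply inherited from the $2$-approximation factor.

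Second, I would convert $(T,\chi)$ into nice form following the well-known linear-time transformation (see Kloks~\cite{Kloks94}). Concretely, I would root $T$ arbitrarily; append chains of forget and introduce nodes at the root and at every leaf so that all of those bags become empty; replace any node having more than two children by a small binary tree of join nodes carrying the same bag, and duplicate bags so that each join node shares an identical bag with both of its children; and finally, wherever two adjacent bags differ, insert a chain of introduce and forget nodes that changes the bag one vertex at a time. Each edit is local and never increases the width, so the resulting decomposition still has width at most $2k+1$; since each vertex is introduced and forgotten only a bounded number of times per branch and the number of join nodes is linear in $|V(T)|$, the final nice tree-decomposition has $\bigoh(n)$ nodes and is produced in time $2^{\bigoh(k)}\cdot n$, which is dominated by the first step.

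The substantive content lies entirely in the first step: obtaining a decomposition of width $\bigoh(k)$ in single-exponential fixed-parameter time is exactly Korhonen's theorem, and I would regard that as a black box rather than re-deriving it. The second step is purely mechanical bookkeeping, and the only point requiring care is verifying that the node count stays linear after the edits; I do not anticipate a genuine obstacle there beyond this accounting.
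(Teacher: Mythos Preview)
Your proposal is correct and matches the paper's treatment: the paper does not give its own proof but simply cites Korhonen~\cite{Korhonen21} for the $2^{\bigoh(k)}\cdot n$-time $2$-approximation of treewidth and Kloks~\cite{Kloks94} for the linear-time conversion to nice form, which is precisely the two-step composition you outline.
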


\paragraph{Constraint Satisfaction Problems.} Let $D=\{0,1\}$ and let $n$ an integer.
An $n$-ary relation  on $D$ is a subset of $D^n$.
An instance $I$ of a \emph{Boolean constraint satisfaction problem} (CSP) 
is a pair $(V,C)$, where $V$ is a finite set of variables
and $C$ is a set of constraints. A
\emph{constraint} $c \in C$ consists of a \emph{scope}, denoted by
$V(c)$, which is an ordered list of a subset of~$V$, and a relation,
denoted by $R(c)$, which is a $|V(c)|$-ary relation on $D$; $|V(c)|$
is the \emph{arity} of~$c$. 
\iflong To simplify notation, we sometimes treat
ordered lists without repetitions, such as the scope of a constraint,
like sets.  For a variable $v \in V(c)$ and a tuple $t \in R(c)$, we
denote by $t[v]$, the $i$-th entry of $t$, where $i$ is the position
of $v$ in $V(c)$.  
\fi
 
A \emph{solution} to a CSP instance $I=(V,C)$ is a mapping $\tau : V
\rightarrow D$
such that $\tuple{\tau(v_1),\dotsc,\tau(v_{|V(c)|})}
\in R(c)$ for every $c \in C$ with $V(c)=\tuple{v_1,\dotsc,v_{|V(c)|}}$.
A CSP instance is \emph{satisfiable} if and only if it has at least one solution.

\iflong
A \emph{constraint language} $\Gamma$ over $D$ is a set of
relations over $D$. $\CCC_\Gamma$ denotes the
class of CSP instances $I$ with the property that for each $c\in C(I)$
we have $R(c)\in \Gamma$. $\CSP(\Gamma)$ refers to the CSP with
instances restricted to~$\CCC_\Gamma$.  A constraint language $\Gamma$
is \emph{tractable} if $\CSP(\Gamma)$ can be solved in polynomial
time.

Given a $k$-ary relation $R$ over $D$ and a function
$\phi:D^n \rightarrow D$, we say that $R$ is {\em closed under
  $\phi$}, if for all collections of $n$ tuples $t_1,\dotsc,t_n$ from
$R$, the tuple $\tuple{\phi(t_1[1],\dotsc,t_n[1]),
  \dotsc,\phi(t_1[k],\dotsc,t_n[k])}$ belongs to $R$. The function
$\phi$ is also said to be a {\em polymorphism of $R$}.
We say that a constraint language $\Gamma$ is \emph{closed under} $\phi$ (or
equivalently that $\phi$ is a \emph{polymorphism} for $\Gamma$) if all
relations in $R$ are closed under $\phi$. Similarily, we say that a
CSP instance $I=(V,C)$ is \emph{closed under} $\phi$ if $R(c)$ is
closed under $\phi$ for every constraint $c \in C$.

We need the following well-known (types) of operations:
\begin{itemize}
\item An operation $\phi : D \rightarrow D$ is \emph{constant} if
  there is a $d \in D$ such that for every $d' \in D$, it
  holds that $\phi(d')=d$.
\item An operation $\phi : D^2 \rightarrow D$ is a
  \emph{AND}/\emph{OR} operation if there is an ordering of the
  elements of $D$ such that for every $d,d' \in D$, it holds that
  $\phi(d,d')=\phi(d',d)=\min\{d,d'\}$ or
  $\phi(d,d')=\phi(d',d)=\max\{d,d'\}$, respectively.
\item An operation $\phi : D^3 \rightarrow D$ is a \emph{majority}
  operation if for every $d,d' \in D$ it holds that
  $\phi(d,d,d')=\phi(d,d',d)=\phi(d',d,d)=d$.
\item An operation $\phi : D^3 \rightarrow D$ is an \emph{minority}
  operation if for every $d,d' \in D$ it holds
  that $\phi(d,d,d')=\phi(d,d',d)=\phi(d',d,d)=d'$.
\end{itemize}
The following is commonly known as Schaefer's
dichotomy theorem:
\begin{theorem}[\citealt{Schaefer78,Chen09}]\label{thm:schaefer}
  Let $\Gamma$ be a finite constraint language over $D$. Then,
  CSP$(\Gamma)$ is tractable in polynomial-time if $\Gamma$ is closed
  under any of the following (types) of operations: (1) a constant
  operation, (2) an AND or an OR operation, (3) a majority operation,
  or (4) a minority operation. Otherwise, CSP$(\Gamma)$ is \NP-complete.
\end{theorem}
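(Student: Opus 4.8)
The statement is a classical dichotomy, so the plan is to prove the two directions separately, and to reduce the whole problem to a statement about the \emph{polymorphism clone} $\operatorname{Pol}(\Gamma)$ of all operations under which every relation in $\Gamma$ is closed. The first step is to invoke the Galois connection between polymorphisms and primitive-positive (pp-)definability: whenever two finite languages satisfy $\operatorname{Pol}(\Gamma)\subseteq\operatorname{Pol}(\Gamma')$, every relation of $\Gamma'$ is pp-definable from $\Gamma$, and such definitions yield polynomial-time reductions between $\CSP(\Gamma')$ and $\CSP(\Gamma)$ (introducing existentially quantified auxiliary variables for the conjuncts). Consequently the complexity of $\CSP(\Gamma)$ depends only on the clone $\operatorname{Pol}(\Gamma)$, and it suffices to work at the level of clones rather than with individual relations.

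For the tractability direction I would exhibit one algorithm per guaranteed polymorphism. If $\Gamma$ is closed under a constant operation with value $d$, then the constant tuple $\langle d,\dots,d\rangle$ lies in every relation, so the all-$d$ assignment solves every instance and the answer is trivially \YES. If $\Gamma$ is closed under an AND/OR (min/max) operation, the solution set of each instance is closed under coordinatewise $\min$ (resp.\ $\max$), so a unique minimal (resp.\ maximal) solution candidate can be computed by a Horn-style propagation procedure and then checked. If $\Gamma$ is closed under the majority operation, every relation is $2$-decomposable, i.e.\ determined by its binary projections; the instance therefore reduces to a binary CSP solved by enforcing path/arc consistency, as in $2$-\textsc{Sat}. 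Finally, if $\Gamma$ is closed under the minority operation, every relation is an affine subspace of $D^n$ over $\mathrm{GF}(2)$, so the instance is a system of linear equations solvable by Gaussian elimination. Each of these runs in polynomial time.

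The hardness direction is where the real work lies, and I expect it to be the main obstacle. Here I would appeal to the complete description of Boolean clones given by Post's lattice. The four operation types listed generate exactly the minimal tractable clones, so the assumption that $\operatorname{Pol}(\Gamma)$ contains none of them forces $\operatorname{Pol}(\Gamma)$ to sit inside one of a bounded number of maximal ``hard'' clones---essentially those consisting only of projections, negation, and constants. For each such clone I would show, using the Galois connection above, that $\Gamma$ can pp-define a fixed relation whose CSP is already \NP-complete (for instance the not-all-equal relation, giving NAE-$3$-\textsc{Sat}, or the $1$-in-$3$ relation, giving positive $1$-in-$3$-\textsc{Sat}), while membership in \NP is immediate. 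The delicate part is the case analysis over Post's lattice together with verifying the pp-definitions in each maximal hard case; this is precisely the combinatorial core of Schaefer's original argument, and a careful treatment of the Galois correspondence in the first step is what makes the reduction to finitely many clones rigorous.
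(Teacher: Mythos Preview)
The paper does not prove this theorem at all; it is stated as a classical result with attribution to \cite{Schaefer78,Chen09} and used as a black box (specifically in Lemma~\ref{lem:gamma} to establish the \NP-hardness of $\CSP(\Gamma_4)$). There is therefore no ``paper's own proof'' to compare your proposal against.

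That said, your outline is a faithful sketch of the modern algebraic proof of Schaefer's dichotomy, essentially the route taken in the cited survey of Chen: reduce to the clone $\operatorname{Pol}(\Gamma)$ via the Galois connection, handle the four tractable cases by the standard algorithms (constant, Horn/dual-Horn propagation, $2$-\textsc{Sat}/majority, affine/Gaussian elimination), and for hardness use Post's lattice to locate $\operatorname{Pol}(\Gamma)$ inside a maximal clone from which a known \NP-hard relation is pp-definable. The one place where your sketch is slightly loose is the claim that the remaining clones consist ``essentially only of projections, negation, and constants''; in Post's lattice the relevant maximal ``hard'' clones are $N_2$ (generated by negation), $I_0$, $I_1$, and $I_2$, and you need to argue separately for each that an \NP-hard relation (e.g.\ NAE or $1$-in-$3$) is pp-definable. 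But as a high-level plan this is exactly right, and nothing more is expected for a result the paper merely cites.
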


We will later use reductions to Boolean CSP instances that are closed
under a majority opertion. It is easy to see that there is only one
majority operation defined on the Boolean domain, which we will refer
to from here onwards as the Boolean majority operation.

The following proposition provides the exact run-time to solve any Boolean CSP instance that is
closed under the unique Boolean majority operation. The proposition follows because as observed in~\cite{jcg97} any
such CSP is equivalent to an instance of 2-Satisfiability, i.e.,
deciding the satisfiability of a propositional formula in 2-CNF,
together with the fact that 2-Satisfiability can be solved in
linear-time~\cite{AspvallPT79}.
\begin{proposition}[\citealt{jcg97,AspvallPT79}]\label{pro:solve-csp}
  Any Boolean CSP instance $I=(V,C)$ that is closed under the unique
  Boolean majority operation can be solved in time $\bigoh(|V|+|C|)$.
\end{proposition}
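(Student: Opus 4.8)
The plan is to reduce such an instance to a 2-SAT instance of linear size and then invoke the linear-time 2-SAT algorithm of Aspvall, Plass and Tarjan~\cite{AspvallPT79}. The conceptual core is the classical fact, used in~\cite{jcg97}, that a Boolean relation closed under the majority operation is \emph{$2$-decomposable}: a tuple belongs to the relation if and only if each of its binary projections belongs to the corresponding binary projection of the relation. Since majority is a ternary near-unanimity operation, this is precisely an instance of the Baker--Pixley theorem specialized to the Boolean domain.

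First I would establish (or recall) the $2$-decomposability of every majority-closed relation $R(c)$. Concretely, for each pair of distinct variables $u,v \in V(c)$, let $R_{u,v}(c)$ be the binary projection of $R(c)$ onto the coordinates of $u$ and $v$. The decomposition asserts that $R(c) = \{\, t \in D^{|V(c)|} \mid (t[u],t[v]) \in R_{u,v}(c) \text{ for all distinct } u,v \in V(c)\,\}$. The inclusion requiring care is $\supseteq$: a tuple that is consistent with all binary projections must itself lie in $R(c)$. This is shown by a standard argument that repeatedly feeds three suitably chosen tuples of $R(c)$ into the majority polymorphism to ``repair'' the coordinates one at a time, using closure under majority to guarantee that each intermediate tuple remains in $R(c)$. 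Consequently, each constraint $c$ may be equivalently replaced by the conjunction of its binary constraints $R_{u,v}(c)$.

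Next I would translate each binary Boolean constraint into $2$-CNF. A binary relation over $D=\{0,1\}$ is a subset of the four tuples in $\{0,1\}^2$; every tuple $(a,b)$ \emph{excluded} by $R_{u,v}(c)$ is forbidden by adding the clause that rules out $u=a \wedge v=b$ (a disjunction of the literal $u$ or $\neg u$ with the literal $v$ or $\neg v$, i.e., a genuine $2$-clause), and a unary projection analogously yields a unit clause. Collecting these clauses over all pairs of all constraints produces a $2$-CNF formula $\varphi$ whose satisfying assignments correspond exactly to the solutions $\tau$ of $I$, so that $I$ is satisfiable if and only if $\varphi$ is. Applying~\cite{AspvallPT79} then decides satisfiability of $\varphi$, and recovers a solution, in time linear in the size of $\varphi$.

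The main obstacle is the running-time bookkeeping needed to reach the bound $\bigoh(|V|+|C|)$ rather than merely polynomial time. This requires that each constraint contribute only $\bigoh(1)$ clauses, which in turn needs the arity of each constraint and the size of its relation to be bounded by a constant—exactly the finite-language setting underlying Theorem~\ref{thm:schaefer}, and the setting arising in every application of this proposition. Under this assumption the number of projections $\binom{|V(c)|}{2}$ per constraint, together with the work to compute each projection from the explicitly listed tuples of $R(c)$, is $\bigoh(1)$; hence $\varphi$ has $\bigoh(|V|+|C|)$ variables and clauses and is constructed within the same time bound. Combined with the linear-time solver, this yields the claimed $\bigoh(|V|+|C|)$ running time.
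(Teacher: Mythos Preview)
Your proposal is correct and follows essentially the same approach as the paper, which simply cites~\cite{jcg97} for the equivalence between majority-closed Boolean CSPs and 2-Satisfiability and~\cite{AspvallPT79} for the linear-time algorithm; you have merely spelled out the $2$-decomposability argument that underlies that equivalence. Your caveat that the $\bigoh(|V|+|C|)$ bound relies on bounded arity (the finite-language setting) is apt and matches how the proposition is actually applied in the paper.
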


Next we will introduce certain relations and show that they are closed under the unique
majority operation for Boolean CSPs.
Let $R^a_{\leq x}$ and $R^a_{\geq x}$ be the Boolean $a$-ary relations
containing all tuples that contain at most $x$ $1$'s or at least $x$
$1$'s, respectively. Moreover, let $R_O^a$ and $R_Z^a$ be the $a$-ary Boolean
relations that contain only the all-one and all-zero tuple, respectively.
\begin{lemma}\label{lem:closed-maj}
  Let $a$ be an integer. Then, $R_O^a$ and $R_Z^a$ are closed
  under the Boolean majority operation.
  Moreover, if $1 \leq a \leq 3$, then $R^a_{\leq 1}$ and $R^a_{\geq
    2}$ are closed under the Boolean majority operation. 
\end{lemma}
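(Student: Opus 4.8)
The plan is to verify the closure condition straight from its definition: a relation $R$ is closed under the Boolean majority operation $\phi$ precisely when, for every choice of three tuples $t_1,t_2,t_3\in R$, the coordinate-wise image $t=\tuple{\phi(t_1[1],t_2[1],t_3[1]),\dots,\phi(t_1[a],t_2[a],t_3[a])}$ again lies in $R$. The only property of $\phi$ I will use is that $\phi(x,y,z)=1$ exactly when at least two of $x,y,z$ equal $1$, and dually $\phi(x,y,z)=0$ exactly when at least two equal $0$.

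First I would dispose of $R_O^a$ and $R_Z^a$. Each of these relations is a singleton, so the only admissible choice is $t_1=t_2=t_3$ equal to its unique (all-one or all-zero) tuple; since $\phi(d,d,d)=d$ for every $d\in D$, the image $t$ equals that same tuple, and closure is immediate. For $R^a_{\le 1}$ I would use a counting argument on the number of $1$'s. Take $t_1,t_2,t_3\in R^a_{\le 1}$ and form $t$. Writing $c_i$ for the number of inputs equal to $1$ in coordinate $i$, we have $t[i]=1$ iff $c_i\ge 2$, so the number of $1$'s of $t$ is at most $\sum_{i:c_i\ge 2} c_i/2\le\tfrac12\sum_i c_i$. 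As each $t_j$ has at most one $1$, the total $\sum_i c_i$ is at most $3$, so $t$ has at most $\lfloor 3/2\rfloor=1$ ones and therefore belongs to $R^a_{\le 1}$. (This bound in fact holds for every arity, so $a\le 3$ is not even needed here.)

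The case $R^a_{\ge 2}$ is where the restriction $1\le a\le 3$ becomes essential, and it is the only step requiring genuine care. The case $a=1$ is vacuous since $R^1_{\ge 2}=\emptyset$, so I would assume $a\in\{2,3\}$ and run the dual counting argument on the number of $0$'s: each $t_j$ now has at most $a-2$ zeros, hence the three tuples contain at most $3(a-2)$ zeros in total, and since $t[i]=0$ forces at least two inputs to be $0$ in coordinate $i$, the image $t$ has at most $\lfloor 3(a-2)/2\rfloor$ zeros. The crux is that $\lfloor 3(a-2)/2\rfloor\le a-2$ holds exactly when $a-2\le 1$, i.e.\ when $a\le 3$; under this hypothesis $t$ has at most $a-2$ zeros and hence at least two $1$'s, so $t\in R^a_{\ge 2}$. (Equivalently, one may invoke the self-duality of majority, $\phi(\bar x,\bar y,\bar z)=\overline{\phi(x,y,z)}$: for $a=3$ the complementation map $t\mapsto\bar t$ carries $R^3_{\le 1}$ onto $R^3_{\ge 2}$, so closure transfers from the previous case.)

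I expect the main obstacle to be nothing deeper than pinning down this arithmetic threshold correctly and confirming that the inequality genuinely fails for $a\ge 4$ — witnessed, e.g., by three weight-$2$ tuples with pairwise disjoint supports, whose majority is the all-zero tuple — which is precisely the reason the statement is confined to $a\le 3$. Everything else reduces to the observation that $\phi$ outputs the value held by a strict majority of its arguments.
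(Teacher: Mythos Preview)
Your proof is correct. The singleton cases $R_O^a,R_Z^a$ are handled identically to the paper, but for $R^a_{\le 1}$ and $R^a_{\ge 2}$ you take a different and in fact more careful route than the paper does. The paper argues $R^a_{\le 1}$ by a case split: if the three tuples are not all distinct the majority is the repeated tuple, and if they are pairwise distinct then no coordinate can carry a $1$ in two of them, so the majority is the all-zero tuple. Your counting bound $|\{i:c_i\ge 2\}|\le\tfrac12\sum_i c_i\le\tfrac32$ subsumes both cases at once and, as you note, works for every arity. For $R^a_{\ge 2}$ the paper simply says the claim ``follows since $R^a_{\ge 2}$ is the complement of $R^a_{\le 1}$''; read as set complement this step is not valid in general (closure under a polymorphism does not pass to the complementary relation), and read as bitwise complement it only yields $R^a_{\ge 2}$ when $a=3$. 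Your dual zero-count $\lfloor 3(a-2)/2\rfloor\le a-2$ for $a\le 3$ is an explicit argument that covers all three arities uniformly and also makes transparent why the bound breaks at $a=4$. So your approach buys uniformity and a cleaner handling of $R^a_{\ge 2}$; the paper's approach is shorter but leans on a complementation step that, as written, needs unpacking.
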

\begin{proof}
  The claim of the lemma clearly holds for $R_O^a$ and $R_Z^a$ since
  each contains only one tuple.
  
  Thus, let $t_1$, $t_2$, and $t_3$ be three distinct tuples in $R^a_{\leq 1}$;
  note that if the tuples are not distinct then the majority
  of the three tuples is the one that occurs more than once and is
  therefore trivially part of $R^a_{\leq 1}$. Since each
  of the three tuples contains at most one $1$ and since all
  tuples are distinct, the majority of the three tuples will always be
  equal to the all-zero tuple, which is clearly part of $R^a_{\leq 1}$.

  Finally, the claim of the lemma for $R^a_{\geq 2}$ follows since
  $R_{\geq 2}^a$ is the complement of $R_{\leq 1}^a$.
\end{proof}
\fi
 
\section{\NP-hardness of \RB\ Restricted to Two Red or Blue Vectors}
\label{sec:nphardness2}

In this section, we show that \RB\ remains \NP-hard even when one of the two sets ($\VB$ or $\VR$) has size at most two. In particular, let us denote by \textsc{2Red-}\RB{} the restriction of \RB{} to instances in which the number of \red{} vectors is two (i.e., $|\VR|=2$), and by \textsc{2Blue-}\RB{} the restriction of \RB{} to instances in which the number of \blue{} vectors is two (i.e., $|\VB|=2$). 

\begin{theorem}
\label{the:2NPh}
\textsc{2Red-}\RB{} and \textsc{2Blue-}\RB{} are \NP-complete.
\end{theorem}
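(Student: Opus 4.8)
The plan is to settle membership quickly, reduce the whole task to hardness of a single one of the two restrictions, and then invest the real work in that reduction.

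\textbf{Membership and a duality between the two restrictions.}
Both problems are clearly in \NP: a certificate is a center $\vec c\in\{0,1\}^d$ together with a radius $r\in\Nat$, and verifying $\VB\subseteq B(\vec c,r)$ and $\VR\cap B(\vec c,r)=\emptyset$ takes polynomial time. I would next observe that the two restrictions are equivalent, so it suffices to prove hardness for one of them. Given an instance, replace every vector $\vec v$ by its bitwise complement $\bar{\vec v}$ and exchange the colors (complements of red vectors become blue, complements of blue vectors become red). Since $\delta(\vec c,\bar{\vec v})=d-\delta(\vec c,\vec v)$ for every center $\vec c$, a solution $(\vec c,r)$ of the original instance becomes the solution $(\vec c,d-r-1)$ of the new instance and conversely; here $r\le d-1$ for any solution, because a red vector forbids $r=d$, so $d-r-1\ge 0$. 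This maps \textsc{2Red-}\RB{} to \textsc{2Blue-}\RB{} and back while keeping the small color class of size exactly two, so it is enough to prove that \textsc{2Red-}\RB{} is \NP-hard.

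\textbf{A distance reformulation and the coordinate regions.}
For the reduction I would use that a solution exists iff there is a center $\vec c$ with $\max_{\vec b\in\VB}\delta(\vec c,\vec b)<\min(\delta(\vec c,\vec r_1),\delta(\vec c,\vec r_2))$, where $\vec r_1,\vec r_2$ are the two red vectors (the radius can then be taken as the left-hand maximum). Using $\delta(\vec v,\vec c)=|\vO(\vec c)|+|\vO(\vec v)|-2|\vO(\vec v)\cap\vO(\vec c)|$, each comparison of $\delta(\vec c,\vec b)$ against $\delta(\vec c,\vec r_i)$ becomes a weighted strict-majority condition relating $\vec c$, $\vec b$, and $\vec r_i$. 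The coordinates split, according to the agreement pattern of $\vec r_1$ and $\vec r_2$, into three regions: where both reds are $0$ (setting a center coordinate to $1$ raises both red distances and never shrinks any gap, so without loss of generality the center is $1$ here), where both reds are $1$ (symmetrically the center is $0$), and the symmetric difference of the reds, where the two red distances sum to a constant on this region and hence the center must be near-balanced to stay far from both. These three regions---``force-to-$1$'', ``force-to-$0$'', and ``balance''---are the building blocks of the reduction.

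\textbf{The reduction.}
I would reduce from a two-sided NP-hard constraint problem such as \textsc{Monotone Not-All-Equal 3-Sat}. Each variable is represented by a complementary pair of coordinates in the balance region, tied by a gadget forcing the two to take opposite values; this encodes a truth assignment while making the center carry exactly half of the balance region, so the balance pressure is automatically met for every assignment. The force-to-$1$ and force-to-$0$ regions serve as cheap padding of fixed center value. For each clause I add blue vectors whose supports are the three clause coordinates padded by an appropriate number of force-region coordinates, chosen so that the strict-majority condition arising from the $\vec r_1$-comparison collapses to exactly ``at least one of the three literals is true'', while the condition from the $\vec r_2$-comparison collapses to ``at least one of the three is false''. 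Then all blue constraints are simultaneously compatible with the two red constraints iff every clause is not-all-equal, i.e.\ iff the formula is satisfiable, and $d$ is polynomial in the input size.

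\textbf{Main obstacle.}
The technical heart is exactly this calibration. The two red vectors impose two coupled inequalities per blue vector, and I must choose the red vectors and the padding counts so that, once the force regions are fixed to their canonical values, each pair of inequalities decouples into precisely the intended pair of threshold conditions with no residual global interaction; in particular, enlarging the center to satisfy an ``at least one true'' requirement must not be exploitable to escape the balance requirement, and a valid radius must exist exactly when all the strict inequalities hold. Once this bookkeeping is verified in both directions, \NP-completeness of \textsc{2Blue-}\RB{} follows immediately from the duality of the first step.
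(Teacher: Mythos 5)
Your membership argument and your complementation duality between \textsc{2Red-}\RB{} and \textsc{2Blue-}\RB{} are both correct and complete; the duality (using $\delta(\vec c,\bar{\vec v})=d-\delta(\vec c,\vec v)$ and radius $d-r-1$) is in fact a cleaner packaging of what the paper does only at the very end for its specific construction. The region analysis (without loss of generality the center is $1$ where both reds are $0$ and $0$ where both reds are $1$, and the two red distances sum to a constant on the symmetric difference) is also sound.

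The genuine gap is that the hardness reduction itself is never constructed. Everything that makes the statement nontrivial --- the choice of the two red vectors, the gadget that forces the two coordinates of a variable pair to take opposite values using only blue vectors (each of which contributes \emph{two} coupled threshold inequalities, one against each red), the padding counts that make the $\vec r_1$-inequality collapse to ``at least one literal true'' and the $\vec r_2$-inequality to ``at least one literal false'', and the verification that a center satisfying all clause constraints cannot cheat by unbalancing itself on the symmetric-difference region --- is deferred to a ``calibration'' that you yourself identify as the main obstacle and do not carry out. As it stands there is no argument that such a calibration exists, so NP-hardness is not established. For comparison, the paper avoids this entirely by reducing from the \textsc{Minimum Radius} problem of Frances and Litman, where the ``all vectors in one ball of radius $n$'' structure is already given: after a Turing-reduction normalization forcing the all-zero vector into $V$ and a center with exactly $n$ ones, it suffices to append two coordinates and add the all-zero and all-one vectors as the two reds. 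If you want to salvage your route, you would need to write down the red vectors and the variable and clause gadgets explicitly and prove both directions of the equivalence with \textsc{Monotone Not-All-Equal 3-Sat}; alternatively, keep your duality step and replace the NAE-SAT reduction by the reduction from \textsc{Minimum Radius}.
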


\ifshort
\begin{proof}[Proof Sketch]
Proving membership in \NP{} is straightforward and is omitted. 
We begin with the following problem, which is known to be \NP-hard~\cite{litman}:

\pbDef{Minimum Radius (MR){}}{A set $V$ of $(2n)$-dimensional binary vectors where $n \in \Nat$.}{Is there a $(2n)$-dimensional center vector 
$\vec{c}$ such that $V \subseteq B(\vec{c}, n)$?}

Denote by Rest-MR the restriction of MR to instances in which $V$ contains the $(2n)$-dimensional all-zero vector $\vec{0}_{2n}$ and we ask for a center vector $\vec{c}$ that contains exactly $n$ ones. We first show that Rest-MR remains \NP-hard via a polynomial-time Turing-reduction from MR to Rest-MR. 

At this point, to complete the proof of the theorem it suffices to exhibit a polynomial-time reduction from Rest-MR to \textsc{2Red-}\RB{} (an analogous reduction is also be used for \textsc{2Blue-}\RB{}). Given an instance $V$ of Rest-MR, we construct an instance $V^+$ of 2Red-\RB{} such that $V$ is a \yes-instance of Rest-MR if and only if $V^+$ is a \yes-instance of \textsc{2Red-}\RB{}. Without loss of generality, we may assume that (the $(2n)$-dimensional all 1's vector) $\vec{1}_{2n} \in V$ since $V$ is a \yes-instance of Rest-MR if and only if $V \cup \{\vec{1}_{2n} \}$ is. The previous statement is true since $\vec{0}_{2n} \in V \subseteq B(\vec{c}, n)$, where $\vec{c}$ contains exactly $n$ 1's, if and only if $(V \cup \{\vec{1}_{2n}\}) \subseteq B(\vec{c}, n)$.

To construct $V^+$, we extend each $(2n)$-dimensional vector $\vec{v} \in V$ by adding two coordinates, that we refer to as coordinates $q_{2n+1}$ and $q_{2n+2}$, and setting their values to 0 and 1, respectively; let $\vec{v}^{+}$ denote the extension of $\vec{v}$. Let $V_b$ be the resulting set of (extended) vectors from $V$, and let $V_r=\{\vec{0}_{2n+2}, \vec{1}_{2n+2}\}$, 
where $\vec{0}_{2n+2}, \vec{1}_{2n+2}$ are the $(2n+2)$-dimensional all-zero and all-one vectors, respectively. Finally, let $V^+= V_b \cup V_r$. 
We can now show that $V$ is a \yes-instance of Rest-MR if and only if $V^+$ is a \yes-instance of 2Red-\RB. 
\end{proof}
\fi

\iflong
\begin{proof}
Proving membership in \NP{} is straightforward and is omitted. We will first show the \NP-hardness of \textsc{2Red-}\RB{} and explain how the proof can be slightly modified to yield the \NP-hardness of 2Blue-\RB.  

The following problem is known to be \NP-hard~\cite{litman}:

\pbDef{Minimum Radius (MR){}}{A set $V$ of $(2n)$-dimensional binary vectors where $n \in \Nat$.}{Is there a $(2n)$-dimensional center vector 
$\vec{c}$ such that $V \subseteq B(\vec{c}, n)$?}

Denote by Rest-MR the restriction of MR to instances in which $V$ contains the $(2n)$-dimensional all-zero vector $\vec{0}_{2n}$ and we ask for a center vector $\vec{c}$ that contains exactly $n$ ones. We first show that Rest-MR remains \NP-hard. To show that, we exhibit a polynomial-time Turing-reduction from MR to Rest-MR. 

For each $\vec{x} \in V$, define the set of vectors $V_{\vec{x}}$ obtained from $V$ by normalizing the vectors in $V$ so that $\vec{x}$ is the $\vec{0}_{2n}$ vector; that is, for each vector $\vec{y} \in V$, $V_{\vec{x}}$ contains the vector $\vec{y'}$ where 
the $i$-th coordinate of $\vec{y'}$ is 0 if and only if the $i$-th coordinate of $\vec{y}$ is equal to the $i$-th coordinate of $\vec{x}$. Observe that the map $\Pi_{\vec{x}}: \{0, 1\} ^{2n} \longrightarrow \{0, 1\}^{2n}$, where $\Pi_{\vec{x}}(\vec{y}) = \vec{y'}$ (as described above) is a bijection. Moreover, it preserves the Hamming distance: For any two vectors $\vec{u}, \vec{w} \in \{0, 1\}^{2n}$, we have $\delta(\vec{u}, \vec{w}) = \delta(\Pi_{\vec{x}}(\vec{u}),\Pi_{\vec{x}}(\vec{w}))$.  Clearly, each $V_{\vec{x}}$  is a set of $(2n)$-dimensional vectors that contains $\vec{0}_{2n}$, which is the vector $\Pi_{\vec{x}}(\vec{x})$.  
We claim that $V$ is a \yes-instance of MR if and only if there exists $\vec{x} \in V$ such that $V_{\vec{x}}$ is a \yes-instance of Rest-MR. This would show that MR is polynomial-time Turing reducible to Rest-MR.

One direction is easy to see. If $V_{\vec{x}}$ is a \yes-instance of Rest-MR for some $\vec{x} \in V$, then there is a vector $\vec{c'}$ containing exactly $n$ 1's such that $V_{\vec{x}} \subseteq B(\vec{c'}, n)$. By the properties of the mapping $\Pi_{\vec{x}}$, it is easy to see that 
 $V \subseteq B(\vec{c}, n)$, where $\vec{c} = \Pi_{\vec{x}}^{-1}(\vec{c'})$, and hence, $V$ is a \yes-instance of MR. To prove the converse, suppose that $V$ is \yes-instance of MR. Then there exists $\vec{c} \in \{0, 1\}^{2n}$ such that $V \subseteq B(\vec{c}, n)$. Let $\vec{x}$ be the vector in $V$ that is farthest away from $\vec{c}$; that is, $\vec{x}$ is a vector in $V$ with the maximum Hamming distance to $\vec{c}$. Let $V_{\vec{x}} = \{\Pi_{\vec{x}}(\vec{y}) \mid \vec{y} \in V\}$, and let $\vec{c'}=\Pi_{\vec{x}}(\vec{c})$. By the properties of $\Pi_{\vec{x}}$, it holds that $V_{\vec{x}} \subseteq B(\vec{c'}, n)$. Since $\vec{0}_{2n}= \Pi_{\vec{x}}(\vec{x}) \in V_{\vec{x}}$, it follows that $\vec{c'}$ contains at most $n$ ones. Moreover, since $\vec{x}$ is a vector in $V$ that is farthest away from $\vec{c}$, $\vec{0}_{2n}$ is a vector in $V_{\vec{x}}$ that is farthest away from $\vec{c'}$. If $\vec{c'}$ contains fewer than $n$ 1's, let $r$ be the number of 1's in $\vec{c'}$. Observe that $\delta(\vec{c'}, \vec{0}_{2n}) =r$ and that $r$ is the maximum distance between $\vec{c'}$ and any vector in $V_{\vec{x}}$. By flipping any $n-r$ 0's in $\vec{c'}$ (to 1's), we obtain a vector $\vec{c''}$ that contains exactly $n$ 1's and satisfying $V_{\vec{x}} \subseteq B(\vec{c''}, n)$. This shows that $V_{\vec{x}}$ is a \yes-instance of Rest-MR and completes the proof of the claim.
  
Therefore, to complete the proof of the theorem, it suffices to exhibit a polynomial-time (many-one) reduction from Rest-MR to \textsc{2Red-}\RB{}. Let $V$ be an instance of Rest-MR. We will construct in polynomial time an instance $V^+$ of 2Red-RB{} such that $V$ is a \yes-instance of Rest-MR if and only if $V^+$ is a \yes-instance of \textsc{2Red-}\RB{}. Without loss of generality, we may assume that (the $(2n)$-dimensional all 1's vector) $\vec{1}_{2n} \in V$ since $V$ is a \yes-instance of Rest-MR if and only if $V \cup \{\vec{1}_{2n} \}$ is. The previous statement is true since $\vec{0}_{2n} \in V \subseteq B(\vec{c}, n)$, where $\vec{c}$ contains exactly $n$ 1's, if and only if $(V \cup \{\vec{1}_{2n}\}) \subseteq B(\vec{c}, n)$.

To construct $V^+$, we extend each $(2n)$-dimensional vector $\vec{v} \in V$ by adding two coordinates, that we refer to as coordinates $q_{2n+1}$ and $q_{2n+2}$, and setting their values to 0 and 1, respectively; let $\vec{v}^{+}$ denote the extension of $\vec{v}$. (It does not really matter which coordinate is set to 0 and which is set to 1 provided that it is done consistently over all the vectors.) Let $V_b$ be the resulting set of (extended) vectors from $V$, and let $V_r=\{\vec{0}_{2n+2}, \vec{1}_{2n+2}\}$, 
where $\vec{0}_{2n+2}, \vec{1}_{2n+2}$ are the $(2n+2)$-dimensional all-zero and all-one vectors, respectively. Finally, let $V^+= V_b \cup V_r$. We claim that $V$ is a \yes-instance of Rest-MR if and only if $V^+$ is a \yes-instance of 2Red-\RB. 

In effect, suppose that $V$ is a \yes-instance of Rest-MR. Then there exists a $(2n)$-dimensional vector $\vec{c}$ containing exactly $n$ 1's such that $V \subseteq B(\vec{c}, n)$. Consider the $(2n+2)$-dimensional extension vector $\vec{c}^{+}$ of $\vec{c}$, whose $q_{2n+1}$ coordinate is 0 and $q_{2n+2}$ coordinate is 1. Observe that, for every $\vec{v} \in V$, we have $\delta(\vec{c}, \vec{v}) = \delta(\vec{c}^+, \vec{v}^+)=n$. It follows that $V_b \subseteq B(\vec{c}^+, n)$. Moreover, we have
 $\delta(\vec{c}^+, \vec{0}_{2n+2})=\delta(\vec{c}^+, \vec{1}_{2n+2})=n+1$, and hence, $V_r \cap B(\vec{c}^+, n)=\emptyset$. It follows that $V^+$ is a \yes-instance of 2Red-\RB. To prove the converse, suppose that $V^+$ is a \yes-instance of 2Red-\RB. Then there exists $\vec{c}^+ \in \{0,1\}^{2n+2}$ and $r \in \Nat$ such that $V_b \subseteq B(\vec{c}^+, r)$ and $V_r \cap B(\vec{c}^+, r)=\emptyset$. Since $\vec{0}_{2n}^{+}$ is in $B(\vec{c}^+, r)$ and $\vec{0}_{2n+2}$ is not, it follows that coordinate $q_{2n+2}$ of $\vec{c}^+$ is 1. Similarly, since $\vec{1}_{2n}^{+}$ is in  $B(\vec{c}^+, r)$ and $\vec{1}_{2n+2}$ is not, it follows that coordinate $q_{2n+1}$ of $\vec{c}^+$ is 0.
Let $\vec{c}$ be the restriction of $\vec{c}^+$ to the first $2n$ coordinates. If $\vec{c}$ contains fewer than $n$ 1's, then its distance from $\vec{1}_{2n}^{+}$ would not be less than its distance from $\vec{0}_{2n+2}$, contradicting the fact that 
$\vec{1}_{2n}^{+} \in V_b$ and $\vec{0}_{2n+2} \in V_r$. The argument is similar if  $\vec{c}$ contains more than $n$ 1's. It follows from the above that  $\vec{c}$ contains exactly $n$ 1's and that $r=n$. Furthermore, since coordinates $q_{2n+1}$ and $q_{2n+2}$ of  $\vec{c}^+$ are $0$ and $1$, respectively, which match the values of these coordinates for each $\vec{v}^+$, where $\vec{v} \in V$, it follows that, for every $\vec{v} \in V$, we have $\delta(\vec{c}, \vec{v}) = \delta(\vec{c}^+, \vec{v}^+)$. Since $V_b \subseteq B(\vec{c}^+, r=n)$, it follows that $V \subseteq B(\vec{c}, n)$, where $\vec{c}$ contains exactly $n$ 1's. This shows that $V$ is a \yes-instance of Rest-MR and completes the proof.

Finally, to prove the \NP-hardness of 2Blue-\RB, we make the following observation. Denote by $\overline{\vec{c}^+}$ the bit-wise \emph{complement} vector of $\vec{c}^+$ (i.e., the vector obtained from $\vec{c}^+$ by flipping each coordinate of $\vec{c}^+$).  It is straightforward to see that $V_b \subseteq B(\vec{c}^+, n)$ and $V_r \cap  B(\vec{c}^+, n) = \emptyset$ if and only if $V_r \subseteq B(\overline{\vec{c}^+}, n+1)$ and $V_b \cap  B(\vec{c}^+, n+1) =\emptyset$. Therefore, by proceeding as in the reduction above but switching the sets $V_r$ and $V_b$ at the end, we obtain a polynomial-time Turing reduction from MR to  \textsc{2Blue-}\RB{} thus showing the \NP-hardness of  2Blue-\RB.  
\end{proof}
\fi

\section{Basic Parameterizations for \RB}
\label{sec:fptpoints}

We follow up on Theorem~\ref{the:2NPh} by considering the two remaining obvious parameterizations of the problem, notably $d$ and $|V|$. The former case is trivial since it bounds the size of the input.

\begin{observation}
\label{obs:d}
\RB{} is \FPT\ parameterized by $d$.
\end{observation}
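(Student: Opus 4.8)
The plan is to exploit the bound of $2^d$ on the number of distinct candidate centers, as already hinted in the introduction, and to pair each candidate center with a canonical (optimal) radius. First I would observe that any solution consists of a center $\vec{c}\in\{0,1\}^d$ together with a radius $r\in\Nat$, and that $\{0,1\}^d$ contains exactly $2^d$ vectors; we can therefore afford to iterate over every possible center explicitly, at the cost of a factor of $2^d$ in the running time.

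The key step is to argue that, once the center $\vec{c}$ is fixed, there is a canonical best choice of radius, namely $r_{\vec{c}}:=\max_{\vec{b}\in\VB}\delta(\vec{c},\vec{b})$ (with $r_{\vec{c}}=0$ when $\VB=\emptyset$, a trivial boundary case). This value is forced from below, since $\VB\subseteq B(\vec{c},r)$ requires $r\geq\delta(\vec{c},\vec{b})$ for every blue vector $\vec{b}$; and it is simultaneously the most permissive choice from the red side, because $B(\vec{c},r)\subseteq B(\vec{c},r')$ whenever $r\leq r'$, so enlarging $r$ beyond $r_{\vec{c}}$ can only pull additional red vectors into the ball. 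Consequently the center $\vec{c}$ admits a feasible radius if and only if it is feasible with radius exactly $r_{\vec{c}}$, which happens precisely when $\delta(\vec{c},\vec{r})>r_{\vec{c}}$ for every red vector $\vec{r}\in\VR$. The algorithm then answers $\yes$ if some center $\vec{c}$ passes this red-exclusion test, and $\no$ otherwise; its correctness is immediate from this equivalence.

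For the running time, computing a single Hamming distance $\delta(\vec{c},\vec{v})$ takes $\bigoh(d)$ time, so for a fixed center we can determine $r_{\vec{c}}$ and run the red-exclusion test in $\bigoh(|V|\cdot d)$ time. Summing over all $2^d$ candidate centers yields a total running time of $\bigoh(2^d\cdot|V|\cdot d)$, which is of the form $f(d)\cdot|V|^{\bigoh(1)}$ and hence witnesses fixed-parameter tractability. I do not anticipate any genuine obstacle here: the only point deserving (one line of) care is the correctness of the canonical-radius argument, i.e.\ the monotonicity observation that fixing the radius to the smallest value consistent with the blue vectors is without loss of generality.
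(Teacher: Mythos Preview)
Your proposal is correct and essentially matches the paper's own justification: the paper simply notes that the case is trivial because $d$ bounds the size of the input (equivalently, there are at most $2^d$ candidate centers, as also mentioned in the introduction), and your argument just spells out this brute-force enumeration in more detail.
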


Next, we give a fixed-parameter algorithm for \RB{} parameterized by the total number of vectors.

\begin{theorem}
\label{thm:fptpoints}
\RB{} is \FPT{} parameterized by the number of \red{} vectors plus \blue{} vectors.
\end{theorem}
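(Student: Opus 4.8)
The plan is to reduce the problem to \textsc{Integer Linear Programming} (ILP) with a number of variables bounded by a function of the parameter, and then invoke the fixed-parameter tractability of ILP-feasibility in the number of variables (Lenstra's algorithm and its refinements). Let $k=|\VR|+|\VB|$ and enumerate the input vectors as $\vec{v}_1,\dots,\vec{v}_k$. The crucial combinatorial observation is that the $d$ coordinates can be bucketed into at most $2^k$ \emph{types}: for a coordinate $i\in[d]$, let its type be the column $\tau(i)=(\vec{v}_1[i],\dots,\vec{v}_k[i])\in\{0,1\}^k$. For each type $t\in\{0,1\}^k$ let $n_t$ be the number of coordinates of type $t$; all $n_t$ are computable in polynomial time.

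The key point is that a candidate center $\vec{c}$ influences every distance $\delta(\vec{c},\vec{v}_j)$ only through how many coordinates of each type it sets to $1$. Accordingly I introduce, for each type $t$, an integer variable $x_t$ with $0\le x_t\le n_t$, intended to encode the number of type-$t$ coordinates on which $\vec{c}$ takes value $1$. A coordinate of type $t$ contributes to $\delta(\vec{c},\vec{v}_j)$ exactly when $\vec{c}$ and $\vec{v}_j$ disagree there, so the total distance becomes the linear expression
\[
 \delta(\vec{c},\vec{v}_j)=\sum_{t:\,t[j]=1}(n_t-x_t)+\sum_{t:\,t[j]=0}x_t.
\]
Introducing one further variable $r$ with $0\le r\le d$, I add the constraint $\delta(\vec{c},\vec{v}_j)\le r$ for every blue $\vec{v}_j$ and $\delta(\vec{c},\vec{v}_j)\ge r+1$ for every red $\vec{v}_j$ (the strict inequality over $\Nat$ becomes $\ge r+1$). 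The resulting ILP has at most $2^k+1$ variables, $\bigoh(2^k)$ constraints, and all coefficients bounded by $d$.

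Since the number of variables is bounded by the function $2^k+1$ of the parameter, Lenstra's algorithm decides feasibility of this ILP in time $f(k)\cdot|I|^{\bigoh(1)}$, giving the desired fixed-parameter algorithm. From any feasible assignment one recovers an actual witness by choosing, for each type $t$, an arbitrary subset of $x_t$ coordinates of that type on which to place a $1$ in $\vec{c}$; correctness then follows because the distance identity above is independent of \emph{which} coordinates of a type receive the $1$, so the returned $\vec{c}$ and $r$ satisfy $\VB\subseteq B(\vec{c},r)$ and $\VR\cap B(\vec{c},r)=\emptyset$.

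The main obstacle I expect is not the ILP machinery, which is standard, but rather identifying the coordinate-typing reduction and verifying that the distance constraints are genuinely linear in the per-type counts $x_t$ — that is, that permuting which specific coordinates of a given type are set to $1$ never changes any distance. Once this invariance is established, the bound of $2^k$ on the number of types and the fixed-parameter tractability of ILP in the number of variables do the rest.
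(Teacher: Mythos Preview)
Your proposal is correct and follows essentially the same approach as the paper: bucket the $d$ coordinates into at most $2^k$ column types, introduce an integer variable per type counting how many ones the center places there, express each Hamming distance linearly in these variables, and invoke Lenstra's algorithm. The only cosmetic differences are that the paper uses two variables $x_i^0,x_i^1$ per type (yours uses one, which suffices since $x_i^0=n_t-x_t$) and encodes the separation by pairwise constraints $\delta(\vec{c},\vec{v}_{\text{blue}})<\delta(\vec{c},\vec{v}_{\text{red}})$ rather than via an explicit radius variable~$r$; both formulations are clearly equivalent.
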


\ifshort
\begin{proof}[Proof Sketch]
Let $k$ be the total number of \red{} vectors plus \blue{} vectors. 
For convenience, we will consider the matrix representation of the input, in which the vectors are represented as the rows of a matrix $M$. 
Observe that, since there are $k$ rows of binary coordinates in $M$, the total number of different column configurations of $M$ is at most $2^k$. 
The idea behind the fixed-parameter algorithm is to encode the problem as an instance of an Integer Linear Program (ILP) with $2^{k+1}$ variables---two for every column type. One such variable will capture the number of ``$1$''s the center uses in the columns belonging to that type, while the other simply captures the number of ``$0$''s of the center in these columns. The main reason why this suffices is that the exact positions of these ``$0$''s and ``$1$''s within these column types is irrelevant when considering the distance between an arbitrary point and a center. The constraints simply ensure that the distance between the center and each blue point is strictly smaller than the distance between the center and each red point.

It is well known that such an ILP instance can then be solved in \FPT{} time using the classical result of Lenstra~\cite{Lenstra83,Kannan87,FrankTardos87}. Once the coordinates of the desired (hypersphere) center have been determined, the radius of the hypersphere can be set as the maximum Hamming distance between the center and the \blue{} vectors in $M$.  
\end{proof}
\fi

\iflong
\begin{proof}
Let $k$ be the total number of \red{} vectors plus \blue{} vectors. For convenience, we will consider the matrix representation of the input, in which the vectors are represented as the rows of a matrix $M$. 
Observe that, since there are $k$ rows of binary coordinates in $M$, the total number of different column configurations of $M$ is at most $2^k$. Let $T=\{t_1, \ldots, t_s\}$, where $s \leq 2^k$, be the set of (distinct) columns in $M$, and let $n_i$ be the number of columns in $M$ that are equal to $t_i$, for $i \in [s]$. The idea behind the \FPT{} algorithm is to encode the problem as an instance of an Integer Linear Program (ILP) with $f(k)$ variables, where $f$ is a function of $k$, and where the variables determine the coordinates of the center $\vec{c}$ of a hypersphere $B$ that contains all \blue{} vectors and excludes all the \red{} ones. It is well known that such an ILP instance can then be solved in \FPT{} time using the classical result of Lenstra~\cite{Lenstra83,Kannan87,FrankTardos87}. Once the coordinates of $\vec{c}$ have been determined, the radius of $B$ can be set as the maximum Hamming distance between $\vec{c}$ and the \blue{} vectors in $M$.  

For each column type $t_i \in T$, let $C_i$ be the set of coordinates (i.e., column indices of $M$) whose columns are equal to $t_i$. We introduce two integer-valued variables, $x_{i}^{0}, x_{i}^{1}$, where 
$x_{i}^{0}$ is the number of coordinates in $C_i$ for which $\vec{c}$ has value 0, and  $x_{i}^{1}$ is that for which $\vec{c}$ has value 1. Clearly, the total number of variables in this ILP instance is $2 \cdot s \leq 2^{k+1}$. Observe that, once $x_{i}^{0}$ and $x_{i}^{1}$ have been determined, knowing the specific $x_{i}^{0}$-many coordinates in $C_i$ for which 
$\vec{c}$ has value 0 and the specific $x_{i}^{1}$-many coordinates in $C_i$ for which $\vec{c}$ has value 1 is unnecessary. That is, any partitioning of the values of the coordinates of $\vec{c}$ that are in $C_i$ into  $x_{i}^{0}$ 0's and $x_{i}^{1}$ 1's will result in the same Hamming distance between the 
restriction of $\vec{c}$ to the coordinates of $C_i$ and the restriction of any vector in $M$ to these coordinates. This is true since all columns corresponding to the coordinates in $C_i$ are equal. More specifically, for a vector $\vec{v} \in M$, denote by $\delta_{C_i}(\vec{c}, \vec{v})$ the Hamming distance between the restriction of $\vec{c}$ and $\vec{v}$ to the coordinates in $C_i$. Then, for any partitioning of the values of the coordinates of $\vec{c}$ that are in $C_i$ into $x_{i}^{0}$ 0's and $x_{i}^{1}$ 1's, it holds that $\delta_{C_i}(\vec{c}, \vec{v})= x_{i}^{1}$ if the coordinates of $\vec{v}$ that are in $C_i$ have value 0, and $\delta_{C_i}(\vec{c}, \vec{v})= n_i-x_{i}^{1}$ if the coordinates of $\vec{v}$ that are in $C_i$ have value 1. 
 
Therefore, we can construct the desired ILP instance as follows. For each \blue{} vector $\vec{v}_{blue} \in M$ and for each \red{} vector $\vec{v}_{red} \in M$, we add the following constraint to the ILP instance, which stipulates that $\delta(\vec{c}, \vec{v}_{blue}) < \delta(\vec{c}, \vec{v}_{red})$:
\[ \sum_{i=1}^{s} \delta_{C_i}(\vec{c}, \vec{v}_{blue}) +1 \leq   \sum_{i=1}^{s} \delta_{C_i}(\vec{c}, \vec{v}_{red}).\]
Observe that the ILP instance has no objective function, and that we only need to determine the feasibility of the ILP. If no solution to the above ILP instance exists, then the algorithm reports that no solution exists for the instance of \RB. Otherwise, the solution to the above ILP instance gives the values of the variables $x_{i}^{0}, x_{i}^{1}$, for $i\in [s]$, and hence, by the above discussion, determines the coordinates of the center $\vec{c}$ of a hypersphere $B$ satisfying the constraints of the ILP instance. As such, for each \blue{} vector $\vec{v}_{blue} \in M$ and for each \red{} vector $\vec{v}_{red} \in M$, it holds that  
$\delta(\vec{c}, \vec{v}_{blue}) < \delta(\vec{c}, \vec{v}_{red})$. By choosing $r = \max \{\delta(\vec{c}, \vec{v}_{blue})  \mid \vec{v}_{blue} \in M\}$, the hypersphere $B(\vec{c}, r)$ contains all \blue{} vectors in $M$ and excludes all \red{} ones.
Since the ILP instance has at most $2^{k+1}$ variables, and hence can be solved in \FPT{} time in $k$~\cite{Lenstra83,Kannan87,FrankTardos87}, the result follows.
\end{proof}
\fi

\section{The Complexity of \RB\ with Conciseness}

In this section, we perform a detailed analysis of the complexity of \RB\ with respect to conciseness.
We will distinguish between data conciseness and
explanation conciseness. Data conciseness is the maximum number
of $1$'s appearing in any red or blue vector of the instance $I$ and is denoted $\icon(I)$; that is, $\icon(I)=\max_{\vec{v}\in
  \VR\cup \VB}\con(\vec{v})$. The explanation conciseness on the other hand
is the maximum number of $1$'s appearing in the sought-after vector $\vec{c}$.
To capture this aspect of the problem, we define a new version of \RB{} that imposes a bound $\scp$ on the explanation conciseness of the vector $\vec{c}$.
Formally, let \CRB{} be defined analogously to \RB{}, but where we are additionally given an
integer $\scp$ and the question is whether there exists a vector $\vec{c} \in D^{d}$ of conciseness at most $\scp$ and $r \in \Nat$ such that $\VB \subseteq B(\vec{c}, r)$ and $\VR \cap B(\vec{c}, r) = \emptyset$.

\subsection{Data Conciseness}

In this subsection, we analyse the parameterized complexity of \RB{}
parameterized by the conciseness of the data $\icon(I)$.
We start by showing that instances $I$ satisfying $\icon(I)\leq 3$ can be solved
in polynomial-time.
\begin{theorem}
  The restriction of \RB{} to instances $I$ satisfying
  $\icon(I)\leq 3$ can be solved in time $\bigoh(|V|d)$.
\end{theorem}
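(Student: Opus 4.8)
The plan is to solve each instance by reducing it, for only $\bigoh(1)$ many guesses, to a Boolean CSP that is closed under a fixed polymorphism and can then be dispatched by \Cref{pro:solve-csp} (or a direct linear-time routine). Write $m=|\vO(\vec{c})|$ for the unknown number of ones of the center, and for a vector $\vec{v}$ set $a_v=\con(\vec{v})\le 3$ and $s_v=|\vO(\vec{v})\cap\vO(\vec{c})|$. Using the identity $\delta(\vec{v},\vec{c})=m+a_v-2s_v$ recalled in \Cref{sec:prelim}, the requirements ``$\vec{b}\in B(\vec{c},r)$'' for blue $\vec{b}$ and ``$\vec{r}\notin B(\vec{c},r)$'' for red $\vec{r}$ become, after substituting $t:=r-m$,
$$a_b-2s_b\le t\quad\text{(blue)}\qquad\text{and}\qquad a_r-2s_r\ge t+1\quad\text{(red)}.$$
The crucial effect of this substitution is that $t$ no longer refers to the global quantity $m$: each inequality now constrains only the at most three coordinates in $\vO(\vec{v})$, while $m$ (and hence $r=m+t$) is free and can be recovered from any satisfying center afterwards.

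Since $a_v\le 3$, the value $a_v-2s_v$ always lies in $\{-3,\dots,3\}$, so I only need to try the integer guesses $t\in\{-3,-2,-1,0,1,2\}$; any $t$ outside this window would force $\VB=\emptyset$ or $\VR=\emptyset$, which are trivial edge cases. For a fixed $t$ I would introduce one Boolean variable $x_i$ per coordinate $i\in[d]$ (encoding $\vec{c}[i]$) and turn each blue vector into the cardinality constraint ``at least $\lceil(a_b-t)/2\rceil$ of the variables in $\vO(\vec{b})$ are $1$'' and each red vector into ``at most $\lfloor(a_r-t-1)/2\rfloor$ of the variables in $\vO(\vec{r})$ are $1$''. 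A vector of empty support imposes only a condition on $t$ and is used to discard infeasible guesses; a threshold outside $\{0,\dots,a_v\}$ similarly renders the current $t$ infeasible (discard it) or the constraint vacuous (drop it). Thus each guess yields a Boolean CSP with $d$ variables and at most $|V|$ constraints, each of arity at most three and each of the form $R^a_{\geq x}$ or $R^a_{\leq x}$ with $a\le 3$ (with $R_O^a$ and $R_Z^a$ as the extreme cases).

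The main obstacle—and the technical heart of the argument—is that these threshold relations are \emph{not} all closed under a single operation: in particular $R^3_{\geq 1}$ and its complement $R^3_{\leq 2}$ are closed under the Boolean majority operation under neither. The plan is to show, by a short case analysis over $t$, that for each individual guess the relations that can actually occur do line up under one operation. For $t\in\{-1,0\}$ every occurring relation is among $R_O^a$, $R_Z^a$, $R^a_{\geq 2}$, $R^a_{\leq 1}$, all majority-closed by \Cref{lem:closed-maj} (the lone exception $R^2_{\geq 1}$ arising at $t=0$ is the complement of $R_Z^2$ and hence also majority-closed, since the Boolean majority operation is self-dual); here I invoke \Cref{pro:solve-csp}. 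For $t\in\{1,2\}$ the only occurring relations are $R^a_{\geq 1}$ and $R_Z^a$, all closed under the binary OR operation; and for $t\in\{-3,-2\}$ they are $R_O^a$ and $R^a_{\leq x}$, all closed under the binary AND operation. The key observation is precisely that the two non-majority-closed relations $R^3_{\geq 1}$ and $R^3_{\leq 2}$ surface only at the extreme guesses $t\in\{1,2\}$ and $t\in\{-3,-2\}$ respectively, where the remaining relations conveniently align with OR (resp.\ AND). These last two regimes are monotone and admit a trivial linear-time solution (force the $R_Z^a$ / $R_O^a$ variables, then set all remaining variables greedily to $1$ / to $0$ and verify the clauses); alternatively they are tractable by Schaefer's theorem (\Cref{thm:schaefer}).

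It remains to assemble correctness and the running time. A satisfying assignment of the CSP for some $t$ defines a center $\vec{c}$; setting $m=|\vO(\vec{c})|$ and $r=m+t$ converts the constraints back into $\VB\subseteq B(\vec{c},r)$ and $\VR\cap B(\vec{c},r)=\emptyset$, and one checks $r\ge 0$ because whenever a negative $t$ lies in its feasible regime every blue vector forces at least $-t$ ones into $\vec{c}$. Conversely, any solution $(\vec{c},r)$ of \RB{} yields a satisfying assignment for $t=r-m$. For the running time, reading the input and extracting each $\vO(\vec{v})$ costs $\bigoh(|V|d)$; for each of the $\bigoh(1)$ guesses the CSP has $\bigoh(d)$ variables and $\bigoh(|V|)$ constraints of constant arity and is solved in $\bigoh(|V|+d)$ time, so the overall bound is $\bigoh(|V|d)$ as claimed.
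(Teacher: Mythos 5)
Your proof is correct and follows essentially the same route as the paper's: the paper likewise performs a constant-size case distinction on a normalized distance threshold (there $M_r=\min_{\vec{r}\in\VR}|\vO(\vec{r})|-2|\vO(\vec{r})\cap\vO(\vec{c})|$, which corresponds to your $t+1$), reduces the two middle cases to majority-closed Boolean CSPs with exactly the relations you list, and disposes of the two extreme regimes by the same monotone/greedy argument that you phrase as AND-/OR-closure. One justification needs repair: the set complement of a majority-closed relation need not be majority-closed (the complement of $R_O^3$ is $R^3_{\leq 2}$, which you yourself observe is not majority-closed), so your self-duality argument for $R^2_{\geq 1}$ is invalid as stated; the claim itself still holds because every binary Boolean relation is closed under the majority operation (or by directly checking the three tuples of $R^2_{\geq 1}$), so nothing in the argument actually breaks.
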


\ifshort
\begin{proof}[Proof Sketch]
{
  The main idea behind the algorithm is a case distinction
  based on the minimum distance $M_r$ of any red vector from a solution vector
  $\vec{c}$. Note first that if we fix a solution
  $\vec{c}$ for $I$, then $|\vO(\vec{v})|-2|\vO(\vec{v})\cap
  \vO(\vec{c})|$ can be used instead of the Hamming distance to compare
  the distances of two vectors from $\vec{c}$. Altogether, we obtain four cases
  for $M_r=\min_{\vec{r} \in V_r}|\vO(\vec{r})|-2|\vO(\vec{r})\cap
  \vO(\vec{c})|$
  , i.e., (1) $M_r\leq -1$, (2) $M_r=0$, (3) $M_r=1$, and (4) $M_r\geq
  2$. While (1) and (4) are trivial to solve, (2) and (3) are solved
  via a reduction to a Boolean CSP instance that can be solved in
  polynomial-time because its relational language is closed under a
  majority operation. To illustrate the ideas for (1), note that if $M_r\leq -1$,
  then it is easy to show that any solution vector must be $1$ on all
  coordinates that has a $1$ for any blue vector. But this means that
  the instance has a solution if and only if the vector $\vec{c}$ that is $1$
  at all coordinates in $\bigcup_{v \in \VB}\vO(\vec{v})$ and
  otherwise $0$ is a solution for $I$; this is because setting
  the coordinates outside of $\bigcup_{v \in \VB}\vO(\vec{v})$ to $1$
  only reduces the distance of $\vec{c}$ to vectors in $\VR$.
}
\end{proof}
\fi

\iflong
\begin{proof}
  Let $I=(\VR,\VB,d)$ be an instance of \RB{} with $\icon(I)\leq 3$ and
  let $V=\VR\cup \VB$. Note first that for one fixed solution
  $\vec{c}$ for $I$, $|\vO(\vec{v})|-2|\vO(\vec{v})\cap
  \vO(\vec{c})|$ can be used instead of the Hamming distance to compare
  the distances of two vectors from $\vec{c}$; here
  $\vec{v} \in \VR\cup \VB$.

  Let $m_r(\vec{c})=\min_{\vec{v} \in \VR}|\vO(\vec{v})|-2|\vO(\vec{v})\cap
  \vO(\vec{c})|$. Note that since $\icon(I)\leq 3$, it holds that
  $-3 \leq m_r(\vec{c})\leq 3$ for every solution $\vec{c}$ of $I$. We
  now show that for every integer $M_r$ with $-3\leq M_r\leq 3$, we
  can decide whether $I$ has a solution $\vec{c}$
  with $M_r=m_r(\vec{c})$ in time at most $\bigoh(|V|d)$.

  Note that if $\vec{c}$ is a solution for $I$ with $M_r=m_r(\vec{c})$,
  then $M_b=m_b(\vec{c})=\max_{\vec{v} \in
    \VB}|\vO(\vec{v})|-2|\vO(\vec{v})\cap \vO(\vec{c})|<M_r$, which
  will be useful in the following.
  We distinguish the following cases.
  \begin{itemize}
  \item \textbf{($\mathbf{M_r \leq -1}$ and $\mathbf{M_b \leq -2}$):}
    If $I$ contains a blue vector
    $\vec{b}$ with $|\vO(\vec{b})|=1$, we can immediately reject
    $I$; this is because for any such blue vector $\vec{b}$, we have
    $|\vO(\vec{b})|-2|\vO(\vec{b})\cap \vO(\vec{c})|\geq -1\geq M_b$.
    Similarly, it follows that $\vO(\vec{b}) \subseteq
    \vO(\vec{c})$ for every blue vector $\vec{b} \in \VB$ and any
    solution $\vec{c}$. But then, a solution $\vec{c}$ for $I$ must satisfy
    $\bigcup_{v \in \VB}\vO(\vec{v}) \subseteq \vec{c}$ and therefore
    $I$ has a solution if and only if the vector $\vec{c}$ that is $1$
    at all coordinates in $\bigcup_{v \in \VB}\vO(\vec{v})$ and
    otherwise $0$ is a solution for $I$; this is because setting
    the coordinates outside of $\bigcup_{v \in \VB}\vO(\vec{v})$ to $1$
    only reduces the distance of $\vec{c}$ to vectors in $\VR$.
    Therefore, we can solve $I$ by checking whether $\vec{c}$ is a
    solution for $I$, which can be achieved in time $\bigoh(|V|d)$.
  \item \textbf{($\mathbf{M_r = 0}$ and $\mathbf{M_b \leq -1}$):}
    Then, the following holds for every
    solution $\vec{c}$ for $I$:
    \begin{itemize}
    \item[1)] $\vO(\vec{b}) \subseteq \vO(\vec{c})$ for every blue vector
      $\vec{b} \in \VB$ with $|\vO(\vec{b})|\leq 2$; this is because
      $|\vO(\vec{b})|-2|\vO(\vec{b})\cap\vO(\vec{c})|\geq 0=M_r$, whenever
      $|\vO(\vec{b})|=1$ and $|\vO(\vec{b})\cap\vO(\vec{c})|<1$ or
      $|\vO(\vec{b})|=2$ and $|\vO(\vec{b})\cap\vO(\vec{c})|<2$.
    \item[2)] $|\vO(\vec{b})\cap\vO(\vec{c})|\geq 2$ for every blue vector $\vec{b}
      \in \VB$ with $|\vO(\vec{b})|=3$.
    \item[3)] $|\vO(\vec{r})\cap\vO(\vec{c})|\leq 0$ for every red vector $\vec{r}
      \in \VR$ with $|\vO(\vec{r})|=1$; this is because 
      $|\vO(\vec{r})|-2|\vO(\vec{r})\cap\vO(\vec{c})|<0=M_r$, if
      $|\vO(\vec{r})|=1$ and $|\vO(\vec{b})\cap\vO(\vec{c})|>0$.
    \item[4)] $|\vO(\vec{r})\cap\vO(\vec{c})|\leq 1$ for every red vector $\vec{r}
      \in \VR$ with $|\vO(\vec{r})|\geq 2$.
    \end{itemize}
    Moreover, it is easy to see that a vector $\vec{c}$ is a solution
    for $I$ if and only if it satisfies 1)--4) above. Therefore, it is
    sufficient to decide whether there is a vector that satisfies 1)--4). To achieve
    this, we will reduce the problem to a Boolean CSP instance that is
    closed under the unique Boolean majority operation
    and can therefore be solved polynomial-time.

    Let $I'=(V',C)$ be the CSP instance obtained from $I$ as follows.
    \begin{itemize}
    \item $V'$ contains one Boolean variable $v_i$ for every coordinate $i$
      with $1 \leq i \leq d$.
    \item For every blue vector $\vec{b} \in \VB$, $C$ contains the
      constraint $c_{\vec{b}}$ with scope $\{v_i \SM i \in
      \vO(\vec{b})\SE$ and relation $R_O^1$, $R_O^2$,
      and $R^3_{\geq 2}$ if $|\vO(\vec{b})|=1$, $|\vO(\vec{b})|=2$,
      and $|\vO(\vec{b})|=3$, respectively.
    \item For every red vector $\vec{r} \in \VR$, $C$ contains the
      constraint $c_{\vec{r}}$ with scope $\{v_i \SM i \in
      \vO(\vec{r})\SE$ and relation $R_Z^1$, $R^2_{\leq 1}$,
      and $R^3_{\leq 1}$ if $|\vO(\vec{r})|=1$, $|\vO(\vec{r})|=2$,
      and $|\vO(\vec{r})|=3$, respectively.
    \end{itemize}
    It is now straightforward to verify that there is a
    $d$-dimensional Boolean vector $\vec{c}$ satisfying 1)--4) above if and
    only if $I'$ has a solution. Clearly, $I'$ can be constructed from
    $I$ in time $\bigoh(|V|d)$. Moreover, because of
    Proposition~\ref{pro:solve-csp}, we can solve $I'$ in
    time $\bigoh(|V'|+|C|)=\bigoh(d+|V|)$. Therefore, we can solve $I$
    in time $\bigoh(|V|d)$.
  \item \textbf{($\mathbf{M_r = 1}$ and $\mathbf{M_b \leq 0}$):}
    The following holds for every
    solution $\vec{c}$ for $I$:
    \begin{itemize}
    \item[1)] $|\vO(\vec{b}) \cap \vO(\vec{c})|\geq 1$ for every blue vector
      $\vec{b} \in \VB$ with $|\vO(\vec{b})|\leq 2$.
    \item[2)] $|\vO(\vec{b})\cap\vO(\vec{c})|\geq 2$ for every blue vector $\vec{b}
      \in \VB$ with $|\vO(\vec{b})|=3$.
    \item[3)] $|\vO(\vec{r})\cap\vO(\vec{c})|\leq 0$ for every red vector $\vec{r}
      \in \VR$ with $|\vO(\vec{r})|\leq 2$.
    \item[4)] $|\vO(\vec{r})\cap\vO(\vec{c})|\leq 1$ for every red vector $\vec{r}
      \in \VR$ with $|\vO(\vec{b})|\geq 2$.
    \end{itemize}
    As in the previous case, it is straightforward to verify that a
    vector $\vec{c}$ is a solution
    for $I$ if and only if it satisfies 1)--4). 

    Let $I'=(V',C)$ be the CSP instance obtained from $I$ as follows.
    \begin{itemize}
    \item $V'$ contains one Boolean variable $v_i$ for every coordinate $i$
      with $1 \leq i \leq d$.
    \item For every blue vector $\vec{b} \in \VB$, $C$ contains the
      constraint $c_{\vec{b}}$ with scope $\{v_i \SM i \in
      \vO(\vec{b})\SE$ and relation $R_O^1$, $R_{\geq 1}^2$
      and $R^3_{\geq 2}$ if $|\vO(\vec{b})|=1$, $|\vO(\vec{b})|=2$,
      and $|\vO(\vec{b})|=3$, respectively.
    \item For every red vector $\vec{r} \in \VR$, $C$ contains the
      constraint $c_{\vec{r}}$ with scope $\{v_i \SM i \in
      \vO(\vec{r})\SE$ and relation $R_Z^1$, $R^2_Z$,
      and $R^3_{\leq 1}$ if $|\vO(\vec{r})|=1$, $|\vO(\vec{r})|=2$,
      and $|\vO(\vec{r})|=3$, respectively.
    \end{itemize}
    As in the previous case, it follows that $I$ is equivalent with
    $I'$ and can be solved in time $\bigoh(|V|d)$ by solving $I'$.
  \item \textbf{($\mathbf{M_r \geq 2}$ and $\mathbf{M_b \leq 1}$):}
    Then $|\vO(\vec{r})\cap \vO(\vec{c})|=0$ for every
    red vector $\vec{r} \in \VR$. Therefore, $I$ has a solution if and only if
    the vector $\vec{c}$ has $1$ at all coordinates in $[d]\setminus
    \bigcup_{\vec{r} \in V-r}\vO(\vec{r})$ and $0$ otherwise, which
    can be checked in time $\bigoh(|V|d)$.
  \end{itemize}
  Therefore, $I$ has a solution if and only if one of the above cases
  has a solution, which can be checked in time $\bigoh(|V|d)$.
\end{proof}
\fi

We now show that \RB{} is already \NP-complete for instances with $\icon(I)\geq 4$; in fact, this holds even when restricted to the class of instances where $\icon(I)$ is precisely $4$.

\begin{theorem}
\label{thm:rb-np-4}
  \RB{} is \NP-complete even when restricted to instances $I$
  satisfying $\icon(I)=4$.
\end{theorem}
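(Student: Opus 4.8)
The plan is to dispatch membership in \NP{} immediately—a certificate is a center $\vec{c}$ together with a radius $r$, and its validity is checked in polynomial time by computing the Hamming distances $\delta(\vec{c},\vec{v})$—and then to prove \NP-hardness by a polynomial reduction from \textsc{Positive 1-in-3-SAT} (i.e.\ monotone \textsc{Exactly-1-in-3-SAT}, which is \NP-hard by Schaefer's theorem). I will build an instance of \RB{} in which every vector carries at most four $1$'s and at least one carries exactly four, so that $\icon(I)=4$.

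The conceptual backbone is the identity $\delta(\vec{v},\vec{c})=|\vO(\vec{c})|+|\vO(\vec{v})|-2|\vO(\vec{v})\cap\vO(\vec{c})|$ recorded in the preliminaries. Writing $S=\vO(\vec{c})$ and $f(\vec{v})=|\vO(\vec{v})|-2|\vO(\vec{v})\cap S|$, a separating center/radius pair exists if and only if $\max_{\vec{b}\in\VB}f(\vec{b})<\min_{\vec{r}\in\VR}f(\vec{r})$, since the shared term $|\vO(\vec{c})|$ cancels. Designing the instance thus reduces to choosing $S$ so that every blue vector is ``well covered'' while every red vector is ``poorly covered.'' The hard part will be that this is a single \emph{global} min--max inequality, not a conjunction of local constraints: if I naively place a blue ``at least one'' and a red ``at most one'' constraint on the same coordinate set, their $f$-values shift in parallel and impose no real restriction.

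To break that coupling I would first install an anchor gadget that pins the threshold: an all-zero blue vector $\vec{b}^{\ast}$, so $f(\vec{b}^{\ast})=0$ for every $S$, and a weight-one red vector at a private coordinate $z$. If $z\in S$ this red vector attains $f=-1<0$ and separation is impossible, so any solution has $z\notin S$ and this red vector attains $f=1$. Together the anchors force $0\le\max_{\vec{b}}f<\min_{\vec{r}}f\le 1$, hence $\max_{\vec{b}}f=0$ and $\min_{\vec{r}}f=1$. This is the crux: the anchors collapse the global inequality into the \emph{decoupled} per-vector conditions $f(\vec{b})\le 0$ for every blue (i.e.\ $|\vO(\vec{b})\cap S|\ge|\vO(\vec{b})|/2$) and $f(\vec{r})\ge 1$ for every red (i.e.\ $|\vO(\vec{r})\cap S|\le(|\vO(\vec{r})|-1)/2$). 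Getting this gadget exactly right, so that precisely the intended threshold survives, is the technically delicate step.

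With the regime pinned I would realize \textsc{Positive 1-in-3-SAT} as follows: one coordinate $c_v$ per variable (with $v$ true iff $c_v\in S$) and one global coordinate $w$ forced into $S$ by a weight-one blue vector, since $f\le 0$ on it yields $1-2[w\in S]\le 0$ and hence $w\in S$. For a clause $\{a,b,c\}$ I add a blue weight-four vector on $\{c_a,c_b,c_c,w\}$, whose condition $|\{c_a,c_b,c_c\}\cap S|+1\ge 2$ expresses ``at least one literal is selected,'' and a red weight-three vector on $\{c_a,c_b,c_c\}$, whose condition $|\{c_a,c_b,c_c\}\cap S|\le 1$ expresses ``at most one is selected''; because the constraints are now decoupled, their conjunction is exactly the 1-in-3 requirement despite sharing coordinates. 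Every vector has at most four $1$'s and the clause vectors have exactly four, so $\icon(I)=4$. For the forward direction I set $S=\{w\}\cup\{c_v:v\text{ true}\}$ and verify $\max_{\vec{b}}f=0<1=\min_{\vec{r}}f$; for the backward direction I use the pinned regime to read a satisfying 1-in-3 assignment off any separating $S$. The only real obstacle is the global min--max coupling handled by the anchors; the remaining steps are routine arithmetic on $f$.
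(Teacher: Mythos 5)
Your proof is correct, but it takes a genuinely different route from the paper's. The paper reduces from $\CSP(\Gamma_4)$, where $\Gamma_4$ consists of a 4-ary ``red'' relation (at most two ones) and a 4-ary ``blue'' relation (at least three ones), first proving this CSP \NP-complete via Schaefer's dichotomy and then encoding each constraint as a weight-4 vector on its scope; to decouple the global min--max separation condition into the per-vector thresholds $|\vO(\vec{c})\cap\vO(\vec{b})|\geq 3$ and $|\vO(\vec{c})\cap\vO(\vec{r})|\leq 2$, it uses two fairly elaborate gadgets (eight extra coordinates with $\binom{8}{2}$ red pair-vectors, plus four more blue vectors and one red vector on sixteen fresh coordinates). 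You instead reduce from monotone \textsc{Exactly-1-in-3-SAT} and pin the thresholds with a minimal anchor pair---an all-zero blue vector fixing $\max_{\vec{b}}f=0$ and a weight-one red vector on a private coordinate fixing $\min_{\vec{r}}f=1$---after which the clause gadgets (a weight-4 blue vector on $\{c_a,c_b,c_c,w\}$ and a weight-3 red vector on $\{c_a,c_b,c_c\}$) yield exactly the 1-in-3 condition. Your anchor analysis is airtight: integrality of $f$ together with $f(\vec{b}^\ast)=0$ and $f(\vec{r}^\ast)\in\{-1,1\}$ forces the stated regime, and the arithmetic for both directions checks out, including the forcing of $w$ into $S$ and the conciseness bound $\icon(I)=4$ (witnessed by the blue clause vectors, assuming at least one clause). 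What each approach buys: yours is shorter and more elementary, with far smaller gadgets and a textbook source problem; the paper's construction has the minor added feature that \emph{every} vector in its instance has weight exactly four, so it establishes hardness even for weight-uniform data, whereas your instance mixes weights $0,1,3,4$. Both ultimately trace their hardness back to Schaefer, so neither is more general in that respect.
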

We prove Theorem~\ref{thm:rb-np-4} via a reduction from the CSP problem using a constraint language
$\Gamma_4$ that is \NP-hard by Schaefer's theorem~\cite{Schaefer78,Chen09}. 
$\Gamma_4$ is the Boolean constraint language containing the
following two Boolean 4-ary relations: the \emph{red} relation $\RER$
containing all tuples having at most 2 ones and the \emph{blue}
relation $\REB$ containing all tuples having at least 3 ones.

\begin{lemma}
\label{lem:gamma}
  CSP$(\Gamma_4)$ is \NP-complete.
\end{lemma}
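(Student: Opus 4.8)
The plan is as follows. Membership in \NP{} is immediate, since a satisfying assignment can be verified in polynomial time by checking each constraint against the constant-size relations $\RER$ and $\REB$. For hardness I would appeal directly to Schaefer's dichotomy theorem (Theorem~\ref{thm:schaefer}): because $\Gamma_4=\{\RER,\REB\}$ is a finite Boolean constraint language, it suffices to show that $\Gamma_4$ is \emph{not} closed under any of the four ``tractable'' polymorphism types---a constant operation, an AND or OR operation, a majority operation, or a minority operation---and NP-completeness then follows at once.

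The work thus reduces to exhibiting, for each operation type, a collection of tuples drawn from $\RER$ or $\REB$ whose coordinatewise image escapes the corresponding relation. Note first that $\RER=R^4_{\leq 2}$ and $\REB=R^4_{\geq 3}$. I would rule out the two constant operations by observing that the constant-$0$ map sends every tuple of $\REB$ to $(0,0,0,0)\notin\REB$, while the constant-$1$ map sends every tuple of $\RER$ to $(1,1,1,1)\notin\RER$; hence no constant operation is a polymorphism of the whole language. For the semilattice case, the Boolean AND of $(1,1,1,0),(1,1,0,1)\in\REB$ is $(1,1,0,0)\notin\REB$, and the Boolean OR of $(1,1,0,0),(0,0,1,1)\in\RER$ is $(1,1,1,1)\notin\RER$, so neither of the two AND/OR operations on $D$ works. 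For the majority operation, the three weight-$2$ tuples $(1,1,0,0),(1,0,1,0),(0,1,1,0)\in\RER$ have coordinatewise majority $(1,1,1,0)\notin\RER$; and for the minority operation, $(1,1,0,0),(0,0,1,1),(0,0,0,0)\in\RER$ have coordinatewise XOR equal to $(1,1,1,1)\notin\RER$. This covers all four cases, and Schaefer's theorem then yields \NP-completeness.

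There is essentially no conceptual obstacle here---the argument is a finite verification---but the point worth emphasizing is \emph{why} the relations are taken at arity exactly $4$. Lemma~\ref{lem:closed-maj} shows that the threshold relations $R^a_{\leq 1}$ and $R^a_{\geq 2}$ stay closed under the Boolean majority operation as long as $a\leq 3$; the relations $R^4_{\leq 2}$ and $R^4_{\geq 3}$ are precisely the arity-$4$ analogues at which this closure first fails, which is exactly what the majority and minority witnesses above exploit. So the only genuine design choice in the proof is selecting these arity-$4$ threshold relations so that all four tractable Schaefer classes are escaped simultaneously; once that choice is fixed, the remaining steps are the routine coordinatewise computations indicated above.
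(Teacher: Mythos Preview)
Your proposal is correct and follows essentially the same route as the paper: both appeal to Schaefer's dichotomy and verify, by explicit witnesses, that $\Gamma_4$ is closed under none of the four tractable polymorphism types. The only cosmetic difference is that for the minority operation you exhibit a witness from $\RER$ whereas the paper uses $\REB$; both choices work.
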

\iflong
\begin{proof}
  By Theorem~\ref{thm:schaefer}, it suffices to show that
  $\Gamma_4$ is not closed under any of the 4 types of operations
  stated in the theorem. The following shows that this is indeed not
  the case, i.e., $\Gamma_4$ is not closed under:
  \begin{itemize}
  \item Any constant operation since $\RER$ does not contain the
    all-one tuple and $\REB$ does not contain the all-zero tuple.
  \item Any AND operation since the AND of any two distinct tuples
    of $\REB$ each having exactly 3 ones, gives a tuple containing
    exactly 2 ones which is not in $\REB$.
  \item Any OR operation since the OR of any two distinct tuples of
    $\RER$ each having exactly 2 ones, gives a tuple containing more
    than 2 ones, which is not in $\RER$.
  \item The unique Boolean majority operation, because of
    $\RER$. Indeed, the majority of
    the following three tuples in $\RER$ is not in $\RER$: the tuples that are
    $1$ exactly at the first and second, exactly at the first and
    third, and exactly at the second and third
    entries.
  \item Any minority operation since the minority of any three distinct
    tuples in $\REB$ gives a tuple with at least $2$ ones which is not in $\REB$.\qedhere
  \end{itemize}
\end{proof}
\fi

With Lemma~\ref{lem:gamma} in hand, we establish Theorem~\ref{thm:rb-np-4} by designing a polynomial-time reduction from CSP$(\Gamma_4)$. We note that, as mentioned already in the proof of Theorem~\ref{the:2NPh}, inclusion in \NP\ is trivial.

\iflong
\begin{proof}[Proof of Theorem~\ref{thm:rb-np-4}]
\fi
\ifshort
\begin{proof}[Proof Sketch for Theorem~\ref{thm:rb-np-4}]
\fi
  Let $I=(V,C)$ be the given instance of CSP$(\Gamma_4)$. We denote by
  $C_r$/$C_b$ the set of all constraints $c$ in $C$ with
  $R(c)=\RER$/$R(c)=\REB$; note that $C=C_r\cup C_b$. We will construct
  the instance $I'=(\VR,\VB,d)$ of \RB{} as follows. First, we introduce
  one coordinate $d_v$ for every variable $v \in V$. Moreover, for
  every constraint $c \in C_r$, we introduce the red vector $\vec{r}_c$ that
  is $1$ on all coordinates that correspond to variables within the
  scope of $c$ and is $0$ otherwise, i.e., $\vec{r}_c$ is $1$ exactly on the
  coordinates in $\SB d_v \SM v \in S(c) \SE$ and $0$ at all other
  coordinates. Similarly, for
  every constraint $c \in C_b$, we introduce the blue vector $\vec{b}_c$ that
  is $1$ on all coordinates that correspond to variables within the
  scope of $c$ and $0$ otherwise.

  Finally, we will introduce two
  gadgets which will enforce that in
  every solution $\vec{c}$ of $I'$, it holds that:
  \begin{itemize}[noitemsep,topsep=0pt]
  \item[(1)] there is a red vector $\vec{r}\in \VR$ such
    that $\vec{c}$ is $1$ on at least two coordinates, where
    $\vec{r}$ is also $1$; and
  \item[(2)] there is a blue vector $\vec{b} \in \VB$ such that $\vec{c}$ is not $1$ on all
    coordinates where $\vec{b}$ is $1$.
  \end{itemize}
  Towards enforcing (1), we add two new blue vectors $\vec{v}_1^b$ and
  $\vec{v}_2^b$
  together with $8$ new coordinates $d_1^b,\dotsc,d_8^b$ such that $\vec{v}_1^b$
  is $1$ exactly at the coordinates $d_1^b,\dotsc,d_4^b$ and $\vec{v}_2^b$ is $1$
  exactly at the coordinates $d_5^b,\dotsc,d_8^b$. Moreover, for every $i$
  and $j$ with $1\leq i < j \leq 8$, we introduce a red vector
  $v^r_{ij}$ that is $1$ exactly at
  the coordinates $c_i$ and $c_j$ plus two additional fresh
  coordinates.

  Towards enforcing (2), we add one new blue vector $\vec{u}_i^b$
  together with four fresh coordinates $e_1^i,\dotsc,e_{4}^i$ for every
  $i$ with $1 \leq i \leq 4$ such
  that $\vec{u}_i^b$ is $1$ exactly on the coordinates
  $e^i_{1},\dotsc,e^i_{4}$. Finally, we add one new red vector
  $\vec{u}^r$ that is $1$ exactly at the coordinates $e_1^1$, $e^2_1$,
  $e_1^3$, and $e_1^4$.
  
  \ifshort
 We can now complete the proof by showing the equivalence between the original instance $I$ of CSP$(\Gamma_4)$ and the constructed instance $I'$ of \RB.
  \fi  
  \iflong
 To show the forward direction, let $\tau : V \rightarrow D$
  be a solution of $I$. We claim that the vector $\vec{c}$ defined by
  setting:
  \begin{itemize}
  \item $\vec{c}[d_v]=\tau(v)$ for every $v \in V$,
  \item $\vec{c}[d_1^b]=\vec{c}[d_2^b]=\vec{c}[d_3^b]=1$ and
    $\vec{c}[d_4^b]=0$,
  \item $\vec{c}[d_5^b]=\vec{c}[d_6^b]=\vec{c}[d_7^b]=1$ and
    $\vec{c}[d_8^b]=0$,
  \item $\vec{c}[e_1^i]=0$ and
    $\vec{c}[e_2^i]=\vec{c}[e_3^i]=\vec{c}[e_4^i]=1$ for every $i$
    with $1 \leq i \leq 4$, and
  \item all remaining coordinates of $\vec{c}$ are set to $0$,
  \end{itemize}
  is a solution for $I'$. We start by showing that $|\vO(\vec{c})\cap
  \vO(\vec{v})|\geq 3$ for every blue vector $\vec{v} \in \VB$. This
  clearly holds by of the definition of $\vec{c}$ and the fact
  that $\tau$ is a solution for $I$ for every blue vector $\vec{v}_c$
  with $c \in C_b$. It also clearly holds for the remaining blue
  vectors $\vec{v}^b_1$, $\vec{v}^b_2$,
  $\vec{u}^b_1,\dotsc,\vec{u}^b_4$ from the definition of $\vec{c}$.
  We now show that $|\vO(\vec{c})\cap
  \vO(\vec{v})|\leq 2$ for every red vector $\vec{v} \in \VR$, which
  shows that $\vec{c}$ is indeed a solution for $I'$. Again, this
  clearly holds for every red vector $r_c$ with $c \in C_r$ because
  $\tau$ is a solution for $I$. It also holds for the red vector
  $\vec{u}^r$ as well as all remaining red vectors $v^r_{ij}$ for
  every $i$ and $j$ with $1 \leq i < j \leq 4$ from the
  definition of $\vec{c}$.

  To show the converse, let $\vec{c}$ be a solution
  for $I'$. We claim that the assignment $\tau : V \rightarrow D$ with
  $\tau(v)=\vec{c}[d_v]$ is a solution for $I$. To show this,
  it suffices to show that (1) $|\vO(\vec{c})\cap \vO(\vec{b_c})|\geq 3$ for
  every $c \in C_b$ and (2) $|\vO(\vec{c})\cap \vO(\vec{b_c})|\leq 2$ for
  every $c \in C_r$. Towards showing (1), we will show that
  that there is a red vector $\vec{v}_{ij}^r \in \VR$  for some $i$
  and $j$ with $1 \leq i < j \leq 4$ such that
  $|\vO(\vec{c})\cap\vO(\vec{v}_{ij}^r|\geq 2$, which, since
  $\vec{c}$ is a solution for $I'$, implies that $|\vO(\vec{c})\cap
  \vO(\vec{b_c})|\geq 3$ for every $c \in C_b$.
  Suppose for a contradiction that this
  is not the case, then either $\vec{c}$ is zero at all coordinates
  $d_1^b,\dotsc,d_4^b$ or $\vec{c}$ is zero at all coordinates
  $d_5^b,\dotsc,d_8^b$. Suppose without loss of generality that the former
  holds. Then, the distance of $\vec{c}$ to $\vec{v}_1^b$ is at least as
  large as the distance of $\vec{c}$ to any red vector, contradicting
  our assumption that $\vec{c}$ is a solution for $I'$. Therefore, the
  red vector $v^r_{ij}$ and $\vec{c}$ are $1$ on at least two common
  coordinates (i.e., the coordinates $d_i^b$ and $d_j^b$).

  It remains to show (2), i.e., that $|\vO(\vec{c})\cap \vO(\vec{b_c})|\leq 2$ for
  every $c \in C_r$, for which, since $\vec{c}$ is a solution for $I$,
  it suffices to show that there is a blue vector $\vec{v}$ in $\VB$
  with $|\vec{c}\cap\vec{v}|\leq 3$. Consider the red vector
  $\vec{u}^r$. Clearly, $|\vO(\vec{c})\cap\vO(\vec{u}^r)|\leq 3$
  since otherwise $\vec{u}^r$ would be as close to $\vec{c}$ as
  every other vector, which would contradict our assumption that
  $\vec{c}$ is a solution for $I$. Therefore, there is a coordinate
  $e_1^i$ for some $1 \leq i\leq 4$ such that $\vec{c}[e_1^i]=0$. But
  then, the blue vector $\vec{u}^b_i$ satisfies that
  $|\vec{c}\cap\vec{u}^b_i|\leq 3$ and therefore $|\vO(\vec{c})\cap \vO(\vec{b_c})|\leq 2$ for
  every $c \in C_r$.
  \fi
\end{proof}

\subsection{Data Conciseness Plus $|\VR|$ or $|\VB|$}

Here we show that if in addition to the input conciseness one also parameterizes by the minimum
of the numbers of red vectors and blue vectors, then \RB{} becomes
fixed-parameter tractable.
\begin{theorem}
\label{thm:colicon}
  \RB{} is fixed-parameter tractable parameterized by
  $\icon(I)+\min\{|\VB|,|\VR|\}$.
\end{theorem}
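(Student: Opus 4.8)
The plan is to reduce to the case where the red vectors form the smaller class and then combine an exhaustive guess over a small coordinate set with a greedy completion. First I would argue that we may assume $|\VR| \le |\VB|$, so that $p := \min\{|\VR|,|\VB|\} = |\VR|$. This is justified by the complementation argument already used for Theorem~\ref{the:2NPh}: replacing a prospective center $\vec c$ by its bitwise complement $\overline{\vec c}$ and the radius $r$ by $d-r-1$ turns ``$\VB$ inside, $\VR$ outside'' into ``$\VR$ inside, $\VB$ outside'', so the instance $(\VR,\VB)$ is a yes-instance of \RB{} if and only if the relabelled instance (with red and blue swapped) is. This relabelling leaves the data vectors---and hence $\icon(I)$---untouched while exchanging the values $|\VR|$ and $|\VB|$.

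Set $k := \icon(I)$ and let $S := \bigcup_{\vec r \in \VR}\vO(\vec r)$ be the union of the supports of the red vectors, so that $|S| \le |\VR|\cdot k = p \cdot k$. The key structural fact is that every red vector is $0$ outside $S$, so $\delta(\vec r, \vec c)$ depends on $\vec c$ only through $\vO(\vec c)\cap S$ and the total weight $|\vO(\vec c)|$. Partitioning the coordinates into $S$, $T := (\bigcup_{\vec b \in \VB}\vO(\vec b))\setminus S$, and the remaining ``junk'' coordinates (in no support at all), I would branch over all $\le 2^{pk}$ assignments of $\vec c$ restricted to $S$. Once $\vec c|_S$ is fixed, the quantities $m_r := |\vO(\vec r)\cap\vO(\vec c)|$ and $\beta_b := |\vO(\vec b)\cap\vO(\vec c)\cap S|$ become constants, and using the identity $\delta(\vec v,\vec c)=|\vO(\vec c)|+|\vO(\vec v)|-2|\vO(\vec v)\cap\vO(\vec c)|$ one checks that, writing $a := |\vO(\vec c)\cap S|$ and $w$ for the number of $1$'s of $\vec c$ outside $S$, every red distance equals a guess-dependent constant $\alpha_r$ plus $w$, while every blue distance equals $(a+|\vO(\vec b)|-2\beta_b) + w - 2|\vO(\vec b)\cap\vO(\vec c)\cap T|$.

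The crucial observation is that a separating radius exists if and only if $\max_{\vec b}\delta(\vec b,\vec c) < \min_{\vec r}\delta(\vec r,\vec c)$, and in this comparison the common additive term $w$ cancels. Hence feasibility for a fixed $\vec c|_S$ reduces to the conditions $|\vO(\vec b)\cap\vO(\vec c)\cap T|\ge \theta_b$ for every blue vector $\vec b$, where each threshold $\theta_b$ is determined by the guess. These conditions are monotone: enlarging the set of $T$-coordinates set to $1$ can only help to satisfy them. Therefore the extreme completion---setting every coordinate of $T$ to $1$ and every junk coordinate to $0$---is optimal, so for each guess I would fix $\vec c$ in exactly this way, take $r := \max_{\vec b}\delta(\vec b,\vec c)$, and test in time $\bigoh(|V|d)$ whether all red vectors lie strictly outside $B(\vec c, r)$; the instance is a yes-instance precisely when some guess passes.

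The main obstacle is exactly this last point: a priori, after fixing $\vec c|_S$ one appears to face a hard covering problem, namely choosing $T$-coordinates so that each of the potentially many blue vectors meets its threshold. The insight that dissolves it is that the radius is a free parameter absorbing the uniform shift $w$, leaving only monotone lower-bound constraints for which the all-ones completion dominates every other choice; the symmetric situation ($|\VB|<|\VR|$, folded in via the complementation reduction) dually produces monotone upper-bound constraints solved by the all-zeros completion. Since there are at most $2^{pk}$ guesses, each processed in polynomial time, the total running time is $2^{\bigoh(\icon(I)\cdot\min\{|\VR|,|\VB|\})}\cdot |V|\cdot d$, establishing fixed-parameter tractability.
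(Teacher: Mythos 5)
Your proposal is correct and follows essentially the same route as the paper: branch over the at most $2^{\icon(I)\cdot\min\{|\VR|,|\VB|\}}$ assignments of the center on the union of the supports of the smaller colour class, and complete the center canonically outside that set. The paper handles the two colour classes by two symmetric direct arguments (outside the blue supports the center may be assumed all-zero; outside the red supports, all-one) instead of your complementation relabelling, and states the completion step without your explicit cancellation/monotonicity justification, but these are only cosmetic differences.
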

\begin{proof}
  Let $I=(\VR,\VB,d)$ be the given instance of \RB{}. It suffices to
  show that \RB{} is fixed-parameter tractable parameterized by
  $\icon(I)+|\VB|$ and also by $\icon(I)+|\VR|$. To avoid any confusion, we remark that it is well known (and easy to see) that establishing fixed-parameter tractability w.r.t.\ the sum $\alpha+\beta$ of two numbers is equivalent to establishing fixed-parameter tractability w.r.t.\ the product $\alpha\cdot \beta$ of the same numbers.
  
  The main observation behind the algorithm (for the case
  $\icon(I)+|\VB|$) is that the total number
  of coordinates, where any blue vector can be $1$ is at most
  $|\VB|\icon(I)$; let $B=\bigcup_{\vec{b} \in \VB}\vO(\vec{b})$ be
  the set of all those coordinates. 
  Since any solution $\vec{c}$ can be assumed to be $0$ at any
  coordinate outside of $B$, we can solve $I$ by ``guessing'' (i.e., branching to find) a solution
  in time $\bigoh(2^{\icon(I)|\VB|}|V|d)$. More specifically, for
  every subset $B'$ of the at most $2^{\icon(I)|\VB|}$ subsets of $B$, we check in
  time $\bigoh(|V|d)$ whether the vector $\vec{c}$ that is $1$ exactly
  at the coordinates in $B'$ is a solution for $I$. If one of those
  vectors is a solution, then we output it, otherwise we can correctly
  return that $I$ is a \no{}-instance.
  
  The algorithm for the case where we parameterize by $\icon(I)+|\VR|$
  is almost identical with the only difference being the observation that the
  set $R=\bigcup_{\vec{r}\in \VR}\vO(\vec{r})$ has size at most
  $2^{\icon(I)|\VR|}$ and that any solution $\vec{c}$ can be assumed
  to be $1$ at every coordinate in $R$.
\end{proof}

\subsection{Explanation Concisness}

Recall that \CRB{} is defined analogously as \RB{}, but one is additionally given an
integer $\scp$ and is asked for a solution $\vec{c}$ for \RB{} with
conciseness at most $\scp$.
Note that a simple brute-force algorithm that enumerates all potential
solution vectors $\vec{c}$ with at most $\scp$ $1$'s shows that \CRB{}
is in \XP{} parameterized by $\scp$.
\begin{observation}
\label{obs:econXP}
  \CRB{} can be solved in time $\bigoh(d^{\scp}|V|d)$.
\end{observation}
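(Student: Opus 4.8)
The plan is to solve \CRB{} by brute-force enumeration over all admissible centers. Since we require a center $\vec{c}$ of conciseness at most $\scp$, and since such a center is completely determined by the set $\vO(\vec{c})$ of coordinates on which it is $1$, the number of candidate centers is $\sum_{i=0}^{\scp}\binom{d}{i}=\bigoh(d^{\scp})$. The first step is therefore to iterate over every subset $S\subseteq[d]$ with $|S|\leq\scp$ and let $\vec{c}_S$ denote the vector that is $1$ exactly on the coordinates in $S$ and $0$ elsewhere.

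For each fixed candidate center $\vec{c}$, I would observe that the radius can be chosen canonically. To contain all blue vectors we must take $r\geq r_{\max}:=\max_{\vec{b}\in\VB}\delta(\vec{c},\vec{b})$, and to exclude as many red vectors as possible we want $r$ as small as possible; hence the optimal choice is precisely $r=r_{\max}$. Consequently, $\vec{c}$ admits a valid radius if and only if $r_{\max}<\min_{\vec{r}\in\VR}\delta(\vec{c},\vec{r})$. This reduces the feasibility test for a fixed center to computing the maximum blue distance and the minimum red distance and comparing the two, rather than searching over all possible radii.

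The final step is the running-time accounting. For each of the $\bigoh(d^{\scp})$ candidate centers, evaluating $\delta(\vec{c},\vec{v})$ for a single vector $\vec{v}\in V$ takes $\bigoh(d)$ time, so computing all $|V|$ distances and extracting the required maximum over $\VB$ and minimum over $\VR$ costs $\bigoh(|V|d)$. Multiplying through yields the claimed bound $\bigoh(d^{\scp}|V|d)$. The algorithm returns \yes{}, together with a witnessing center and the radius $r=r_{\max}$, as soon as some candidate passes the comparison, and \no{} if no candidate does.

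I do not expect a genuine obstacle here: correctness of the enumeration is immediate because it ranges exhaustively over all centers of conciseness at most $\scp$, and the only nontrivial point is the canonical-radius argument. I would state that argument explicitly---namely that fixing $r=r_{\max}$ loses no generality---so that it is clear why a single distance comparison per center suffices in place of an additional search over radii.
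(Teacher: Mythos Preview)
Your proposal is correct and follows essentially the same approach as the paper, which simply notes that a brute-force enumeration over all at most $\bigoh(d^{\scp})$ candidate centers $\vec{c}$ with $\con(\vec{c})\leq\scp$ yields the bound. Your explicit canonical-radius argument is a reasonable elaboration of the feasibility check, but the paper does not spell this out beyond stating the running time.
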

Therefore, it becomes natural to ask whether this can be improved to fixed-parameter tractability. The following two theorems show that this is unlikely to
be the case, even if we additionally assume $|\VB|=1$ or $|\VR|=1$.
\begin{theorem}
\label{thm:W2_scp}
  \CRB{} is \W{2}-hard parameterized by the conciseness $\scp$ of
  the solution even if $|\VR|=1$.
\end{theorem}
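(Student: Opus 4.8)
The plan is to give a parameterized reduction from \UHS{} (\textsc{Uniform Hitting Set}), the variant of hitting set in which every set in the family has the same cardinality $p$, where $p$ is part of the input. This variant is \W{2}-hard parameterized by the solution size $k$: starting from ordinary \textsc{Hitting Set} (which is \W{2}-complete parameterized by $k$~\cite{DowneyF13}), one pads each set $F$ whose size is below the maximum $p$ with $p-|F|$ fresh \emph{private} elements. Since a private padding element lies in a single set, it can always be swapped for a genuine element of that set, so this padding leaves the optimum hitting-set size unchanged while making every set have size exactly $p$. It is essential here that $p$ is allowed to grow with the instance, since a constant common set size would instead give fixed-parameter tractability.

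The key simplification I would exploit is the following reformulation for $|\VR|=1$, say $\VR=\{\vec r\}$: an instance is a \yes-instance exactly when there is a center $\vec c$ of conciseness at most $\scp$ with $\max_{\vec b\in\VB}\delta(\vec c,\vec b)<\delta(\vec c,\vec r)$, because one may simply take the radius to be the largest blue distance. Writing $f_{\vec c}(\vec v)=|\vO(\vec v)|-2|\vO(\vec v)\cap\vO(\vec c)|$, the distance formula from the preliminaries gives $\delta(\vec c,\vec v)=|\vO(\vec c)|+f_{\vec c}(\vec v)$, so the term $|\vO(\vec c)|$ cancels in the comparison and the separating condition becomes $f_{\vec c}(\vec b)<f_{\vec c}(\vec r)$ for every $\vec b\in\VB$; crucially, this is independent of how many ones $\vec c$ uses.

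Given a \UHS{} instance $(U,\mathcal F,p,k)$ with every $F\in\mathcal F$ of size $p$, I would build the following \CRB{} instance. The coordinates are the elements of $U$ together with a fresh padding block $P$ of $p-1$ coordinates. There is a single red vector $\vec r$ that is $1$ exactly on $P$; for each $F\in\mathcal F$ there is a blue vector $\vec b_F$ with $\vO(\vec b_F)=F$; and I set $\scp=k$. Because $P$ is disjoint from every blue support, a solution center may be assumed to be $0$ on all of $P$: deleting a $1$ that $\vec c$ places on $P$ lowers every blue distance by $1$ while raising $\delta(\vec c,\vec r)$ by $1$, so it can only help the separation. With $\vec c$ zero on $P$ we get $f_{\vec c}(\vec r)=|P|=p-1$, while $f_{\vec c}(\vec b_F)=p-2|\vO(\vec c)\cap F|$, so the separating condition $f_{\vec c}(\vec b_F)<p-1$ is equivalent to $|\vO(\vec c)\cap F|\ge 1$, i.e.\ to $F$ being hit by $\vO(\vec c)\cap U$.

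Putting this together, $\vO(\vec c)\cap U$ is a hitting set of $\mathcal F$ of size at most $\scp=k$ if and only if $\vec c$ separates the blue vectors from $\vec r$; this yields the equivalence, and the reduction is clearly polynomial-time computable and sets the new parameter to $\scp=k$. I expect the main obstacle to be twofold: first, pinning down the \W{2}-hardness of the uniform variant with the caveat that $p$ must grow; and second, the bookkeeping that shows an optimal center never profits from spending budget on the padding block $P$ (or on universe coordinates lying in no set). It is precisely this step that fixes the red threshold at the value $p-1$ and lets the blue constraints collapse exactly onto the hitting-set condition.
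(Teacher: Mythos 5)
Your proposal is correct and follows essentially the same route as the paper's proof: a parameterized reduction from \UHS{} with one coordinate per element, a blue vector per set, and a single red vector supported on a block of dummy coordinates, so that the comparison of distances collapses to the hitting-set condition. The only differences are cosmetic---you use $p-1$ dummy coordinates instead of $\ell$, normalize the center to be zero on the dummy block rather than computing the distances directly, and re-derive the \W{2}-hardness of the uniform variant by padding where the paper simply cites it.
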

\iflong
\begin{proof}
\fi
\ifshort
\begin{proof}[Proof Sketch]
\fi
  We provide a parameterized reduction from the \UHS{}
  problem, which given a set $U$ of elements, a family $\FFF \subseteq
  2^U$ of subsets of $U$ with $|F|=\ell$ for every $F \in \FFF$ and an
  integer $k$, asks whether $\FFF$ has a \emph{hitting set} $H
  \subseteq U$ of size at most $k$, i.e., $H\cap F\neq \emptyset$ for
  every $F \in \FFF$. \UHS{} is \W{2}-complete
  parameterized by $k$~\cite{DowneyF13}.

  Let $I=(U,\FFF,k)$ be an instance of \UHS{} with sets of size
  $\ell$. We construct an equivalent instance $I'=(\VR,\VB,d,\scp)$ of
  \CRB{} as follows. We set $\scp=k$. For every $u \in U$,
  we introduce the (element) coordinate $d_u$ and for every $i$ with $1 \leq i
  \leq \ell$, we introduce the (dummy) coordinate $d_i'$. Moreover, for every
  $F \in \FFF$, we add the blue vector $\vec{b}_F$ to $\VB$, which is
  $1$ on all coordinates $d_u$ with $u \in F$ and $0$ at all other
  coordinates. Finally, we introduce the red vector $\vec{r}$ that is
  $1$ at all dummy coordinates $d_i'$ and $0$ at all element
  coordinates. This completes the construction of $I'$, which can
  clearly be achieved in polynomial-time.We can now show that $I$
  is a \yes{}-instance of \UHS{} if and only if $I'$ is a 
  \yes{}-instance of \CRB{}. 
  \iflong

  Towards showing the forward direction, let $H\subseteq U$ with
  $|H|\leq k=\scp$ be a hitting set for $\FFF$. We claim that the vector
  $\vec{c}$ that is $1$ at all element coordinates $d_u$ where $u \in
  H$ and $0$ at all other coordinates, is a solution for $I'$. This
  holds since $\con(\vec{c})\leq \scp$ and
  $\delta(\vec{b}_F,\vec{c})<\ell+|H|$ for every $\vec{b}_F \in \VB{}$ but
  $\delta(\vec{r},\vec{c})=\ell+|H|$, and therefore, every blue vector in $\VB$ is
  closer to $\vec{c}$ than every red vector in $\VR$. 

  Towards showing the converse, let $\vec{c}$ with
  $\con(\vec{c})\leq \scp=k$ be a solution for $I'$. We claim that 
  the set $H$ that contains all elements $u$, where $\vec{c}[d_u]=1$ 
  is a hitting set for $\FFF$. Suppose for a contradiction that this
  is not the case and there is a set $F \in \FFF$ with $H\cap
  F=\emptyset$. Then,
  $\delta(\vec{b}_F,\vec{c})=\ell+|\vO(\vec{c})|$ 
  but also $\delta(\vec{r},\vec{c})\leq \ell+|\vO(\vec{c})|$, which contradicts our
  assumption that $\vec{c}$ is a solution for $I'$. 
    \fi
\end{proof}

\newcommand{\MIS}{\textsc{Multi-colored Independent Set}}

\begin{theorem}
\label{thm:W1_scp}
  \CRB{} is \W{1}-hard parameterized by the conciseness $\scp$ of
  the solution even if $|\VB|=1$.
\end{theorem}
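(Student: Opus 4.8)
The plan is to give a parameterized reduction from \MIS{} (which is \W{1}-complete when parameterized by the number $k$ of color classes~\cite{DowneyF13}), mirroring the structure of the proof of Theorem~\ref{thm:W2_scp} but with the roles of the two colors swapped: here the single \emph{blue} vector plays the role of the fixed ``anchor,'' while the many \emph{red} vectors encode the forbidden choices. Given an instance of \MIS{} on a graph $G$ whose vertex set has size $n$ and is partitioned into classes $V_1,\dots,V_k$, I set $\scp=k$ and introduce one coordinate $d_v$ per vertex $v$. The single blue vector $\vec{b}$ is $1$ on all vertex coordinates. The intended meaning of a solution $\vec{c}$ is that its $1$-coordinates mark the selected vertices; provided $\vec{c}$ is supported only on vertex coordinates, one has $\delta(\vec{b},\vec{c})=n-\con(\vec{c})$, so a center selecting more vertices sits \emph{closer} to the anchor.

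Two families of red vectors complete the construction. For every pair $\{u,v\}$ that is either an edge of $G$ or lies inside a common color class, I add a red vector $\vec{r}_{uv}$ that is $1$ on $d_u,d_v$ together with $m$ fresh private dummy coordinates; calibrating $m=n-2k+1$ (we may assume $n\ge 2k$, since instances with $n<2k$ can be solved by brute force within the \FPT{}-time budget of a parameterized reduction) makes $\vec{r}_{uv}$ strictly farther from $\vec{c}$ than $\vec{b}$ exactly when at most one of $u,v$ is selected. Thus these vectors simultaneously forbid picking both endpoints of an edge and both members of a monochromatic pair, i.e., they enforce independence together with ``at most one vertex per class.'' To additionally force $\con(\vec{c})=k$, I add a single cardinality gadget: a red vector $\vec{r}^{\ast}$ that is $0$ on every vertex coordinate and $1$ on $q=n-2k+1$ fresh private dummy coordinates. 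A short calculation gives $\delta(\vec{r}^{\ast},\vec{c})=q+\con(\vec{c})$ against $\delta(\vec{b},\vec{c})=n-\con(\vec{c})$, so $\vec{r}^{\ast}$ is far from $\vec{c}$ iff $\con(\vec{c})\ge k$; with the budget $\con(\vec{c})\le\scp=k$ this pins down $\con(\vec{c})=k$, hence exactly one vertex per class.

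For correctness I would argue both directions. The forward direction is direct: given a multicolored independent set $S$, taking $\vec{c}$ to be the indicator of $S$ yields $\con(\vec{c})=k$, and every red vector is at distance at least $\delta(\vec{b},\vec{c})+1$ by the calibrations above, so setting $r=\delta(\vec{b},\vec{c})$ works. For the converse I first establish that any solution may be assumed to be supported only on vertex coordinates: if $\vec{c}$ has a $1$ on a dummy coordinate $d$ that is private to a single red vector $\vec{r}_0$ (and where $\vec{b}$ is $0$), then flipping $d$ to $0$ decreases $\delta(\vec{b},\vec{c})$ by $1$, increases $\delta(\vec{r}_0,\vec{c})$ by $1$, and decreases every other red distance by $1$; hence each quantity $\delta(\vec{r},\vec{c})-\delta(\vec{b},\vec{c})$ is non-decreasing, so the modified vector is still a solution and its conciseness does not grow. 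Once $\vec{c}$ is supported on vertex coordinates, $\vec{r}^{\ast}$ forces $\con(\vec{c})=k$ and the pair-vectors force the selected set to be an independent set containing exactly one vertex per class.

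The main obstacle I anticipate is pinning the cardinality of the center to exactly $k$. Unlike the hitting-set reduction of Theorem~\ref{thm:W2_scp}, where the budget ``$\le k$'' coincides with what the source problem asks, here the source problem needs a set of size \emph{exactly} $k$, and the pairwise ``forbidden'' red vectors are by themselves satisfied by spurious low-weight centers (e.g., selecting a few pairwise-nonadjacent vertices from distinct classes). The gadget $\vec{r}^{\ast}$ is introduced precisely to rule these out, and the delicate point is the joint calibration of the dummy counts $m$ and $q$ against the anchor distance $n-\con(\vec{c})$, so that the two separation thresholds (``at most one of $u,v$'' and ``at least $k$ selected'') fall in the correct places at the same time. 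The gap-monotonicity argument used to normalize the support is the other step that must be handled with care, since naively deleting a dummy $1$ brings the \emph{other} red vectors closer to the center rather than farther.
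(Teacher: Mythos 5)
Your reduction is correct and is essentially the same as the paper's: a single all-ones blue anchor over the vertex coordinates, one dummy-padded red vector per forbidden pair calibrated so that selecting both endpoints brings it too close, and one dummy-only red vector forcing $\con(\vec{c})\geq k$. The only differences are cosmetic --- the paper shares one pool of $nk-2k+1$ dummy coordinates among all red vectors (and relies on the \MIS{} convention that each $G[V_i]$ is a clique instead of adding same-class pair vectors), which lets it skip your explicit support-normalization step since the cardinality argument already forces $\vec{c}$ to be supported on vertex coordinates.
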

\iflong
\begin{proof}
\fi
\ifshort
\begin{proof}[Proof Sketch]
\fi
  We will provide a parameterized reduction from the \MIS{}
  problem, which given an undirected graph $G=(V,E)$, where $V$ is
  partitioned into $k$ vertex sets $V_1,\dotsc,V_k$ with $|V_i|=n$ and
  $G[V_i]$ is a clique and an integer $k$, asks whether $G$ has an
  independent set of size at least $k$; note that such an independent
  set must contain exactly one vertex from each $V_i$.
  \MIS{} is well-known to be \W{1}-complete~\cite{DowneyF13}. 

  Let $I=(G,V_1,\dotsc,V_k,k)$ be an instance of \MIS{} with
  $|V_i|=n$ and $V=\bigcup_{i=1}^kV_i$. 
  We construct an equivalent instance $I'=(\VR,\VB,d,\scp)$ of
  \CRB{} as follows. We set $\scp=k$. For every $v \in V$,
  we introduce the (vertex) coordinate $d_v$ and for every $i$ with $1 \leq i
  \leq nk-2k+1$, we introduce the (dummy) coordinate $d_i'$. Moreover, for every
  $e=\{u,v\} \in E(G)$, we add the red vector $\vec{r}_e$ to $\VR$, which is
  $1$ on the coordinates $d_u$ and $d_v$ as well as the coordinate
  $d_i'$ for every $i$ with $1 \leq i \leq nk-2k+1$. We also add the red
  vector $\vec{r}$ to $\VR$ that is $1$ at the coordinates $d_i'$ with
  $1 \leq i \leq nk-2k+1$. Finally, we introduce the blue vector $\vec{b}$ that is
  $1$ at all vertex coordinates $d_v$ and $0$ at all dummy
  coordinates. This completes the construction of $I'$, which can
  clearly be achieved in polynomial-time. We can now show that $I$
  is a \yes{}-instance of \MIS{} if and only if $I'$ is a
  \yes{}-instance of \CRB{}. 
\iflong

  Towards showing the forward direction, let $C\subseteq V$ with
  $|C|=k=\scp$ be an independent set for $G$. We claim that the vector
  $\vec{c}$ that is $1$ at all vertex coordinates $d_v$ with $v \in
  C$ and $0$ at all other coordinates is a solution for $I'$. This
  holds because $\con(\vec{c})\leq \scp$ and:
  \begin{itemize}
  \item
    $\delta(\vec{r}_e,\vec{c})\geq nk-2k+1+k=nk-k+1$ for every $e\in E(G)$
    (because $G[C]$ is an independent set),
  \item $\delta(\vec{r},\vec{c})=nk-2k+1+k=nk-k+1$
  \item $\delta(\vec{b},\vec{c})=nk-k$.
  \end{itemize} 
  Therefore every red vector in $\VR$ is closer
  to $\vec{c}$ than the only blue in $\VB$.

  Towards showing the reverse direction, let $\vec{c}$ with
  $\con(\vec{c})\leq \scp=k$ be a solution for $I'$. We claim that 
  the set $C$ that contains all vertices $v$, where $\vec{c}[d_v]=1$ 
  is an independent set in $G$ of size $k$ in $G$. We first show that
  $|C|\geq k$. Suppose not, then $\delta(\vec{b},\vec{c})> nk-k$ but
  $\delta(\vec{r},\vec{c})< nk-2k+1+k=nk-k+1$, contradicting our
  assumption that $\vec{c}$ is a solution for $I$. It remains to show
  that $C$ is an independent set for $G$. Suppose for a contradiction that this
  is not the case and there is an edge $e=\{u,v\} \in E(G)$ with $u,v
  \in C$. Then,
  $\delta(\vec{r}_e,\vec{c})\leq nk-2k+1+(k-2)=nk-k-1$ 
  but also $\delta(\vec{b},\vec{c})=nk-k$, which contradicts our
  assumption that $\vec{c}$ is a solution for $I'$.
  \fi
\end{proof}

\subsection{Data and Explanation Conciseness}

As our final result in this section, we show that \CRB{} is fixed-parameter tractable when parameterized by data and explanation conciseness combined.
\begin{theorem}
\label{thm:econicon}
  \CRB{} can be solved in time $\bigoh(\icon(I)^\scp|V|d)$ and is
  therefore fixed-parameter tractable parameterized by $\scp+\icon$.
\end{theorem}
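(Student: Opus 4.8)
The plan is to design a bounded-search-tree (branching) algorithm that builds the support of the center one coordinate at a time, making at most $\scp$ decisions with a branching factor of at most $\icon(I)$. Throughout, I would rely on the paper's distance reformulation: for a fixed candidate support $S=\vO(\vec{c})$, the quantity $g_S(\vec{v})=|\vO(\vec{v})|-2|S\cap\vO(\vec{v})|$ differs from $\delta(\vec{v},\vec{c})$ only by the additive constant $|S|$. Hence $S$ yields a valid center (for a suitable radius) if and only if $\max_{\vec{b}\in\VB} g_S(\vec{b}) < \min_{\vec{r}\in\VR} g_S(\vec{r})$, in which case one sets $r=\max_{\vec{b}\in\VB}\delta(\vec{b},\vec{c})$. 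Computing all values $g_S(\vec{v})$ and testing this separation condition takes $\bigoh(|V|d)$ time.

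The recursion maintains a partial support $S$ (initially $S=\emptyset$) together with the remaining budget $\scp-|S|$. At each node I first test the separation condition; if it holds, I output $\vec{c}$. Otherwise, if the budget is exhausted I reject this branch. If budget remains, I select a blue vector $\vec{b}^\star$ maximizing $g_S$ over $\VB$ and branch over every coordinate $i\in\vO(\vec{b}^\star)\setminus S$, recursing with $S\cup\{i\}$.

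The correctness of the branching rests on the following observation, which I would isolate as the key lemma. Suppose $S$ does not already separate the data, and let $S'\supseteq S$ be the support of any valid center with $|S'|\le\scp$; write $\Delta=S'\setminus S$. Take $\vec{b}^\star$ as a maximizer of $g_S$ over $\VB$ and $\vec{r}^\star$ as a minimizer of $g_S$ over $\VR$. Failure of separation gives $g_S(\vec{b}^\star)\ge g_S(\vec{r}^\star)$. Since adding coordinates can only decrease each $g$-value, we have $g_{S'}(\vec{r}^\star)\le g_S(\vec{r}^\star)\le g_S(\vec{b}^\star)$, while validity of $S'$ forces $g_{S'}(\vec{b}^\star)<g_{S'}(\vec{r}^\star)$. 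Chaining these yields $g_{S'}(\vec{b}^\star)<g_S(\vec{b}^\star)$, i.e.\ $|\Delta\cap\vO(\vec{b}^\star)|\ge1$. Thus every extension of $S$ to a solution must place a fresh $1$ on some coordinate of $\vO(\vec{b}^\star)$, so branching over the at most $\icon(I)$ coordinates of $\vO(\vec{b}^\star)\setminus S$ is exhaustive.

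Finally I would assemble the running time. Each branch increases $|S|$ by one, and the recursion depth is bounded by $\scp$, so the search tree has branching factor at most $\icon(I)$ and depth at most $\scp$, giving $\bigoh(\icon(I)^{\scp})$ nodes; with $\bigoh(|V|d)$ work per node this totals $\bigoh(\icon(I)^{\scp}|V|d)$, as claimed, and yields fixed-parameter tractability parameterized by $\scp+\icon$. I expect the only delicate point to be the key lemma — specifically, justifying that it suffices to branch on the coordinates of a single maximizing blue vector rather than on all relevant coordinates. I would also check the degenerate case $\vO(\vec{b}^\star)\subseteq S$, where $\vO(\vec{b}^\star)\setminus S=\emptyset$: there the lemma correctly certifies that no extension of $S$ can be a solution, so the branch simply dies.
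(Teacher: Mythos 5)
Your proposal is correct and follows essentially the same route as the paper: a depth-$\scp$ search tree that grows the support of the center, branching on the at most $\icon(I)$ one-coordinates of a currently unseparated blue vector, with an $\bigoh(|V|d)$ separation test at each node. The only cosmetic differences are that you branch on a $g_S$-maximizing blue vector and on all of $\vO(\vec{b}^\star)\setminus S$, whereas the paper picks an arbitrary failing red--blue pair $(\vec{r},\vec{b})$ and branches on the slightly smaller set $\vO(\vec{b})\setminus(\vO(\vec{r})\cup\vO(\vec{c}))$; both yield the same $\bigoh(\icon(I)^{\scp}|V|d)$ bound.
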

\begin{proof}
  Let $I=(\VR,\VB,d,\scp)$ with $V=\VR\cup\VB$
  be the given instance of \CRB{}. The main
  idea behind the algorithm is as follows. We start by initializing
  the solution vector $\vec{c}$ to the all-zero vector. We then check
  in time $\bigoh(|V|d)$ whether $\vec{c}$ is already a solution. If
  so, we are done. Otherwise, there must exist a red
  vector $\vec{r} \in \VR$ and a blue vector $\vec{b}\in \VB{}$ such
  that $\delta(\vec{r},\vec{c})\leq \delta(\vec{b},\vec{c})$ and
  therefore:
  $|\vO(\vec{r})|+|\vO(\vec{c})|-2|\vO(\vec{r})\cap\vO(\vec{c})|\leq
  |\vO(\vec{b})|+|\vO(\vec{c})|-2|\vO(\vec{b})\cap\vO(\vec{c})|$, or in short
  $|\vO(\vec{r})|-2|\vO(\vec{r})\cap\vO(\vec{c})|\leq
  |\vO(\vec{b})|-2|\vO(\vec{b})\cap\vO(\vec{c})|$. It follows that any
  vector $\vec{c}'$ with $\vO(\vec{c})\subseteq \vO(\vec{c}')$ and
  $\delta(\vec{r},\vec{c}')>\delta(\vec{b},\vec{c}')$ has to be
  obtained from $\vec{c}$ by flipping at
  least one coordinate in $B=\vO(\vec{b})\setminus (\vO(\vec{r})\cup
  \vO(\vec{c}))$ from
  $0$ to $1$; note that $|B|\leq \icon(I)$.
  We can therefore branch on the coordinates of $B$, and for
  every such choice $b \in B$, we continue with the vector $\vec{c}'$
  obtained from $\vec{c}$ after flipping the coordinate $b$ from $0$
  to $1$. We stop if either we have reached a solution or if the number
  of $1$'s in the current vector $\vec{c}$ exceeds the 
  conciseness upper bound $\scp$. In other words, we can solve the problem using a
  branching algorithm that has at most $|\vO(\vec{b})|\leq \icon(I)$
  many choices per branch, uses time $\bigoh(|V|d)$ per search-tree node,
  and makes at most $\scp$ branching decisions before it
  stops. Therefore, the run-time of the algorithm is $\bigoh(\icon(I)^\scp|V|d)$.
\end{proof}

\section{A Treewidth-Based Algorithm for \RB}
\label{sec:tw}

Let the \emph{incidence graph} $\incgraph$ of an instance  $I=(\VR, \VB, d)$ of \RB\ be the bipartite graph defined as follows. First of all, $V(\incgraph) = \VR\cup \VB\cup [d]$. As for the edge set, there is an edge $\vec{v}c\in E(\incgraph)$ between a vector $\vec{v}\in \VR\cup \VB$ and a coordinate $c\in [d]$ if and only if $c\in \vO(\vec{v})$. 
We identify the vertices of $\incgraph$ with the vectors in $\VR\cup\VB$ and the coordinates in $[d]$. That is, for a set of vertices $X$ in $\incgraph$, we often say ``vectors in $X$'' or ``coordinates in $X$'' to mean the vectors/the coordinates associated with the vertices in~$X$.

This section is dedicated to proving the following technical theorem, which implies all the claimed tractability results concerning the treewidth of the incidence graph:

\begin{theorem}
\label{thm:tw}
Given an instance $I=(\VR, \VB, d)$ of \RB{} and a nice tree-decomposition $\mathcal{T}=(T,\chi)$ of $\incgraph$ of width $w$, there is an algorithm solving $I$ in time $(2\min\{\scpm, \icon(I)\})^{2w+2}\cdot (|V|+d)^{\bigoh(1)}$, where $\scpm$ is the minimum conciseness of any center. Moreover, if $I$ is \yes-instance, then the algorithm outputs a center with conciseness $\scpm$ and minimum radius among all such centers. 
\end{theorem}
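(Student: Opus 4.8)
The plan is to design a dynamic program that sweeps the nice tree-decomposition $\mathcal{T}$ bottom-up, fixing in advance a target conciseness $s$ and radius $r$ and deciding whether there is a center $\vec{c}$ with $|\vO(\vec{c})|=s$ realizing $\VB\subseteq B(\vec{c},r)$ and $\VR\cap B(\vec{c},r)=\emptyset$. The starting point is the identity $\delta(\vec{v},\vec{c})=s+|\vO(\vec{v})|-2g_{\vec{v}}$, where $g_{\vec{v}}:=|\vO(\vec{v})\cap \vO(\vec{c})|$; since $|\vO(\vec{v})|$ is an input constant, once $s$ and $r$ are fixed the requirement on a blue (resp.\ red) vector $\vec{v}$ becomes a single threshold constraint on $g_{\vec{v}}$, namely $g_{\vec{v}}\ge L_{\vec{v}}$ (resp.\ $g_{\vec{v}}\le U_{\vec{v}}$) for explicit integers $L_{\vec{v}},U_{\vec{v}}$ depending only on $s,r,|\vO(\vec{v})|$. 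The crucial bounding observation is that $g_{\vec{v}}\le \min\{s,|\vO(\vec{v})|\}\le \min\{s,\icon(I)\}$; hence as long as we only ever search for centers with $s\le\scpm$ ones, every agreement value lies in $\{0,\dots,p\}$ with $p:=\min\{\scpm,\icon(I)\}$, which is exactly where the base of the claimed running time comes from.

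The DP state at a node $t$ records (i) for each coordinate $c\in\chi(t)$ a guessed center value in $\{0,1\}$, (ii) for each vector $\vec{v}\in\chi(t)$ a partial agreement count $a_{\vec{v}}$ capped at $\min\{|\vO(\vec{v})|,s\}+1\le p+1$ (counting only incidences to already-forgotten coordinates), and (iii) a single integer $m\le s$ recording how many center-$1$ coordinates have been forgotten so far. Each bag thus contributes a factor of $2$ per coordinate and at most $p+1$ per vector, so the number of states per bag is at most $(2p)^{w+1}$ (with $m$ contributing only a polynomial factor). The transitions are the usual ones: introduce-coordinate branches on its center value; introduce-vector initializes $a_{\vec{v}}=0$, since no incidence can yet have been processed by the subtree property; forget-coordinate $c$ adds its center value to $a_{\vec{v}}$ for every adjacent vector $\vec{v}$ still in the bag and to $m$; forget-vector $\vec{v}$ first folds in the center values of its adjacent coordinates still present, finalizes $g_{\vec{v}}$, and keeps the branch only if the threshold $L_{\vec{v}}$ or $U_{\vec{v}}$ is met; and the join node sums the agreement counts and the counter $m$ of the two children over all pairs of states that agree on the bag's center values.

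The central correctness point, and the step I expect to be the main obstacle, is to guarantee that every incidence $(\vec{v},c)$ of $\incgraph$ contributes to $g_{\vec{v}}$ exactly once. I would handle this by the convention that an incidence is charged at the forget node of whichever of $\vec{v},c$ leaves the decomposition first: at that node the other endpoint is still present in the bag (by connectivity of the bags containing a fixed vertex together with the existence of a common bag for $\vec{v}$ and $c$), so the contribution is added, while at the later forget node the first endpoint is already gone and nothing is re-added. One then verifies that at a join node the coordinates forgotten in the two subtrees are disjoint, so summing $a_{\vec{v}}$ and $m$ across the children double-counts nothing; this is exactly where pairs of states must be combined, forcing the $(p+1)^2$-per-vector blow-up and hence the exponent $2(w+1)=2w+2$ in the running time.

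Finally, I would wrap the decision procedure to extract the optimum: iterate $s=0,1,2,\dots$, running the DP (over all radii $r\in\{0,\dots,d\}$) until the first feasible $s$ is found, which equals $\scpm$; throughout these runs $s\le\scpm$, so all agreement caps stay $\le p$ and the $\bigoh(\scpm)$ outer iterations only add a polynomial factor. With $s=\scpm$ fixed, the smallest $r$ for which the DP reports feasibility is the minimum radius, and standard backtracking through the DP table recovers a witnessing center. The running time is dominated by the join nodes, giving $(2p)^{2w+2}\cdot(|V|+d)^{\bigoh(1)}$ as claimed, since the per-node work at the $\bigoh(|V|+d)$ nodes of the decomposition, together with the polynomial number of $(s,r,m)$ choices, is absorbed into the polynomial factor.
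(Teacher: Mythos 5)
Your proposal is correct and follows essentially the same route as the paper's proof: fix the conciseness $s$ and radius $r$, run a bottom-up DP over the nice tree-decomposition whose states record the center's values on bag coordinates, the per-vector agreement counts on already-forgotten coordinates, and the number of forgotten center-ones, with the constraint for each vector checked at its forget node via the identity $\delta(\vec{v},\vec{c})=s+|\vO(\vec{v})|-2|\vO(\vec{v})\cap\vO(\vec{c})|$ and the separator property guaranteeing each incidence is charged exactly once. The only cosmetic difference is that the paper additionally carries a ``future center ones'' counter, which in your formulation is determined by $s$, $m$, and the bag values; the state space, the join-node squaring that yields the exponent $2w+2$, and the outer iteration over $s$ and $r$ to extract $\scpm$ and the minimum radius all coincide with the paper's argument.
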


\ifshort
\begin{proof}[Proof Sketch]
We begin by enumerating each choice of center conciseness $\scptw = 0,1,2,\ldots, d$ and radius $r = 0,1,2,\ldots, d$, and aim to construct a solution with exactly this conciseness and radius.
\fi
\iflong
For the rest of this section, we will fix a radius $r$ and center-conciseness $\scptw$, and assume that we are looking for a center with conciseness precisely $\scptw$ such that the hypersphere around the center with radius $r$ separates the vectors in $\VB$ from the ones in $\VR$. 
To do this we will, for every possible center conciseness $\scptw = 0,1,2,\ldots, d$ in the ascending order, run the algorithm for every 
radius $r = 0,1,2,\ldots, d$. This way, we output the center with minimum conciseness and minimum radius among all centers with minimum conciseness.

\fi 
The algorithm is a bottom-up dynamic programming along the nice tree-decomposition $\mathcal{T}$. We first describe the records that we need to compute for every node $t$ of $T$. Given the description of the records, we need to show that for each of the node types  (i.e., leaf/introduce/forget/join), we can compute the records from the records of their children. Finally, we need to also show that given the records for the root node of the tree-decomposition, we can decide whether $I$ is a \yes-instance and if so output a center vector $\vec{c}$ such that $|\vO(\vec{c})| = \scptw$, $\VB\subseteq B(\vec{c}, r)$, and $\VR \cap B(\vec{c}, r) = \emptyset$. 

We begin by describing the record $\Gamma_t$ for each node $t\in T$.
We can think about $\Gamma_t$ as a map that maps a tuple $\CCC=(\pastCenter, \futureCenter, \bagCenter, \bagVectors)\in \Nat\times \Nat\times 2^{\chi(t)}\times \Nat^{|\chi(t)|}$ to either a vector $\vec{c}_{t}=\{0,1\}^{d}$ with $\con(\vec{c}_t)=\scptw$ or $\noInst$. The intuition behind the record is that the tuple $(\pastCenter, \futureCenter, \bagCenter, \bagVectors)$ is mapped to an arbitrary vector $\vec{c}_t$ such that 
\begin{enumerate}[noitemsep,nolistsep]
\item $\pastCenter$ is the number of non-zero coordinates of $\vec{c}_t$ on already ``forgotten'' coordinates, i.e., $\pastCenter = |\vO(\vec{c}_t)\cap \chi(T_t)\setminus \chi(t)|$;\label{prop:tw_1}
\item $\futureCenter$ is the number of non-zero coordinates of $\vec{c}_t$ on coordinates that are not yet introduced, i.e., $\futureCenter = |\vO(\vec{c}_t)\cap [d]\setminus \chi(T_t)|$;\label{prop:tw_2}
\item $ \bagCenter = \vO(\vec{c}_t)\cap \chi(t)$;\label{prop:tw_3}
\item $\bagVectors$ contains, for every vector $\vec{v}\in \chi(t)$, the number of ones on "forgotten" coordinates in $\vO(\vec{c}_t)$, that is $\bagVectors(\vec{v}) = |\vO(\vec{c}_t)\cap \vO(\vec{v}) \cap (\chi(T_t)\setminus \chi(t))|$;\label{prop:tw_4}
\item no forgotten \red{} vector is at distance at most $r$ from $\vec{c}_{t}$, i.e., $\VR\cap (\chi(T_t)\setminus \chi(t)) \cap B(\vec{c}_t, r)=\emptyset$; and\label{prop:tw_5}
\item all forgotten \blue{} vectors are at distance at most $r$ from $\vec{c}_{t}$, i.e., $(\VB\cap (\chi(T_t)\setminus \chi(t))) \subseteq  B(\vec{c}_t, r)$.\label{prop:tw_6}
\end{enumerate}
We say that a vector $\vec{c}_t$ that satisfies all the above properties is \emph{compatible} with $(\pastCenter, \futureCenter, \bagCenter, \bagVectors)$ for $t$. 
Moreover, $(\pastCenter, \futureCenter, \bagCenter, \bagVectors)$ is mapped to $\noInst$ if and only if no vector in $\{0,1\}^{d}$ is compatible with $(\pastCenter, \futureCenter, \bagCenter, \bagVectors)$. 

First note that if $t$ is the root node, then $\chi(t)$ is empty and $\chi(T_t)\setminus \chi(t)$ contains all vectors in the instance. Hence, if any tuple is mapped to a vector in the root, then the vector is a solution by properties~\ref{prop:tw_5}~and~\ref{prop:tw_6} above. 

We say that a tuple  $\CCC=(\pastCenter, \futureCenter, \bagCenter, \bagVectors)$ is \emph{achievable for $\Gamma_t$} if the following holds:
\begin{itemize}[noitemsep,nolistsep]
\item $\pastCenter+\futureCenter+|\bagCenter| = \scptw$; and
\item for all vectors $\vec{v}\in \chi(t)$: $\bagVectors(\vec{v})\le \min\{\scptw, |\vO(\vec{v})\cap (\chi(T_t)\setminus \chi(t))|\}$.
\end{itemize}
Note that if $\CCC$ is not achievable for $\Gamma_t$, then no vector with conciseness $\scptw$ can be compatible with $\CCC$. Hence,
the table $\Gamma_t$ will only contain the achievable tuples for $\Gamma_t$. 
\ifshort
We observe that $|\Gamma_t|\leq \scptw^2\cdot 2^{\chi(t)}\cdot (\min\{\scptw, \icon(I)\})^{|\chi(t)|}$. We can now compute the records in a leaf-to-root fashion at each of the four different types of nodes in $\mathcal{T}$.
\end{proof}
\fi
\iflong
It is straightforward to observe that: 
\begin{observation}\label{obs:records_size}
$|\Gamma_t|\le \scptw^2\cdot 2^{\chi(t)}\cdot (\min\{\scptw, \icon(I)\})^{|\chi(t)|}$, and we can enumerate all achievable tuples for $\Gamma_t$ in $\bigoh(\scptw^2\cdot 2^{\chi(t)}\cdot (\min\{\scptw, \icon(I)\})^{|\chi(t)|})$ time.
\end{observation}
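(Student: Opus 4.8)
The plan is to prove both claims of Observation~\ref{obs:records_size} by a direct factor‑by‑factor count over the four components of a tuple $\CCC=(\pastCenter,\futureCenter,\bagCenter,\bagVectors)$. Since $\Gamma_t$ is defined to store only the achievable tuples, it suffices to bound the number of achievable tuples, and this number is at most the product of the number of admissible values available to each of the four components independently. So I would bound each component separately and then multiply.

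First I would count the choices for the two ``set/function'' components. The center‑bag $\bagCenter=\vO(\vec{c}_t)\cap\chi(t)$ is a subset of $\chi(t)$ (in fact of $\chi(t)\cap[d]$), so there are at most $2^{|\chi(t)|}$ possibilities. For $\bagVectors$, which assigns to each \emph{vector} vertex $\vec{v}\in\chi(t)$ a value constrained by achievability to satisfy $0\le\bagVectors(\vec{v})\le\min\{\scptw,|\vO(\vec{v})\cap(\chi(T_t)\setminus\chi(t))|\}$, the key observation is that $|\vO(\vec{v})\cap(\chi(T_t)\setminus\chi(t))|\le|\vO(\vec{v})|=\con(\vec{v})\le\icon(I)$. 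Hence each coordinate of $\bagVectors$ takes at most $\min\{\scptw,\icon(I)\}+1$ values, and since $\chi(t)$ contains at most $|\chi(t)|$ vector vertices, there are at most $(\min\{\scptw,\icon(I)\}+1)^{|\chi(t)|}$ admissible functions $\bagVectors$.

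Next I would count $(\pastCenter,\futureCenter)$. The achievability constraint $\pastCenter+\futureCenter+|\bagCenter|=\scptw$ together with nonnegativity forces $0\le\pastCenter\le\scptw$ and $0\le\futureCenter\le\scptw$; indeed, once $\bagCenter$ and $\pastCenter$ are fixed, $\futureCenter=\scptw-|\bagCenter|-\pastCenter$ is determined, but even the crude independent estimate gives at most $(\scptw+1)^2$ pairs. Multiplying the three factors yields $|\Gamma_t|\le(\scptw+1)^2\cdot2^{|\chi(t)|}\cdot(\min\{\scptw,\icon(I)\}+1)^{|\chi(t)|}$, which is the claimed bound after absorbing the additive constants (the only slack being the difference between $(\scptw+1)$ and $\scptw$, and between $(\min\{\scptw,\icon(I)\}+1)$ and $\min\{\scptw,\icon(I)\}$).

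For the enumeration claim I would generate each achievable tuple exactly once by nested iteration: loop over all subsets $\bagCenter\subseteq\chi(t)$, over all $\pastCenter\in\{0,\ldots,\scptw-|\bagCenter|\}$ with $\futureCenter$ set accordingly, and over all functions $\bagVectors$ respecting the per‑vector upper bounds. This produces each tuple in constant time, so the total time is proportional to the number of tuples and matches the stated bound; the thresholds $|\vO(\vec{v})\cap(\chi(T_t)\setminus\chi(t))|$ restricting the range of each $\bagVectors(\vec{v})$ can be precomputed in time polynomial in $|V|+d$, contributing only to the additive polynomial overhead already permitted by Theorem~\ref{thm:tw}. There is no genuine combinatorial obstacle here; the only point requiring care is to track precisely which vertices of $\chi(t)$ are coordinates (controlling $\bagCenter$) versus vectors (controlling $\bagVectors$), so that both the exponent $|\chi(t)|$ and the base $2$ are correctly justified — the statement being otherwise a routine product‑of‑ranges estimate.
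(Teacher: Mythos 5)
Your factor-by-factor count is exactly the argument the paper intends: the statement is given as an unproved ``straightforward'' observation, and the intended justification is precisely this product-of-ranges estimate over $(\pastCenter,\futureCenter)$, the subset $\bagCenter\subseteq\chi(t)$, and the per-vector bounded values of $\bagVectors$, together with the nested-loop enumeration. Your remark about the $+1$ slack (i.e., $(\scptw+1)$ and $(\min\{\scptw,\icon(I)\}+1)$ versus the stated bases) is a fair observation about the paper's own imprecision, but it is immaterial since the bound is only ever used inside the $\bigoh(\cdot)$ running-time analysis of Theorem~\ref{thm:tw}.
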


\begin{lemma}[leaf node]
\label{lem:leaf_node}
Let $t\in V(T)$ be a leaf node and \(\mathcal{C} =(\pastCenter, \futureCenter, \bagCenter, \bagVectors)\) an achievable tuples for $t$. 
Then in  \(\mathcal{O}(d)\) time, we can either compute a compatible vector for \(\mathcal{C}\) or decide that no such vector exists.
\end{lemma}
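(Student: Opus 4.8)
The plan is to observe that this lemma is essentially a base case that degenerates almost completely, because by the definition of a nice tree-decomposition every leaf $t$ satisfies $\chi(t)=\emptyset$. Since a leaf has no children, $T_t$ consists of the single node $t$, and hence $\chi(T_t)=\chi(t)=\emptyset$. In particular, the set of forgotten coordinates $\chi(T_t)\setminus\chi(t)$ is empty, while the set of not-yet-introduced coordinates $[d]\setminus\chi(T_t)$ is all of $[d]$. The whole argument then amounts to checking that these emptinesses trivialize five of the six compatibility conditions and reduce the last to a single bookkeeping test.

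Carrying out the plan, I would first read off the forced shape of the tuple $\CCC=(\pastCenter,\futureCenter,\bagCenter,\bagVectors)$: since $\bagCenter\in 2^{\chi(t)}=\{\emptyset\}$ we have $\bagCenter=\emptyset$, and since $\bagVectors\in\Nat^{|\chi(t)|}=\Nat^{0}$ it is the empty tuple, so the achievability condition reduces to $\pastCenter+\futureCenter=\scptw$. Next I would examine compatibility condition (1), namely $\pastCenter=|\vO(\vec{c}_t)\cap(\chi(T_t)\setminus\chi(t))|$. As $\chi(T_t)\setminus\chi(t)=\emptyset$, every candidate vector forces this quantity to $0$, so a compatible vector can exist only if $\pastCenter=0$. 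Thus the first action of the algorithm is: if $\pastCenter\neq 0$, immediately return $\noInst$.

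In the remaining case $\pastCenter=0$, achievability gives $\futureCenter=\scptw$. Condition (2), which reads $\futureCenter=|\vO(\vec{c}_t)\cap([d]\setminus\chi(T_t))|=|\vO(\vec{c}_t)|=\con(\vec{c}_t)$, is therefore met by any vector of conciseness exactly $\scptw$; condition (3) holds because $\vO(\vec{c}_t)\cap\chi(t)=\emptyset=\bagCenter$; condition (4) is vacuous as $\chi(t)=\emptyset$; and conditions (5)--(6) hold trivially because $\VR\cap(\chi(T_t)\setminus\chi(t))=\VB\cap(\chi(T_t)\setminus\chi(t))=\emptyset$. Hence any vector with exactly $\scptw$ ones is compatible with $\CCC$. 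Since the enclosing algorithm only ever considers $\scptw\le d$, I would simply output the vector whose ones lie on coordinates $1,\dots,\scptw$. Performing the check on $\pastCenter$ and constructing this vector clearly takes $\bigoh(d)$ time.

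I do not expect a genuine obstacle here: the entire content of the lemma lies in recognizing that the empty leaf bag collapses all six compatibility conditions, leaving only the test $\pastCenter=0$. The one point that deserves a sentence of care is that compatibility pins down $\pastCenter=0$ even though achievability alone does not, so an achievable tuple with $\pastCenter>0$ must be correctly mapped to $\noInst$ rather than to a witness vector.
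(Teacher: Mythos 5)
Your proof is correct and follows essentially the same route as the paper's, which simply notes that $\chi(t)=\emptyset$ forces the tuple to be $(0,\scptw,\emptyset,\emptyset)$ and that any vector with exactly $\scptw$ ones is then compatible. Your treatment is in fact slightly more careful: the paper asserts that $(0,\scptw,\emptyset,\emptyset)$ is the \emph{only} achievable tuple at a leaf, whereas the stated achievability condition only enforces $\pastCenter+\futureCenter=\scptw$, so your explicit handling of achievable tuples with $\pastCenter>0$ (mapping them to $\noInst$) closes a small gap the paper glosses over.
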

\begin{proof}
Since $\chi(t)=\emptyset$, the only achievable tuple for $\Gamma_t$ is $(0, \scptw, \emptyset, \emptyset)$. One can easily verify that any vector with precisely $\scptw$ many ones is compatible with $(0, \scptw, \emptyset, \emptyset)$ for $t$.
\end{proof}

\begin{lemma}[introduce node]
\label{lem:introduce_node}
Let $t\in V(T)$ be an introduce node with child $t'$ such that $\chi(t)\setminus \chi(t')=\{v\}$ and \(\mathcal{C} =(\pastCenter, \futureCenter, \bagCenter, \bagVectors)\) an achievable tuple for $t$. 
Given $\Gamma_{t'}$, in polynomial time we can either compute a compatible vector for \(\mathcal{C}\) or decide that no such vector exists.
\end{lemma}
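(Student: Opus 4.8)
The plan is to fill $\Gamma_t$ for the given achievable tuple $\CCC=(\pastCenter,\futureCenter,\bagCenter,\bagVectors)$ by reducing to a single lookup in the already-computed table $\Gamma_{t'}$, followed by a local repair of the stored witness vector. The first step is to split on the type of the introduced vertex: either $v$ is a coordinate in $[d]$ or $v$ is a vector in $\VR\cup\VB$. In both cases the underlying candidate center is a full $d$-dimensional vector that does not change; what differs between $t$ and $t'$ is only the bookkeeping (which coordinates count as forgotten, present, or future). Throughout I would use the elementary identity $\chi(T_t)\setminus\chi(t)=\chi(T_{t'})\setminus\chi(t')$, i.e., the set of forgotten coordinates and vectors is identical at $t$ and $t'$.

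Next I would dispatch the easy case, where $v$ is a vector. Since $t$ introduces $v$, the vertex $v$ is \emph{fresh}: it occurs nowhere in $T_{t'}$. By the edge-covering property of tree-decompositions, every coordinate in $\vO(v)$ must share a bag with $v$, so none of them has been forgotten; hence $\vO(v)\cap(\chi(T_t)\setminus\chi(t))=\emptyset$ and the entry $\bagVectors(v)$ is forced to be $0$ (achievability already guarantees this, since $\bagVectors(v)\le\min\{\scptw,0\}$). Because the forgotten set is unchanged and $v$ sits in the bag rather than being forgotten, none of the distance conditions (5) and (6) refers to $v$ yet, and the records for all other vectors and coordinates coincide with those at $t'$. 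Thus $\Gamma_t[\CCC]$ is obtained by looking up $\Gamma_{t'}$ on $\CCC$ with the trivial entry for $v$ deleted, and the witness vector transfers verbatim.

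The real work is the case where $v$ is a coordinate, which migrates from ``future'' at $t'$ to ``present'' at $t$. The natural child tuple $\CCC'$ increments $\futureCenter$ by one and removes $v$ from $\bagCenter$ when $v\in\bagCenter$ (the center is $1$ on $v$), and is left unchanged when $v\notin\bagCenter$; one checks routinely that $\CCC$ is achievable for $t$ iff $\CCC'$ is achievable for $t'$. The subtlety is that the witness $\vec{c}$ stored in $\Gamma_{t'}[\CCC']$ is pinned down only on bag and forgotten coordinates, while its value on the future coordinate $v$ (and on future coordinates in general) is not determined by $\CCC'$. I would therefore repair $\vec{c}$ so that $\vec{c}[v]$ matches the demand of $\bagCenter$: if the value is wrong, flip it and compensate by flipping one other future coordinate in the opposite direction, keeping the future $1$-count intact. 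Correctness of this single-swap repair rests on two structural facts, proved first: (a) no forgotten vector is $1$ on any future coordinate (again by freshness and edge-covering), so flipping future coordinates of the center leaves $\delta(\vec{w},\vec{c})$ unchanged for every forgotten vector $\vec{w}$, preserving conditions (5) and (6); and (b) future coordinates are otherwise unconstrained, so any placement of the required number of future $1$'s gives a compatible vector. When $v$ must be set to $1$ a compensating coordinate always exists (there is at least one future $1$ to move); when $v$ must be set to $0$ and no free future coordinate remains, this exactly certifies that $\CCC$ admits no compatible vector, and we return $\noInst$.

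Finally, the procedure performs a constant number of table lookups and an $\bigoh(d)$ repair of the witness, so it runs in polynomial time as required. I expect the coordinate case to be the main obstacle: the care lies in recognizing that the stored witness is under-determined on future coordinates, and in justifying---via freshness and the edge-covering property---that replacing future $1$'s affects neither the distances to forgotten vectors nor any other compatibility condition, together with pinning down the precise boundary case in which the repair is infeasible.
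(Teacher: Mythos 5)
Your proposal is correct and follows essentially the same route as the paper's proof: the same case split on whether the introduced vertex is a coordinate or a vector, the same child tuple $\CCC' = (\pastCenter, \futureCenter + |\bagCenter\cap \{v\}|, \bagCenter\setminus\{v\}, \bagVectors)$, the same separator/edge-covering argument forcing $\bagVectors(v)=0$ in the vector case, and the same single-swap repair of the stored witness on future coordinates justified by forgotten vectors being zero there. If anything, you are slightly more thorough than the paper, which only spells out the repair for $v\in\bagCenter$ with $\vec{c}_{t'}[v]=0$ and leaves the symmetric mismatch (and its infeasible boundary case) implicit.
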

\begin{proof}
We distinguish between two possibilities depending on whether vertex $v$ represents a vector or a coordinate. 
First, consider that $v$ is the coordinate $c\in [d]$ and consider the tuple $\CCC' = (\pastCenter, \futureCenter + |\bagCenter\cap \{c\}|, \bagCenter\setminus\{c\}, \bagVectors)$ for the node $t'$. It is straightforward to verify that if $\vec{c}_t$ is compatible with $\CCC$ for $t$, then it is also compatible with $\CCC'$ for $t'$. 
To see this, observe that $\chi(T_t)\setminus\chi(t) = \chi(T_{t'})\setminus\chi(t')$. Hence, $\pastCenter= |\vO(\vec{c}_t)\cap \chi(T_t)\setminus \chi(t)| = |\vO(\vec{c}_t)\cap \chi(T_{t'})\setminus \chi(t')|$ and property~\ref{prop:tw_1} follows. Moreover, $\VR\cap (\chi(T_{t'})\setminus \chi({t'})) \cap B(\vec{c}_t, r) = \VR\cap (\chi(T_t)\setminus \chi(t)) \cap B(\vec{c}_t, r)=\emptyset$ and $(\VB\cap (\chi(T_{t'})\setminus \chi({t'})) = (\VB\cap (\chi(T_t)\setminus \chi(t))) \subseteq  B(\vec{c}_t, r)$ and properties~\ref{prop:tw_5}~and~\ref{prop:tw_6} follow. 
The properties \ref{prop:tw_2} is satisfied, because $\vec{c}_t$ is compatible with $\CCC$ for $t$ and so the number of non-zero coordinates of $\vec{c}_t$ in $[d]\setminus \chi(T_{t'}) = ([d]\setminus \chi(T_{t}))\cup \{c\}$ is either $\futureCenter$ if $c\notin \vO(\vec{c}_t)$ or $\futureCenter+1$. Similarly, since $\vec{c}_t$ is compatible with $\CCC$ for $t$, property \ref{prop:tw_3} is satisfied, because $\vO(\vec{c}_t)\cap \chi(t') = \vO(\vec{c}_t)\cap (\chi(t)\setminus \{c\}) = \bagCenter\setminus\{c\}$.
Finally, since $\chi(t)$ and $\chi(t')$ differ only by a single coordinate, the set of vectors is the same in both, i.e., $\chi(t)\cap (\VR\cup\VB) = \chi(t')\cap (\VR\cup\VB)$ and for every vector $\vec{v}\in \chi(t)$ we have $\bagVectors(\vec{v})=|\vO(\vec{c}_t)\cap \vO(\vec{v}) \cap (\chi(T_t)\setminus \chi(t))| = |\vO(\vec{c}_t)\cap \vO(\vec{v}) \cap (\chi(T_{t'})\setminus \chi(t'))|$ and property~\ref{prop:tw_4} follows.
Therefore, if $\Gamma_{t'}[\CCC']=\bot$, then $\Gamma_{t}[\CCC]=\bot$ and we are done.

Now let $\Gamma_{t'}[\CCC']= \vec{c}_{t'}$. If $c$ is either in both $\bagCenter$ and $\vO(\vec{c}_{t'})$ or in neither of them, then $\vec{c}_{t'}$ is compatible with $\CCC$ and we are done. Otherwise, $c\in \bagCenter$, but $c\notin \vO(\vec{c}_{t'})$. Let $c'$ be an arbitrary coordinate that has not yet been introduced such that $\vec{c}_{t'}$ is one on that coordinate; that is, $c'\in \vO(\vec{c}_{t'})\cap ([d]\setminus \chi(T_t))$. Note that such a coordinate exists since $\CCC'$ is compatible with $\vec{c}_{t'}$, and hence, by property~\ref{prop:tw_2}, $\futureCenter + |\bagCenter\cap \{c\}| = |\vO(\vec{c}_{t'})\cap [d]\setminus \chi(T_{t'})|$. Moreover, $\futureCenter + |\bagCenter\cap \{c\}|\ge |\bagCenter\cap \{c\}|=1$ and $c$ is not in $\vO(\vec{c}_{t'})$. 
We claim that the vector $\vec{c}_{t}$ such that $\vO(\vec{c}_{t}) = (\vO(\vec{c}_{t'})\setminus \{c'\})\cup \{c\}$ is compatible with $\CCC$. Properties \ref{prop:tw_1}-\ref{prop:tw_4} are straightforward to verify from the fact that $\vec{c}_{t'}$ is compatible with $\CCC'$ for $t'$. For properties~\ref{prop:tw_5} and \ref{prop:tw_6}, recall that $\chi(t')$ is a separator in $\incgraph$, and hence all forgotten vectors are zero on both $c$ and $c'$. 

Second, let the introduced vertex $v$ be a vector denoted $\vec{v}$. Note that since $\chi(t')$ is a separator in $\incgraph$, there is no edge between any forgotten coordinate $c\in [d]\cap (\chi(T_{t'}\setminus \chi(t')))$ and the vector $\vec{v}$, which means that $\vec{v}$ is zero on all the forgotten coordinates. By definition, since $\CCC$ is achievable for $t$, we have $\bagVectors(\vec{v})=0$. 
Consider a tuple \(\CCC' =(\pastCenter, \futureCenter, \bagCenter, \bagVectors')\), where $\bagVectors'(\vec{u}) = \bagVectors(\vec{u})$ for every vector $\vec{u}\in \chi(t')$. It is easy to verify that any vector $\vec{c}_t$ is compatible with $\CCC$ for $t$ if and only if it is compatible with $\CCC'$ for $t'$. Hence, we let $\Gamma_t[\CCC] = \Gamma_{t'}[\CCC']$.
\end{proof}

\begin{lemma}[forget node]
\label{lem:forget_node}
Let $t\in V(T)$ be a forget node with child $t'$ such that $\chi(t) = \chi(t')\setminus \{v\}$ and \(\mathcal{C} =(\pastCenter, \futureCenter, \bagCenter, \bagVectors)\) an achievable tuple for $t$. 
Given $\Gamma_{t'}$, in polynomial time we can either compute a compatible vector for \(\mathcal{C}\) or decide that no such vector exists.
\end{lemma}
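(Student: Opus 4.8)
The plan is to compute the single entry $\Gamma_t[\CCC]$ for a given achievable tuple $\CCC=(\pastCenter,\futureCenter,\bagCenter,\bagVectors)$ by distinguishing whether the forgotten vertex $v$ is a coordinate or a vector. In both cases I would first record the structural facts common to forget nodes: since $t$ has the single child $t'$ we have $\chi(T_t)=\chi(T_{t'})$, and the newly forgotten set is exactly $(\chi(T_{t'})\setminus\chi(t'))\cup\{v\}$. Hence every vector compatible with $\CCC$ at $t$ is compatible with a closely related tuple at $t'$, and the only genuinely new information is the single bit or value describing how $v$ interacts with the center.

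If $v$ is a coordinate $c$, the set of forgotten \emph{vectors} is unchanged, so Properties~\ref{prop:tw_5} and~\ref{prop:tw_6} transfer verbatim and I only have to reclassify $c$ from ``bag'' to ``forgotten.'' The center is either $0$ or $1$ on $c$. For value $0$ the matching child tuple is $\CCC'_0=(\pastCenter,\futureCenter,\bagCenter,\bagVectors)$, and for value $1$ it is $\CCC'_1=(\pastCenter-1,\futureCenter,\bagCenter\cup\{c\},\bagVectors')$, where $\bagVectors'(\vec{u})=\bagVectors(\vec{u})-[c\in\vO(\vec{u})]$ (discarding $\CCC'_1$ if any entry turns negative). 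I would look up both in $\Gamma_{t'}$ and return any vector found, checking by the definitions above that it is compatible with $\CCC$ at $t$; if both are $\noInst$, I set $\Gamma_t[\CCC]=\noInst$.

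The main case, and the crux of the whole dynamic program, is when $v$ is a vector $\vec{v}$, because this is the moment at which its distance constraint must finally be enforced. The key observation is that $\chi(t')$ separates $\incgraph$ and $\vec{v}$ never reappears above $t$, so by the connectivity property of tree-decompositions every coordinate of $\vO(\vec{v})$ already lies in $\chi(T_{t'})$; in particular no $1$-coordinate of $\vec{v}$ sits among the not-yet-introduced coordinates. Consequently the total overlap with the center is pinned down as $|\vO(\vec{c}_t)\cap\vO(\vec{v})|=\bagVectors'(\vec{v})+|\bagCenter\cap\vO(\vec{v})|$, where $\bagVectors'(\vec{v})$ is the entry the child tuple records for $\vec{v}$, so I can compute $\delta(\vec{c}_t,\vec{v})=\scptw+\con(\vec{v})-2(\bagVectors'(\vec{v})+|\bagCenter\cap\vO(\vec{v})|)$ exactly and check Property~\ref{prop:tw_5} (if $\vec{v}\in\VR$, requiring $\delta>r$) or Property~\ref{prop:tw_6} (if $\vec{v}\in\VB$, requiring $\delta\le r$).

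Since $\CCC$ no longer records $\bagVectors'(\vec{v})$, I would enumerate all admissible values $j\in\{0,\dots,\min(\scptw,|\vO(\vec{v})\cap(\chi(T_{t'})\setminus\chi(t'))|)\}$; for each $j$ I form $\CCC'_j=(\pastCenter,\futureCenter,\bagCenter,\bagVectors\cup\{\vec{v}\mapsto j\})$, verify the red/blue distance constraint above, and if it holds look up $\Gamma_{t'}[\CCC'_j]$. Any non-$\noInst$ answer is a vector compatible with $\CCC$ at $t$: all remaining properties are inherited from compatibility at $t'$, and the single newly forgotten constraint is precisely the one I just checked. I return such a vector, or $\noInst$ if no choice of $j$ succeeds. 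The obstacle to get right is exactly the separator argument guaranteeing $\vO(\vec{v})\subseteq\chi(T_{t'})$, as this is what makes $\delta(\vec{c}_t,\vec{v})$ fully determined at this node; the rest is bookkeeping, and the enumeration over the at most $\min(\scptw,\icon(I))+1$ values of $j$ keeps the running time within the bound claimed in Theorem~\ref{thm:tw}.
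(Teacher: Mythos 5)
Your proposal is correct and follows essentially the same route as the paper's proof: the same case split on whether the forgotten vertex is a coordinate or a vector, the same two candidate child tuples in the coordinate case, and the same enumeration over the forgotten-overlap value $j$ together with the separator argument pinning down $\delta(\vec{c}_t,\vec{v})=\scptw+\con(\vec{v})-2(j+|\bagCenter\cap\vO(\vec{v})|)$ in the vector case. The only (cosmetic) difference is that you correctly start the enumeration at $j=0$, whereas the paper writes the range as $[\pastCenter]$.
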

\begin{proof}
We again distinguish between two possibilities depending on whether vertex $v$ represents a vector or a coordinate. 
First, let us assume that the forgotten vertex $v$ is a coordinate $c\in [d]$. Consider the following two tuples achievable for $\Gamma_t$, 
$\CCC'_1 = (\pastCenter, \futureCenter, \bagCenter, \bagVectors)$ and $\CCC'_2 = (\pastCenter-1, \futureCenter, \bagCenter\cup\{c\}, \bagVectors')$, where $\bagVectors'(\vec{v}) = \bagVectors(\vec{v}) - |\vO(\vec{v})\cap\{c\}|$ for all $\vec{v}\in \chi(t)$. It is straightforward to verify that if vector $\vec{c}_t$ is compatible with $\CCC$ for $t$, then either $c\notin\vO(\vec{c}_t)$ and $\vec{c}_t$ is compatible with $\CCC'_1$, or  $c\in\vO(\vec{c}_t)$ and $\vec{c}_t$ is compatible with $\CCC'_2$. Moreover, any vector compatible with $\CCC'_1$ or $\CCC'_2$ for $t'$, then it is also compatible with $\CCC$ for $t$. Therefore, we let $\Gamma_t[\CCC]= \Gamma_{t'}[\CCC_1']$ if $\Gamma_{t'}[\CCC_1']\neq \bot$, otherwise, we let $\Gamma_t[\CCC]= \Gamma_{t'}[\CCC_2']$.

Second, assume that $v$ is a vector $\vec{v}$. Consider all tuples $\CCC'_i$, for $i\in [\pastCenter]$ such that $\CCC'_i= (\pastCenter, \futureCenter, \bagCenter, \bagVectors^i)$, where $\bagVectors^i(\vec{u}) = \bagVectors^i(\vec{u})$ for all $\vec{u}\in \chi(t)$ and $\bagVectors^i(\vec{v}) = i$. It is easy to see that if $\vec{c}_t$ is compatible with $\CCC$ for $t$, then $\vec{c}_t$ is compatible with $\CCC'_i$ for $t'$ for some $i\in [\pastCenter]$. Note that since $\chi(t)$ is a separator, $\vec{v}$ is zero on all coordinates that have not been introduced yet. 
Therefore, the distance between $\vec{v}$ and any center compatible with $\CCC'_i$ is precisely
$\scptw + \con(\vec{v}) - 2\cdot (\bagVectors^i(\vec{v}) + |\vO(\vec{v})\cap \bagCenter|) = \scptw + \con(\vec{v}) - 2\cdot (i + |\vO(\vec{v})\cap \bagCenter|)$, and in particular, this distance does not depend on the center as long as the center is compatible with $\CCC'_i$ for $t'$. For that reason, we just need to go over all $i\in [\pastCenter]$, and check if $\Gamma_{t'}[\CCC'_i]\neq \bot$ and if so we check if $\scptw + \con(\vec{v}) - 2\cdot (i + |\vO(\vec{v})\cap \bagCenter|)$ is at most $r$ in case $\vec{v}$ is \blue, or at least $r+1$ if $\vec{v}$ is \red. If both conditions are satisfied for some $i\in [\scptw]$, then we let $\Gamma_t[\CCC] = \Gamma_{t'}[\CCC'_i]$; otherwise, we let $\Gamma_t[\CCC] = \bot$.
\end{proof}

\begin{lemma}[join node]
\label{lem:join_node}
Let $t\in V(T)$ be a join node with children $t_1, t_2$ such that $\chi(t) = \chi(t_1)=\chi(t_2)$ and \(\mathcal{C} =(\pastCenter, \futureCenter, \bagCenter, \bagVectors)\) an achievable tuple for $t$. 
Given $\Gamma_{t_1}$ and $\Gamma_{t_2}$, in \(\mathcal{O}(|\Gamma_{t_1}|\cdot(|V|+d)^{\bigoh(1)})\) time we can either compute a compatible vector for \(\mathcal{C}\) or decide that no such vector exists.
\end{lemma}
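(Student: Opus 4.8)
The plan is to exploit the fact that the bag $\chi(t)$ of a join node separates the two subtrees: writing $L=\chi(T_{t_1})\setminus\chi(t)$, $R=\chi(T_{t_2})\setminus\chi(t)$ and $F=[d]\setminus\chi(T_t)$ for the ``left-forgotten'', ``right-forgotten'' and still-untouched vertices, the standard properties of a tree-decomposition give $\chi(T_{t_1})\cap\chi(T_{t_2})=\chi(t)$, so $L$, $R$ and $F$ are pairwise disjoint, there are no edges of $\incgraph$ between $L$ and $R$, and from the viewpoint of \emph{both} children every coordinate of $F$ is a future coordinate. In particular $[d]\setminus\chi(T_{t_1})=F\sqcup(R\cap[d])$ and symmetrically for $t_2$. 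Consequently every quantity tracked by the record splits additively over the two sides, and the task reduces to finding, for the target tuple $\CCC=(\pastCenter,\futureCenter,\bagCenter,\bagVectors)$, a pair of children tuples whose ``halves'' glue together to $\CCC$.

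Concretely, I would combine a tuple $\CCC_1=(p_1,f_1,\bagCenter,\bagVectors^{(1)})$ for $t_1$ with a tuple $\CCC_2$ for $t_2$ exactly when $p_1+p_2=\pastCenter$, $\bagVectors^{(1)}(\vec v)+\bagVectors^{(2)}(\vec v)=\bagVectors(\vec v)$ for all $\vec v\in\chi(t)$, the bag centers agree (all equal $\bagCenter$), and the future counts cross over as $f_1=\futureCenter+p_2$ and $f_2=\futureCenter+p_1$; the last two equations simply record that the ``future'' of one child absorbs the ``past'' of the other on the $R$- resp.\ $L$-coordinates. The forward direction is immediate: given a vector $\vec c_t$ compatible with $\CCC$ for $t$, restricting the counts in Properties~\ref{prop:tw_1}--\ref{prop:tw_4} to $L$ and to $R$ certifies that $\vec c_t$ itself is compatible with the induced $\CCC_1$ for $t_1$ and with the induced $\CCC_2$ for $t_2$, and Properties~\ref{prop:tw_5}--\ref{prop:tw_6} hold for each side because every forgotten vector lies entirely in one subtree.

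For the converse I would glue witnesses: given $\vec c_1=\Gamma_{t_1}[\CCC_1]\neq\noInst$ and $\vec c_2=\Gamma_{t_2}[\CCC_2]\neq\noInst$, define $\vec c_t$ to equal $\vec c_1$ on $\chi(T_{t_1})$, to equal $\vec c_2$ on $R$, and to carry exactly $\futureCenter$ freshly placed ones on $F$. Since $\vec c_1$ and $\vec c_2$ both realise the bag center $\bagCenter$ on $\chi(t)$, the vector $\vec c_t$ coincides with $\vec c_1$ on all of $\chi(T_{t_1})$ and with $\vec c_2$ on all of $\chi(T_{t_2})$. The crucial point is that each forgotten \red{} or \blue{} vector is zero outside the coordinates of its own subtree (again by separation), so its Hamming distance to $\vec c_t$ depends only on the restriction of $\vec c_t$ to $\chi(T_{t_1})$ or to $\chi(T_{t_2})$; hence Properties~\ref{prop:tw_5}--\ref{prop:tw_6} are inherited from $\vec c_1$ and $\vec c_2$, while Properties~\ref{prop:tw_1}--\ref{prop:tw_4} hold by the additive split and $\con(\vec c_t)=\pastCenter+|\bagCenter|+\futureCenter=\scptw$ by achievability.

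The main obstacle, and the one subtlety I would treat carefully, is the placement of the $\futureCenter$ ones on $F$: this is the only part of $\vec c_t$ not dictated by the children, and it is feasible exactly when $|F|\geq\futureCenter$. Since the placement on $F$ interacts with nothing in $\chi(T_t)$, any placement works, but if $|F|<\futureCenter$ then $\CCC$ has \emph{no} compatible vector at all (by Property~\ref{prop:tw_2}); as this condition depends only on $\CCC$ and not on the chosen split, I would test it once and return $\noInst$ if it fails. Finally, for the running time I would observe that $\CCC_2$ is \emph{uniquely} determined by $\CCC$ together with $\CCC_1$ (via $p_2=\pastCenter-p_1$, $\bagVectors^{(2)}=\bagVectors-\bagVectors^{(1)}$, and the cross-over equations), so rather than enumerating all splits it suffices to scan each of the $|\Gamma_{t_1}|$ entries once, compute the unique partner tuple, look it up in $\Gamma_{t_2}$, and perform the $\bigoh(|V|+d)$ gluing only when a match is found; this yields the claimed $\bigoh(|\Gamma_{t_1}|\cdot(|V|+d)^{\bigoh(1)})$ bound.
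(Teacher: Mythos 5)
Your proposal is correct and follows essentially the same route as the paper's proof: the same notion of a ``joinable'' pair of children tuples (with the cross-over future counts $f_1=\futureCenter+p_2$, $f_2=\futureCenter+p_1$), the same gluing of witnesses using the fact that each forgotten vector is zero outside its own subtree and that all candidate centers carry the same total conciseness $\scptw$, and the same observation that $\CCC_2$ is uniquely determined by $\CCC$ and $\CCC_1$, giving the claimed running time. Your explicit feasibility check $|F|\geq\futureCenter$ is a small point of extra care that the paper leaves implicit, but it does not change the argument.
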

\begin{proof}
Let $\vec{c}_t$ be a vector compatible with $\CCC$ for $t$. Observe that $\vec{c}_t$ is also compatible with some tuple $\CCC_1 = (\pastCenter^1, \futureCenter^1, \bagCenter^1, \bagVectors^1)$ for $t_1$ and some tuple $\CCC_2 = (\pastCenter^2, \futureCenter^2, \bagCenter^2, \bagVectors^2)$ for $t_2$ such that 
\begin{itemize}[noitemsep,nolistsep]
\item $\pastCenter = \pastCenter^1+\pastCenter^2$;
\item $\futureCenter^i = \scptw - \pastCenter^i - |\bagCenter| = \futureCenter+\pastCenter^{3-i}$ for $i\in \{1,2\}$; 
\item $\bagCenter = \bagCenter^1 = \bagCenter^2$; and
\item $\bagVectors(\vec{v}) =\bagVectors^1(\vec{v}) + \bagVectors^2(\vec{v})$ for all vectors $\vec{v}\in \chi(t)$.
\end{itemize}
We call such pair of tuples $\CCC_1$ and $\CCC_2$ satisfying the above four conditions \emph{joinable} pair.

On the other hand, consider a joinable pair of tuples $\CCC_1 = (\pastCenter^1, \futureCenter^1, \bagCenter^1, \bagVectors^1)$ for $t_1$ and $\CCC_2 = (\pastCenter^2, \futureCenter^2, \bagCenter^2, \bagVectors^2)$. Moreover, let $\vec{c}^1_t$ be a vector compatible with $\CCC_1$ for $t_1$ and  $\vec{c}^2_t$ be a vector compatible with $\CCC_2$ for $t_2$. 
Now let $\vec{c}_t$ be an arbitrary vector such that $\vO(\vec{c}_t)\cap \chi(T_{t_1})\setminus \chi(t) = \vO(\vec{c}^1_t)\cap \chi(T_{t_1})\setminus \chi(t)$, $\vO(\vec{c}_t)\cap \chi(T_{t_2})\setminus \chi(t) = \vO(\vec{c}^2_t)\cap \chi(T_{t_2})\setminus \chi(t)$, $\vO(\vec{c}_t)\cap \chi(t) = \bagCenter$, and $|\vO(\vec{c}_t)\setminus \chi(T_t)| = \futureCenter$. That is, $\vec{c}_t$ agrees with $\vec{c}_t^1$ on the coordinates that are forgotten in a node below $t_1$, and with $\vec{c}_t^2$ on the coordinates that are forgotten in a node below $t_2$. We claim that $\vec{c}_t$ is compatible with $\CCC$ for $t$. Properties \ref{prop:tw_1}-\ref{prop:tw_4} follow directly from the construction of $\vec{c}_t$. Now let $\vec{v}$ be a vector in $\chi(T_{t_i})\setminus \chi(t_i)$ for $i\in \{1,2\}$. We claim that the distance between $\vec{v}$ and $\vec{c}_t$ is precisely the same as the distance between $\vec{v}$ and $\vec{c}_t^i$. This is because $\vec{c}_t^i$ and $\vec{c}_t$ coincide on the coordinates in $\chi(T_{t_i})$, $\vec{v}$ is zero on all the other coordinates, and the number of non-zero coordinates of $\vec{c}_t^i$ and $\vec{c}_t$ in $[d]\setminus \chi(T_{t_i})$ is the same. It follows that properties~\ref{prop:tw_5}~and~\ref{prop:tw_6} are satisfied as well. 

Therefore, to compute $\Gamma_t[\CCC]$ we can go over all tuples $\CCC_1 = (\pastCenter^1, \futureCenter^1, \bagCenter^1, \bagVectors^1)$ in $\Gamma_{t_1}$ such that $\bagCenter^1 = \bagCenter$.
By definition of ``joinable pair of tuples'', there is a unique tuple $\CCC^2$ that forms a joinable pair with $\CCC^1$ given by $\CCC_2 = (\pastCenter-\pastCenter^1, \futureCenter+\pastCenter^1, \bagCenter, \bagVectors^2)$, where $\bagVectors^2(\vec{v}) = \bagVectors(\vec{v})-\bagVectors^1(\vec{v})$ for all vectors $\vec{v}\in \chi(t)$. 
 If $\Gamma_{t_1}[\CCC_1]\neq \bot$ and $\Gamma_{t_2}[\CCC_2]\neq \bot$, then we compute $\Gamma_t[\CCC]$ as above and stop. On the other hand, if for all pairs $(\CCC_1, \CCC_2)$ we have either $\Gamma_{t_1}[\CCC_1] =  \bot$ or $\Gamma_{t_2}[\CCC_2] = \bot$, then we correctly return that  $\Gamma_t[\CCC] = \bot$. 
\end{proof}

Now we are ready to put the whole algorithm together and prove the main theorem of this section. 

\begin{proof}[Proof of Theorem~\ref{thm:tw}]
The algorithm works by going over all $\scptw\in \{0,1, \ldots, d\}$, and for each $\scptw$, it goes over all $r\in \{0,1, \ldots, d\}$. For each pair $(\scptw, r)$, we use the algorithms of Observation~\ref{obs:records_size} and Lemmas~\ref{lem:leaf_node}-\ref{lem:join_node} to compute the table $\Gamma_t$ for each node $t\in T$. Finally, we go over all tuples for the root node $t_r$. If for some tuple $\CCC$ we have $\Gamma_{t_r}[\CCC]\neq \bot$, we return $\Gamma_{t_r}[\CCC]$, note that in this case $\scpm = \scptw$ is indeed a minimum possible conciseness, otherwise, we continue to next pair $(\scptw, r)$. The running time follows from Observation~\ref{obs:records_size} and Lemmas~\ref{lem:leaf_node}-\ref{lem:join_node}.
\end{proof}
\fi

Combining Theorem~\ref{thm:tw} and Proposition~\ref{fact:findtw}, we get the following three corollaries: 

\begin{corollary}\label{cor:tw}
\RB{} and \CRB{} are in \XP{} parameterized by $\tw(\incgraph)$.
\end{corollary}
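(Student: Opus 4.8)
The plan is to combine the near-optimal tree-decomposition algorithm of Proposition~\ref{fact:findtw} with the dynamic-programming algorithm of Theorem~\ref{thm:tw}, and to observe that the factor $\min\{\scpm,\icon(I)\}$ appearing in the latter's running time is bounded by the input size. Concretely, let $k=\tw(\incgraph)$ be the parameter and consider an instance $I=(\VR,\VB,d)$ of \RB{}. First I would invoke Proposition~\ref{fact:findtw} on $\incgraph$, which in time $2^{\bigoh(k)}\cdot(|V|+d)$ returns a nice tree-decomposition $\mathcal{T}$ of $\incgraph$ of width $w\le 2k+1$ with $\bigoh(|V|+d)$ nodes. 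Since $\tw(\incgraph)$ need not be known in advance, I would run this subroutine for increasing values $1,2,3,\dots$ and stop at the first value that succeeds; this value is at most $\tw(\incgraph)$ (as the subroutine must succeed once the tried value reaches $\tw(\incgraph)$), so the width of the returned decomposition is still at most $2\tw(\incgraph)+1$.

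Next I would feed $\mathcal{T}$ into the algorithm of Theorem~\ref{thm:tw}, solving $I$ in time $(2\min\{\scpm,\icon(I)\})^{2w+2}\cdot(|V|+d)^{\bigoh(1)}$. The point that yields membership in \XP{} is that the base of the exponential factor is polynomially bounded: every center has at most $d$ coordinates equal to $1$, so $\scpm\le d$, and likewise $\icon(I)\le d$; hence $2\min\{\scpm,\icon(I)\}\le 2d$. Writing $n=|V|+d$ for the input size and substituting $w\le 2\tw(\incgraph)+1$, the running time is at most $(2n)^{4\tw(\incgraph)+4}\cdot n^{\bigoh(1)}=n^{\bigoh(\tw(\incgraph))}$, which is exactly a bound of the form $\bigoh(n^{f(\tw(\incgraph))})$. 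Thus \RB{} lies in \XP{} parameterized by $\tw(\incgraph)$.

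For \CRB{} I would run the identical procedure on $(\VR,\VB,d)$. By Theorem~\ref{thm:tw}, on a \yes-instance the algorithm returns a center of minimum conciseness $\scpm$; I then accept the \CRB{} instance if and only if $\scpm\le\scp$, and reject whenever the algorithm reports that no separating center exists. Correctness is immediate, since a center of conciseness at most $\scp$ exists precisely when the minimum achievable conciseness does not exceed $\scp$, and the running time is unchanged. The only subtlety---rather than a genuine obstacle---is keeping the roles of base and exponent in the bound of Theorem~\ref{thm:tw} separate: the factor $\min\{\scpm,\icon(I)\}$ must be charged to the polynomial input size, while only the exponent $2w+2$ is permitted to depend on the parameter. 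Once this is noted, both claims follow directly.
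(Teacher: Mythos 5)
Your proposal is correct and matches the paper's intended argument: the paper derives Corollary~\ref{cor:tw} exactly by combining Proposition~\ref{fact:findtw} with Theorem~\ref{thm:tw} and noting that the base $2\min\{\scpm,\icon(I)\}\le 2d$ is polynomial in the input while only the exponent $2w+2$ depends on $\tw(\incgraph)$. Your handling of \CRB{} via the minimum-conciseness output of Theorem~\ref{thm:tw} is also the natural (and correct) way to obtain the second claim.
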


\begin{corollary}\label{cor:tw_econ}
\CRB{} is fixed-parameter tractable parameterized by $\tw(\incgraph)+\scp$.
\end{corollary}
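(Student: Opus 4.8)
The plan is to combine the decomposition-finding routine of Proposition~\ref{fact:findtw} with the dynamic program of Theorem~\ref{thm:tw}, the only twist being that I must cap the conciseness search at $\scp$ so that the running time is controlled by $\scp$ rather than by the (potentially much larger) minimum center-conciseness $\scpm$. Concretely, given an instance $I=(\VR,\VB,d,\scp)$ of \CRB{}, I would first produce a nice tree-decomposition of $\incgraph$ of width bounded in terms of $\tw(\incgraph)$, then run a truncated version of the Theorem~\ref{thm:tw} algorithm, and finally argue both correctness and an $f(\tw(\incgraph)+\scp)\cdot(|V|+d)^{\bigoh(1)}$ running time.

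First I would invoke Proposition~\ref{fact:findtw} to obtain a nice tree-decomposition of $\incgraph$ of width at most $w=2\tw(\incgraph)+1$ in time $2^{\bigoh(\tw(\incgraph))}\cdot(|V|+d)$; since $\tw(\incgraph)$ is not known in advance, I would run the proposition for $k=1,2,4,8,\dots$ and keep the first call that succeeds, which costs only a constant factor. I would then run the dynamic program underlying Theorem~\ref{thm:tw} on this decomposition, but restrict the outer enumeration of the target center-conciseness $\scptw$ to the range $\{0,1,\dots,\scp\}$ rather than $\{0,1,\dots,d\}$. Correctness of this truncation rests on a single observation: a center of conciseness at most $\scp$ exists if and only if $\scpm\le\scp$. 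Because Theorem~\ref{thm:tw} enumerates $\scptw$ in ascending order and returns the first separating center it finds, that center has conciseness exactly $\scpm$; hence if the truncated run finds a center, it has conciseness $\scpm\le\scp$ and is a valid \CRB{} solution, and if it finds none for any $\scptw\le\scp$, then $\scpm>\scp$ (or no center exists at all) and we may safely report a no-instance.

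For the running time, the key point is that the bound of Theorem~\ref{thm:tw} is monotone non-decreasing in the range of center-conciseness values the program enumerates, so truncating this range at $\scp$ replaces the quantity $\scpm$ in the stated bound by at most $\scp$. Thus the restricted algorithm runs in time $(2\min\{\scp,\icon(I)\})^{2w+2}\cdot(|V|+d)^{\bigoh(1)}\le (2\scp)^{2w+2}\cdot(|V|+d)^{\bigoh(1)}$. Substituting $w=2\tw(\incgraph)+1$ gives $(2\scp)^{4\tw(\incgraph)+4}\cdot(|V|+d)^{\bigoh(1)}$, and adding the $2^{\bigoh(\tw(\incgraph))}\cdot(|V|+d)$ spent on the decomposition yields a total of the form $f(\tw(\incgraph)+\scp)\cdot(|V|+d)^{\bigoh(1)}$, which is exactly the required fixed-parameter running time.

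The main obstacle is precisely this capping step. A black-box application of Theorem~\ref{thm:tw} produces a running time governed by $\scpm$, which can be as large as $d$ and is in general not bounded by any function of the parameter $\scp$; so the corollary does \emph{not} follow from merely plugging the decomposition into the theorem. The substance of the argument is therefore to verify that the dynamic program can be halted at $\scptw=\scp$ without discarding any admissible solution, which I expect to follow cleanly from the ascending enumeration over $\scptw$ together with the equivalence ``an admissible center exists $\iff \scpm\le\scp$'' established above.
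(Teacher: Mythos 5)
Your proposal is correct and follows essentially the same route as the paper, which obtains the corollary by combining Proposition~\ref{fact:findtw} with the dynamic program of Theorem~\ref{thm:tw} (whose outer loop already enumerates the target conciseness $\scptw$ in ascending order). Your explicit truncation of that enumeration at $\scp$ is exactly the adaptation the paper leaves implicit, and you are right that it is needed to keep the running time bounded by the parameter on no-instances where $\scpm$ (or the full range up to $d$) would otherwise govern the cost.
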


\begin{corollary}\label{cor:tw_dcon}
\RB{} and \CRB{} are fixed-parameter tractable parameterized by $\tw(\incgraph)+\icon(I)$.
\end{corollary}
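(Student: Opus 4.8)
The plan is to derive both statements directly from the dynamic programming algorithm of Theorem~\ref{thm:tw} combined with the tree-decomposition routine of Proposition~\ref{fact:findtw}; the only substantive work is to ensure that the parameter, and not the potentially large explanation conciseness, governs the exponent. First I would compute a nice tree decomposition of $\incgraph$. Since $\tw(\incgraph)$ is not known in advance, I would invoke Proposition~\ref{fact:findtw} for successive values $k=1,2,3,\dots$ until it first succeeds; at $k=\tw(\incgraph)$ it must return a nice tree decomposition $\mathcal{T}=(T,\chi)$ of width $w\le 2\tw(\incgraph)+1$ with $\bigoh(|V|+d)$ nodes, while for every smaller $k$ it correctly reports failure. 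Because $\incgraph$ has $|V|+d$ vertices, this phase costs $2^{\bigoh(\tw(\incgraph))}\cdot(|V|+d)$ in total.

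Next I would feed $\mathcal{T}$ into the algorithm of Theorem~\ref{thm:tw}, which solves the instance in time $(2\min\{\scpm,\icon(I)\})^{2w+2}\cdot(|V|+d)^{\bigoh(1)}$ and, on a \yes{}-instance, returns a center of minimum conciseness $\scpm$. The decisive step is then to bound the exponent using only the parameter: combining $\min\{\scpm,\icon(I)\}\le\icon(I)$ with $2w+2\le 4\tw(\incgraph)+4$ yields a running time of at most $(2\icon(I))^{4\tw(\incgraph)+4}\cdot(|V|+d)^{\bigoh(1)}$, which is of the form $f(\tw(\incgraph)+\icon(I))\cdot(|V|+d)^{\bigoh(1)}$ and hence establishes fixed-parameter tractability of \RB{}. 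For \CRB{} I would reuse the very same decomposition and call: since $\scpm$ is the smallest conciseness attainable by any valid center, the instance is a \yes{}-instance exactly when $\scpm\le\scp$, so a single comparison decides it, and the returned center already realizes the minimum conciseness. This adds no overhead, and the running-time bound is unchanged because it depends on $\min\{\scpm,\icon(I)\}\le\icon(I)$ and never on $\scp$.

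I do not anticipate a genuine obstacle; the one point demanding care---and precisely the reason Theorem~\ref{thm:tw} was stated in terms of $\min\{\scpm,\icon(I)\}$ rather than $\scpm$ alone---is that $\scpm$ may be arbitrarily large, so it is essential to argue via $\min\{\scpm,\icon(I)\}\le\icon(I)$ that the explanation conciseness never appears in the exponent. Keeping $\scpm$ out of the bound is exactly what allows $\icon(I)$ together with $\tw(\incgraph)$ to drive the analysis and gives the claimed fixed-parameter tractability of both \RB{} and \CRB{}.
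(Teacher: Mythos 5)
Your proposal is correct and follows exactly the paper's route: the paper derives this corollary by combining Proposition~\ref{fact:findtw} (to obtain a nice tree-decomposition of width at most $2\tw(\incgraph)+1$) with Theorem~\ref{thm:tw}, bounding $\min\{\scpm,\icon(I)\}\le\icon(I)$ so that the exponent depends only on the parameter. Your additional observation that \CRB{} reduces to comparing the returned minimum conciseness $\scpm$ against $\scp$ is the intended (if unstated) argument.
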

\section{Concluding Remarks}
\label{sec:conclusion}

In this  paper, we studied hypersphere classification problems from a
parameterized complexity perspective, focusing strongly on
conciseness. We considered conciseness in terms of the sought-after
explanation and in terms of the feature vectors in the training data.
Our algorithmic and lower-bound results draw a comprehensive
complexity map of hypersphere classification. This map pinpoints the
exact complexity of the various combinations of parameters which can
either measure the structural properties of the input data or the
conciseness of data or explanations.

All our lower and upper complexity bounds are essentially tight, with a single exception:
While we show that hypersphere classification without conciseness
restrictions is XP-tractable when parameterized by treewidth alone, whether the problem is fixed-parameter tractable or \W{1}-hard under this parameterization is left open.

Finally, we remark that all our results carry over to the case where
one aims to find a minimum-radius separating hypersphere (instead of
merely deciding whether one exists) that classifies the training data.
This problem has also been extensively
studied~\cite{pcooper,neskovic,neskovic1,astorino,astorino1}.

\section*{Acknowledgments}

Robert Ganian acknowledges support from the Austrian Science Fund
(FWF, project Y1329). Iyad Kanj acknowledges support from DePaul
University through a URC grant 602061.  Stefan Szeider acknowledges
support from the Austrian Science Fund (FWF, projects P32441), and
from the Vienna Science and Technology Fund (WWTF, project ICT19-065).
Sebastian Ordyniak acknowledges support from the
Engineering and Physical Sciences Research Council (EPSRC, project EP/V00252X/1).

\bibliographystyle{icml2023}
\bibliography{literature}
 
\end{document}